\documentclass{article}
\usepackage{suffix}


\usepackage[preprint]{neurips_2025}




\usepackage[utf8]{inputenc} 
\usepackage[T1]{fontenc}    
\usepackage{hyperref}       
\usepackage{url}            
\usepackage{booktabs}       
\usepackage{amsfonts}       
\usepackage{nicefrac}       
\usepackage{microtype}      
\usepackage{xcolor}         
\usepackage{subfig}
\usepackage{caption} 
\captionsetup[table]{skip=10pt}
\usepackage{graphicx}
\usepackage{booktabs} 
\usepackage{subcaption}

\usepackage{multirow}
\usepackage{wrapfig}

\usepackage{amsmath}
\usepackage{amssymb}
\usepackage{mathtools}
\usepackage{amsthm}
\usepackage{bbm}
\usepackage{enumitem}

\usepackage[capitalize,noabbrev]{cleveref}

\theoremstyle{plain}
\newtheorem{theorem}{Theorem}[section]
\newtheorem{proposition}[theorem]{Proposition}

\theoremstyle{definition}

\theoremstyle{remark}

\definecolor{teaser_red}{RGB}{184, 77, 77}
\definecolor{teaser_green}{RGB}{61, 158, 61}
\definecolor{teaser_blue}{RGB}{31, 119, 180}
\definecolor{appendix_pink}{RGB}{227, 119, 194}

\DeclareMathOperator{\rank}{\operatorname{rank}}

\newcommand{\softmax}{$\mathtt{softmax}$ }

\renewcommand{\a}{{\bf a}}

\newcommand{\m}{{\bf m}}

\newcommand{\s}{{\bf s}}

\def\Am{{\bf A}}
\def\Bm{{\bf B}}

\def\Gm{{\bf G}}

\def\Im{{\bf I}}
\def\Km{{\bf K}}
\def\Mm{{\bf M}}

\def\Sm{{\bf S}}
\def\Um{{\bf U}}
\def\Vm{{\bf V}}
\def\Wm{{\bf W}}
\def\Zm{{\bf Z}}
\def\Ym{{\bf Y}}
\def\Xm{{\bf X}}


\newcommand{\R}{{\mathbb{R}}}




\title{Unpacking Softmax: How Temperature Drives Representation Collapse, Compression and Generalization}

\author{%
  Wojciech Masarczyk\textsuperscript{1,*}
  \And
  Mateusz Ostaszewski\textsuperscript{1}
  \And
  Tin Sum Cheng\textsuperscript{2}
  \And
  Tomasz Trzciński\textsuperscript{1,4,5}
  \And
  Aurelien Lucchi\textsuperscript{2}
  \And
  Razvan Pascanu\textsuperscript{3}
}

\begin{document}

\footnotetext[1]{\small Warsaw University of Technology, Poland}
\footnotetext[2]{\small University of Basel, Switzerland}
\footnotetext[3]{\small Mila, Qu\'ebec AI Institute, Canada}
\footnotetext[4]{\small IDEAS Research Institute, Poland}
\footnotetext[5]{\small Tooploox, Poland}
\renewcommand*{\thefootnote}{\fnsymbol{footnote}}
\footnotetext[1]{\small Corresponding author: \texttt{wojciech.masarczyk@gmail.com}}

\maketitle

\begin{abstract}

The \softmax function is a fundamental building block of deep neural networks, commonly used to define output distributions in classification tasks or attention weights in transformer architectures. Despite its widespread use and proven effectiveness, its influence on learning dynamics and learned representations remains poorly understood, limiting our ability to optimize model behavior.
In this paper, we study the pivotal role of the \softmax function in shaping the model's representation. We introduce the concept of \emph{rank deficit bias} — a phenomenon in which softmax-based deep networks find solutions of rank much lower than the number of classes. This bias depends on the \softmax function's logits norm, which is implicitly influenced by hyperparameters or directly modified by \softmax temperature. Furthermore, we demonstrate how to exploit the \softmax dynamics to learn compressed representations or to enhance their performance on out-of-distribution data.
We validate our findings across diverse architectures and real-world datasets, highlighting the broad applicability of temperature tuning in improving model performance. Our work provides new insights into the mechanisms of \softmax, enabling better control over representation learning in deep neural networks.

\end{abstract}

\section{Introduction}

The \softmax function defined in Equation~\ref{eq:softmax_with_temp}, with a hyperparameter $T>0$ as the \emph{temperature}, is a cornerstone of deep learning and is primarily used in tasks such as classification and text generation to transform raw model outputs into probability distributions with maximum entropy. By amplifying the largest values and diminishing smaller ones, \softmax enables models to make confident predictions, making it essential for decision-making among multiple options. However, its "winner-takes-all" nature \cite{liu1999winner,elfadel1993softmax, Peterson89} can sometimes lead to training challenges \cite{darcet2023vision, hoffmann2023softmax, shen2023softmax,zhai2023softmax}.

\begin{equation}\label{eq:softmax_with_temp}
\operatorname{softmax}_T(\mathbf{e})=\left[\begin{array}{llc}
\frac{\exp \left(e_1 / T\right)}{\sum_k \exp \left(e_k / T\right)} & \cdots & \frac{\exp \left(e_n / T\right)}{\sum_k \exp \left(e_k / T\right)}
\end{array}\right]
\end{equation}

Originally, the \softmax function was used primarily as the final layer in classification tasks to produce normalized class probabilities. However, with the introduction of the self-attention mechanism in transformer models \cite{vaswani2017attention}, \softmax became central to internal computations—particularly in attention blocks used for text generation, and later, for image classification \cite{dosovitskiy2020image}. Despite various alternatives proposed for the attention mechanism \cite{saratchandran2024rethinking, ramapuram2024theory}, \softmax-based attention remains the dominant approach in modern architectures.

This enduring popularity has spurred extensive research into the implicit trade-offs of using the \softmax activation, such as its impact on the sharpness of output distributions \cite{velickovic2024softmax} and its role in normalizing activations and gradients \cite{saratchandran2024rethinking}. Yet, the deeper effects of \softmax on model behavior—particularly its influence on learned representations and generalization—remain poorly understood. This gap motivates our central research question: 

\begin{center}
\textit{How does \softmax shape neural network representations?}    
\end{center}

Our answer to this question unfolds into four main contributions of our work, visualized in Figure~\ref{fig:teaser}:

\begin{enumerate}
    \item We introduce \emph{rank-deficit bias}—a novel phenomenon where the \softmax biases learning to converge to solutions whose representation has rank much lower than the number of classes. This departs from the behavior predicted by Neural Collapse~\cite{papyan2020neuralcollapse}, a framework that describes the emergence of highly structured, full-rank representations at convergence.
    
    \item We show that the logit's norm at initialization drives the difference in network behavior, influencing representations' compactness, out-of-distribution generalization, and out-of-distribution detection.

    \item We illustrate that the \softmax temperature directly changes logits norm and gives direct control over the training trajectory, which can be used to increase the model's OOD performance or compress the model's size.

    \item Beside the \softmax temperature, we identify other hyperparameters that play a similar role. This observation gives practitioners more flexibility to control the outcome of the learning process.

\end{enumerate}

\section{Unpacking Softmax}

\begin{figure}[!t]
    \centering
    {
    \includegraphics[width=0.48\textwidth]{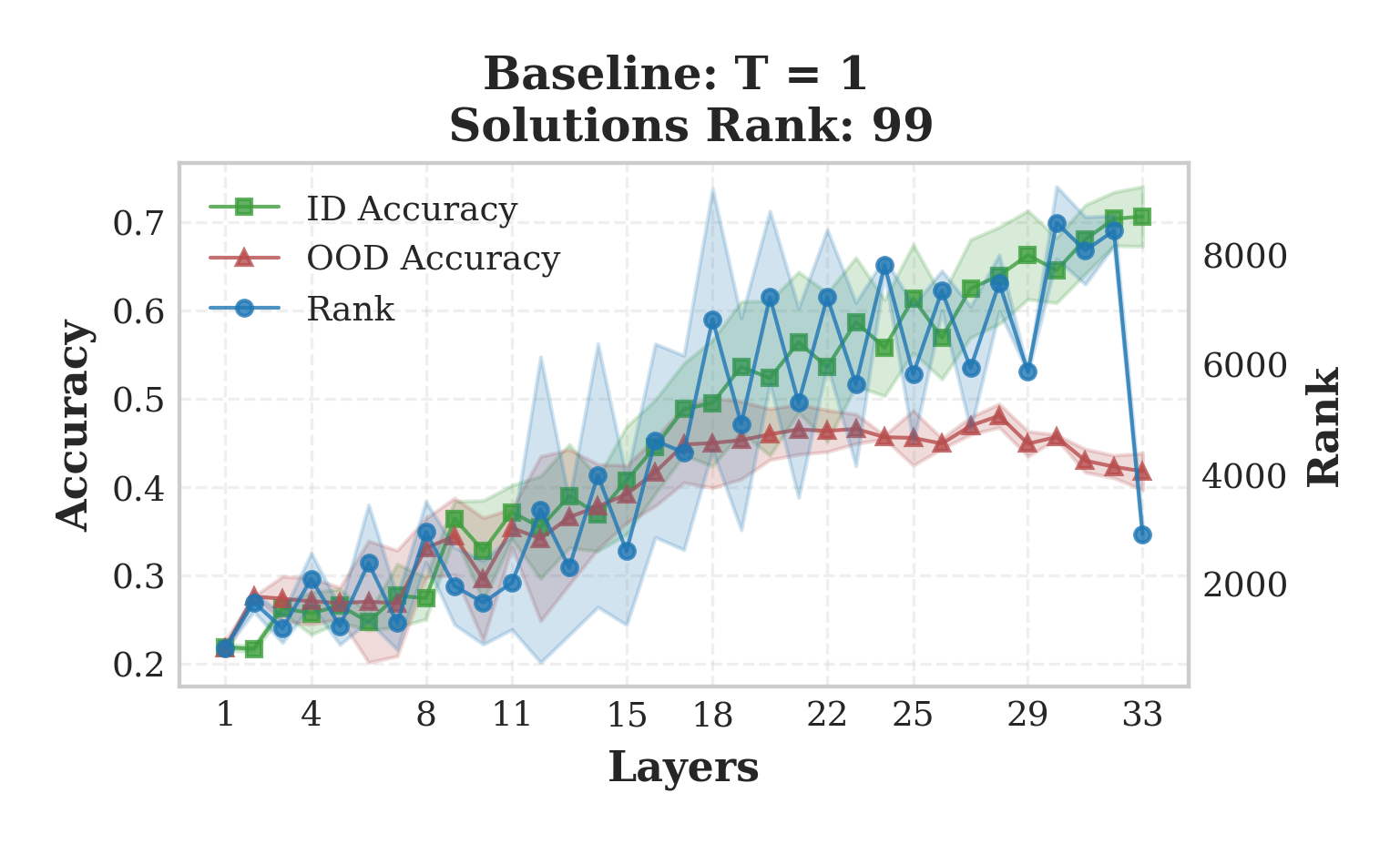}
    }
    {
      \includegraphics[width=0.48\textwidth]{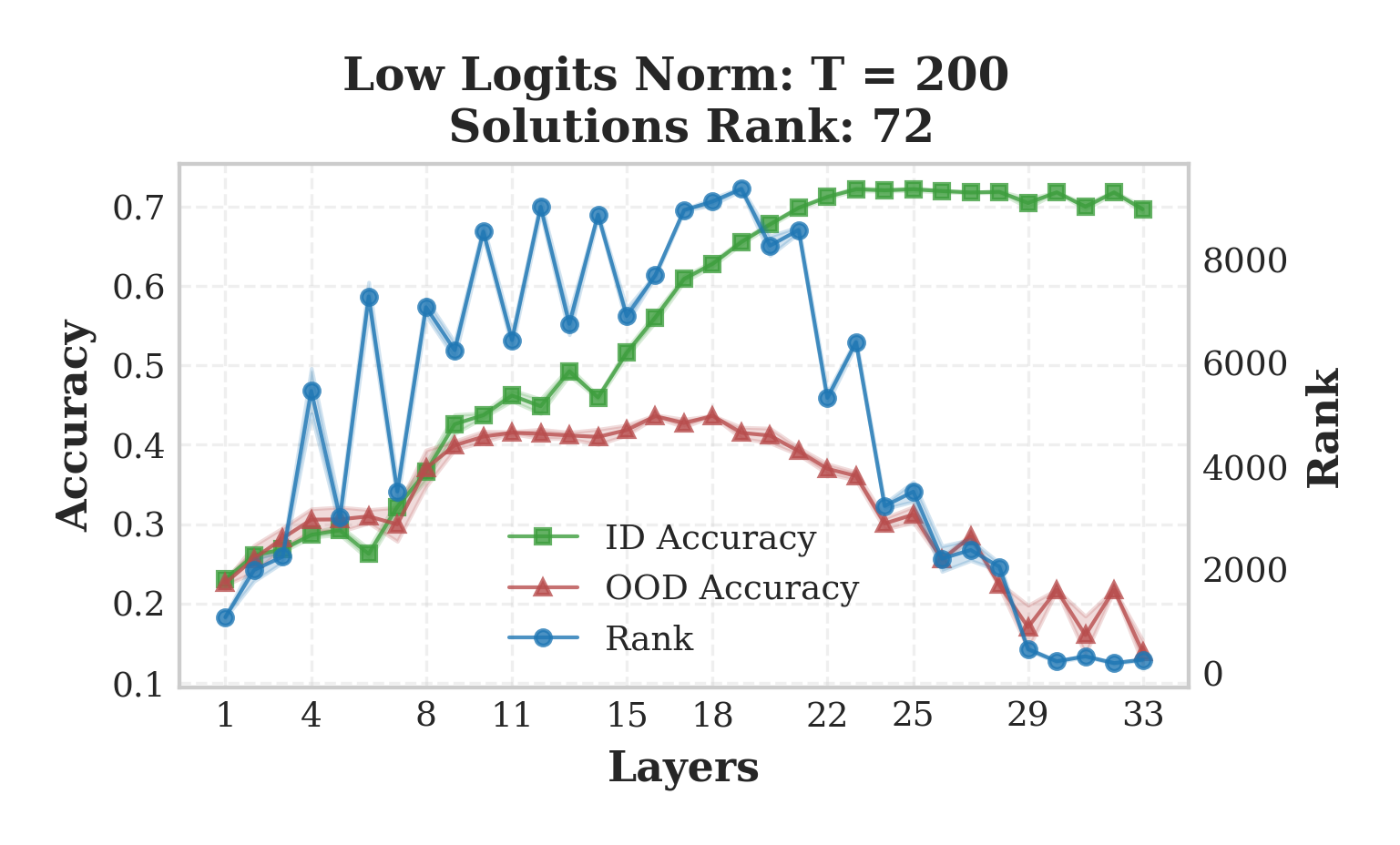}
    }
    \caption{\looseness-1 \small \textbf{Logits norm shapes the representations of neural networks} leading to the best performance achieved at earlier layers for \textcolor{teaser_green}{in distribution data} when trained with low logits norm (right) compared to the baseline (left). These condensed representations trigger \textcolor{teaser_blue}{representation collapse}, which harms generalization on \textcolor{teaser_red}{out-of-distribution data}. The plot presents the accuracy of linear probes attached to ResNet-34 trained on CIFAR-100 with different \softmax temperatures, the OOD dataset is SVHN. Analogous plots with different architectures and datasets can be found in the Appendix~\ref{app:remaining_experiments}.}
    \label{fig:teaser}
\end{figure}

The \softmax function is a ubiquitous component of modern neural networks, yet its role in shaping learned representations—particularly through its temperature parameter—remains underexplored. In this section, we investigate how variations in the logits norm at initialization, induced by the \softmax temperature, systematically affect representation learning and generalization. We present empirical evidence showing that lower logits norms lead to more compressed internal representations, reflected in reduced effective depth and rank, and, surprisingly, worse generalization to out-of-distribution (OOD) data. To quantify these effects, we introduce a suite of diagnostic metrics and evaluate them across diverse architectures and datasets. These observations lay the empirical foundation for our theoretical analysis in Section~\ref{sec:DNN_analysis}.

\subsection{Empirical evidence}

To thoroughly examine the phenomenon, we designed a setup to include the most commonly used architectures and datasets, and designed several metrics to quantify its strength.

\textbf{Architectures and Datasets}
We use 4 different architecture groups: MLP, VGG, ResNet, and VIT, trained on a diversified set of datasets: CIFAR-10, CIFAR-100, ImageNet-100, and ImageNet-1k, to assess how broadly the phenomenon applies. The details about architectures, datasets, and hyperparameters can be found in the Appendix~\ref{app:architectures}.

To quantitatively measure the effect, we define the following estimates: effective depth, OOD generalization loss, and solutions rank based on the linear probing accuracy and numerical rank. 

\textbf{Effective depth ($\kappa$):} measures the ratio between the first layer of the model's that achieves at least 99\% of the last layer's model accuracy using linear probing and the total number of layers in the model. The smaller the value, the more efficient the model.

\textbf{OOD generalization loss ($\rho$):} is defined as the normalized difference between the best OOD accuracy (across the layers) and the OOD accuracy at the final layer using linear probes. The value is non-negative, and the smaller the value, the better.

\textbf{Solutions rank (SR):} We compute the numerical rank of the pre-softmax representations matrix (logits matrix). Using the spectrum of the matrix, we estimate the numerical rank of the given representation matrix as the number of singular values above a threshold $\gamma$.~\footnote{We use the default value of this hyperparameter proposed in PyTorch~\cite{paszke2019pytorch}.}

We begin our analysis by examining how varying the logits norm at initialization affects training dynamics and representation quality. Table~\ref{tab:main-table} compares baseline architectures with their low-logit-norm counterparts, revealing clear shifts in representation structure across different datasets and model families. Even when final classification accuracies are comparable, models trained with lower logits norm exhibit systematically lower solution rank, reduced effective depth ($\kappa$), and increased OOD generalization loss ($\rho$).

These patterns highlight a trade-off: lower logits norm promotes more compressed internal representations—suggesting more efficient use of model capacity—but also leads to performance degradation on OOD data, especially in deeper layers. This empirical evidence sets the stage for the theoretical framework developed in Section~\ref{sec:DNN_analysis}, where we explain why the \softmax function enables such behavior.

\begin{table}[!h]
\centering
\resizebox{0.85\columnwidth}{!}{%
\begin{tabular}{@{}cccccccc@{}}
\toprule
\multirow{2}{*}{Dataset} & \multirow{2}{*}{Architecture} & \multicolumn{3}{c}{Baseline} & \multicolumn{3}{c}{Low Logit Norm} \\ \cmidrule(l){3-8} 
                             &           & $\kappa$ $\downarrow$ & $\rho$ $\downarrow$ & SR & $\kappa$ $\downarrow$ & $\rho$ $\downarrow$ & SR \\ \midrule
\multirow{3}{*}{CIFAR-10}     & MLP       &    75\%   &  67\%   &   9   &    38\%   &  78\%    &   9   \\
                             & ResNet18  &   100\%    &  29\%   &  9     &   67\%    & 54\%    &   9   \\
                             & ResNet20  &  100\%     &  10\%   &   9   &   75\%    &  34\%   &   9   
                             \\ 
                             \midrule
\multirow{2}{*}{CIFAR-100}    & ResNet18  &  100\%     &  19\%   &   99   &    83\%   &  50\%   &   59   \\
                             & ResNet34  &   100\%    &  16\%   &   99   &    68\%   &   51\%  &   72   \\ 
                             \midrule
\multirow{2}{*}{ImagNet-100}  & ResNet34  &   100\%    &   6\%  &  99    &   85\%    &    52\% &   42   \\
                              & ResNet50  &    100\%  &   5\%  &    99  &   78\%    &  50\%   &   27   \\ \midrule
\multirow{2}{*}{ImageNet-1k} & ResNet34  &    100\%   &   5\%  &    512\textsuperscript{\textdagger} &    82\%   &  23\%   &     122   \\
                             & ResNet50  &    100\%   &   5\%  &   947   &    78\%   &  22\%   &    128    \\
                             \bottomrule
\end{tabular}%
}
\caption{Training the models with different logits norm at initialization leads to models with different properties. Lower logits norm leads to more compressed representations ($\kappa$), decreased out-of-distribution generalization ($\rho$), and lower solutions rank (SR).  \textsuperscript{\textdagger} shows when the rank is bounded by the size of the classifier. Fine-grained results in Appendix~\ref{app:more_temperatures}.}
\label{tab:main-table}
\end{table}

\section{Analysis}\label{sec:DNN_analysis}

This section demonstrates how low logit norms lead to \emph{rank-deficit bias} for MLP (CIFAR-10) and ResNet-34 (CIFAR-100), with additional experiments in Appendix~\ref{app:remaining_experiments}.  

\paragraph{Notation} A network \( f_{\theta}: \mathcal{X} \to \mathbb{R}^c \) maps inputs \(\mathbf{x}\) to logits \(\mathbf{e}\) over \(c\) classes, with probabilities computed via \(\operatorname{softmax}_T(\mathbf{e})\) with temperature $T$. The logits matrix \(\mathbf{M} = \mathbf{W}^L\mathbf{A}^L\) of an \(L\)-layer network combines final weights \(\mathbf{W}^L \in \mathbb{R}^{c \times d}\) and penultimate representations \(\mathbf{A}^L \in \mathbb{R}^{d \times n}\). Here, \(\|\mathbf{M}\|\) denotes the Frobenius norm, and \(\sigma_1(\mathbf{A})\) is the top singular value of $\Am$.  

\subsection{Training dynamics}

\paragraph{Logits norm and \softmax temperature equivalence} 

The \softmax function with temperature \( T > 0 \) (Equation~\ref{eq:softmax_with_temp}) is functionally equivalent to applying \softmax with temperature 1 to logits scaled by $1/T$. Conversely, logits norm acts as an inverse temperature for normalized logits.~\footnote{
$\operatorname{softmax}_T(\mathbf{e}) = \operatorname{softmax}_1\left(\mathbf{\frac{e}{T}}\right) = \operatorname{softmax}_{\frac{1}{\|\mathbf{e}\|}}\left(\mathbf{\frac{e}{\|\mathbf{e}\|T}}\right)$} Thus, the effects observed in our work can arise either from explicit high-temperature training or implicit architectural logit norm reduction, as detailed in Section~\ref{sec:architecture_choices}.

\textbf{High temperature induces loss symmetry} 

Increasing the \softmax temperature flattens output distributions, maximizing entropy and approaching uniform class probabilities asymptotically~\cite{velickovic2024softmax}. This sample uniformity creates a symmetric loss landscape, where each sample incurs a CrossEntropy~\footnote{The same holds for MSE that also exhibit \textit{rank-deficit bias} when paired with \softmax (see Appendix~\ref{app:mse_results}).} loss of $\ln(\frac{1}{n})$ for $n$ classes. This symmetry might hinder learning by suppressing gradient diversity~\cite{ziyin2024symmetryinducesstructureconstraint}.

\paragraph{Breaking loss symmetry requires increasing logits norm}
\begin{wrapfigure}[16]{r}{0.5\textwidth}
    \centering
        \vspace{-0.6cm}
        \includegraphics[width=0.5\textwidth]{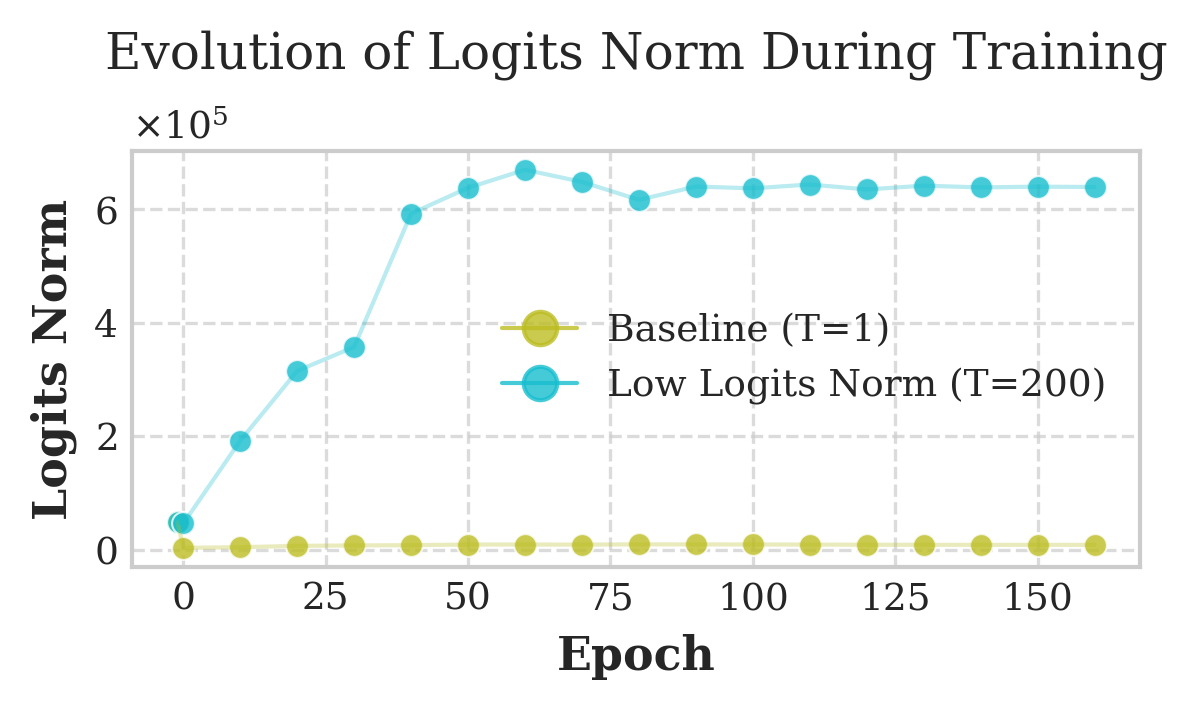}
        \caption{\looseness-1 \small High-temperature training forces networks to increase logits norm to escape the loss symmetry induced by the \softmax temperature compared to baseline. Experiment: ResNet-34 on CIFAR-100. Additional experiments in Appendix~\ref{app:growing_logits_norm}. \label{fig:logits_norm_over_training}}
\end{wrapfigure}

To minimize the loss, the model must break symmetry by reducing \softmax entropy. This can be achieved either by decreasing the temperature (as in~\cite{velickovic2024softmax}) or increasing the norms of the logits. With fixed temperature during training, networks must grow the logits' norms to escape symmetry. Figure~\ref{fig:logits_norm_over_training} validates this claim, showing rapid growth of the logits' norm when trained with high temperature compared to baseline. 

\paragraph{Two mechanisms for logit norm growth} 

The logits matrix $\Mm = \Wm\Am \in \mathbb{R}^{c \times n}$ decomposes into the last weight matrix $\Wm \in \mathbb{R}^{c \times d}$ and penultimate representations $\Am \in \mathbb{R}^{d \times n}$, where $c$, $n$, and $d$ denote class count, sample size, and representation dimension respectively\footnote{Our analysis generalizes to convolutional layers via tensor reshaping (see Appendix~\ref{app:pabs})}. Taking the SVD of $\Wm$ and $\Am$, we obtain:

\begin{equation}
\begin{aligned}
\|\Mm\| = \|\Wm\Am\| 
 & = \|\Um_{\Wm}\Sigma_{\Wm}\Vm^{\top}_{\Wm}\Um_{\Am}\Sigma_{\Am}\Vm^{\top}_{\Am}\| \\
 & = \|\Sigma_{\Wm}\Vm^{\top}_{\Wm}\Um_{\Am}\Sigma_{\Am}\| \leq \|\Sigma_{\Wm}\Sigma_{\Am}\|,
\end{aligned}
\end{equation}

where the orthonormal SVD components satisfy $\Wm = \Um_{\Wm}\Sigma_{\Wm}\Vm^{\top}_{\Wm}$ and $\Am = \Um_{\Am}\Sigma_{\Am}\Vm^{\top}_{\Am}$. The second equality holds because singular vectors are orthonormal. Then, the norm grows through:
\begin{enumerate}
    \item \textit{Singular value scaling}: Increasing $\Sigma_{\Wm}$ and $\Sigma_{\Am}$ diagonal entries.
    \item \textit{Singular vector alignment}: Maximizing $\|\Mm\|$ when $\Vm^{\top}_{\Wm}\Um_{\Am} = \Im$, achieving maximal alignment of the singular vectors.
\end{enumerate}

To determine whether the alignment is exploited during training, we track alignment by measuring cosine similarity between the top-15 singular vectors of $\Wm^i$ and $\Am^i$ across layers $i=1,\ldots, L$. Figure~\ref{fig:alignment} (top) shows high-temperature models develop strong alignment early in the training, while baseline models exhibit negligible alignment throughout the whole training.

\paragraph{Partial alignment causes representation collapse}

\begin{wrapfigure}[33]{r}{0.54\textwidth}
    \centering
        \includegraphics[width=0.54\textwidth]{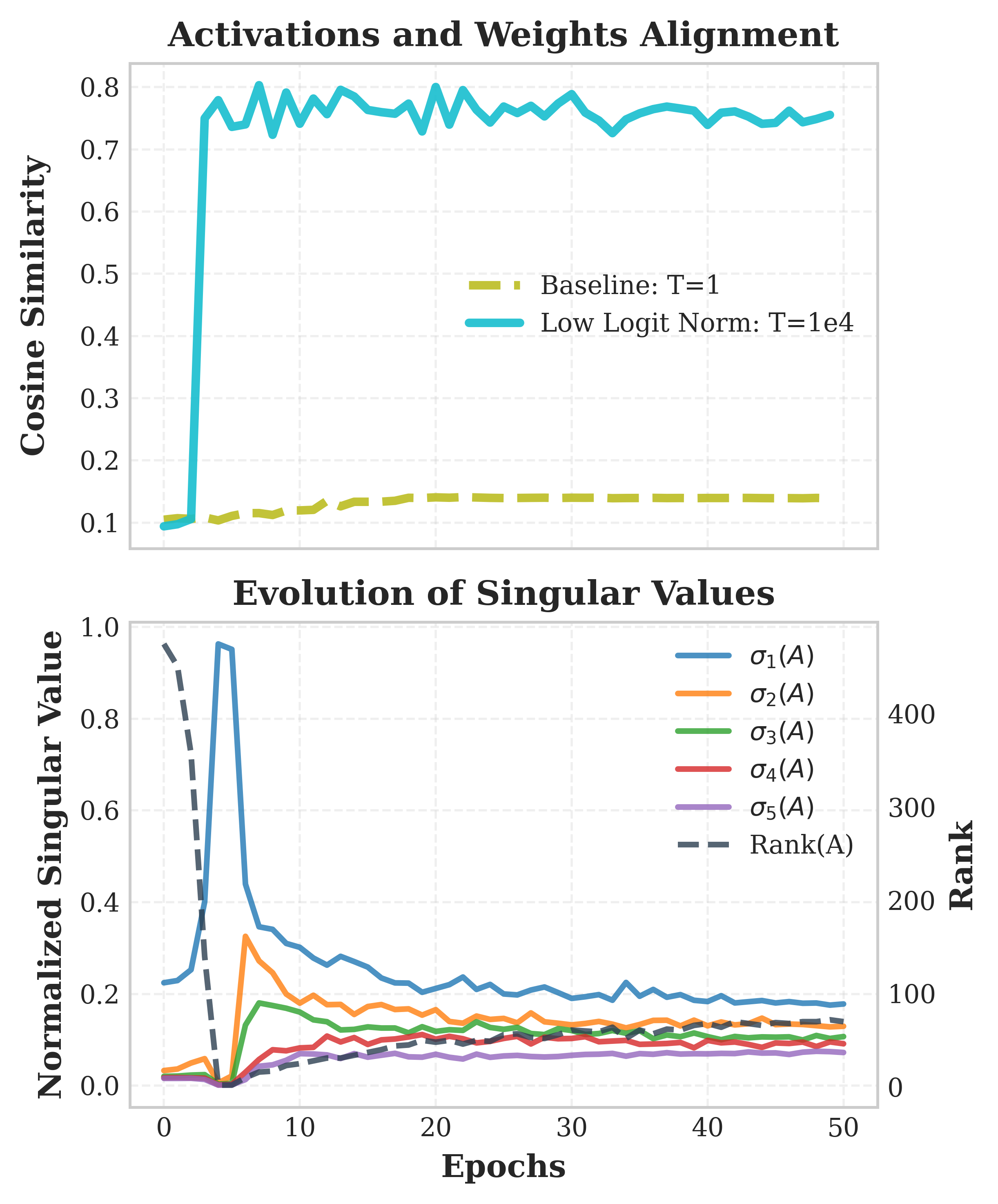}
        \caption{\looseness-1 \small Strong alignment forms during initial training epochs for the model trained with high temperature, leading to representation collapse caused by top singular values dominating the spectrum. \textbf{Top:} maximum alignment (cosine similarity) between the top-15 singular vectors of weights and representations averaged across all layers for the network. \textbf{Bottom:} Experiment: MLP trained on CIFAR-10. Fine-grained results presented in Figure~\ref{fig:finegrained_alignment}; additional experiments in the Appendix~\ref{app:pabs}.\label{fig:alignment}}
\end{wrapfigure}

The alignment phenomenon occurs simultaneously across multiple layers (Figure~\ref{fig:alignment} (top)), creating a compounding effect that exponentially increases representations' norm with network depth. For maximally aligned networks, the top singular value of the final logits ($\sigma_{1}^{L}$) becomes the product of each layer's top singular values:
\[
\sigma_{1}^{L} = \prod_{i=1}^{L-1} \sigma_{1}^{i} \geq (\sigma_{1}^{k})^L,
\]
where $\sigma_{1}^{i}$ is layer $i$'s top singular value and $k$ identifies the layer with the smallest $\sigma_{1}^{i}$. Crucially, when $\sigma_{1}^{k} > 1$, this creates exponential growth with depth $L$. However, this growth is highly selective - initially, only the top singular vector across layers aligns (Figure~\ref{fig:alignment} (top)) to be later joined by a few additional ones (Figure~\ref{fig:finegrained_alignment}), while others remain nearly orthogonal, creating an imbalance in learning dynamics.

This imbalanced alignment leads to representation collapse, as evidenced in Figure~\ref{fig:alignment} (bottom). Early in training, only the top singular value grows significantly, dominating the spectrum. Around epoch 8, two additional values begin growing, coinciding with the emergence of strong alignment among the top three singular vectors (Figure~\ref{fig:finegrained_alignment}). We quantify this collapse using numerical rank: $\rank(\Am) \coloneqq \Sigma_{i=1}^{n} \mathbbm{1}_{\sigma_i(\Am) > \gamma}$, where $\gamma$ is typically set relative to $\sigma_1(\Am)$. As top singular values grow exponentially (dashed line in Figure~\ref{fig:alignment} (bottom)), the numerical rank drops sharply, demonstrating clear representation collapse in the learned features.

\paragraph{Representations collapse triggers gradient collapse}
Representation collapse creates a feedback loop in neural networks. The initial collapse stems from uneven learning speeds across different singular values. But once representations collapse, they restrict gradient diversity through a fundamental rank limitation, making recovery even harder. To understand this mechanism, consider layer representations $\Am^{i} = \phi(\Wm^{i}\Am^{i-1})$. The gradient rank at each layer obeys a crucial bound:

\begin{equation}\label{eq:gradient_rank_bound}
    \rank\left(\frac{\partial \mathcal{L}}{\partial \Wm^{i}}\right)  = \rank\left(\frac{\partial \mathcal{L}}{\partial \Zm^{i}}\Am^{i-1}\right) \leq \min\left(\rank\left(\frac{\partial \mathcal{L}}{\partial \Zm^{i}}\right), \rank\left(\Am^{i-1}\right)\right) \leq \rank(\Am^{i-1}),
\end{equation}

meaning collapsed representations directly limit gradient diversity. While this bound uses exact rank, our experiments with numerical rank in Figure~\ref{fig:activations_gradients_ranks} confirm the effect: high-temperature training (right) shows simultaneous collapse in both representations and gradients, unlike normal training (left).

\begin{figure}[!h]
    \centering
    {
    \includegraphics[width=0.48\textwidth]{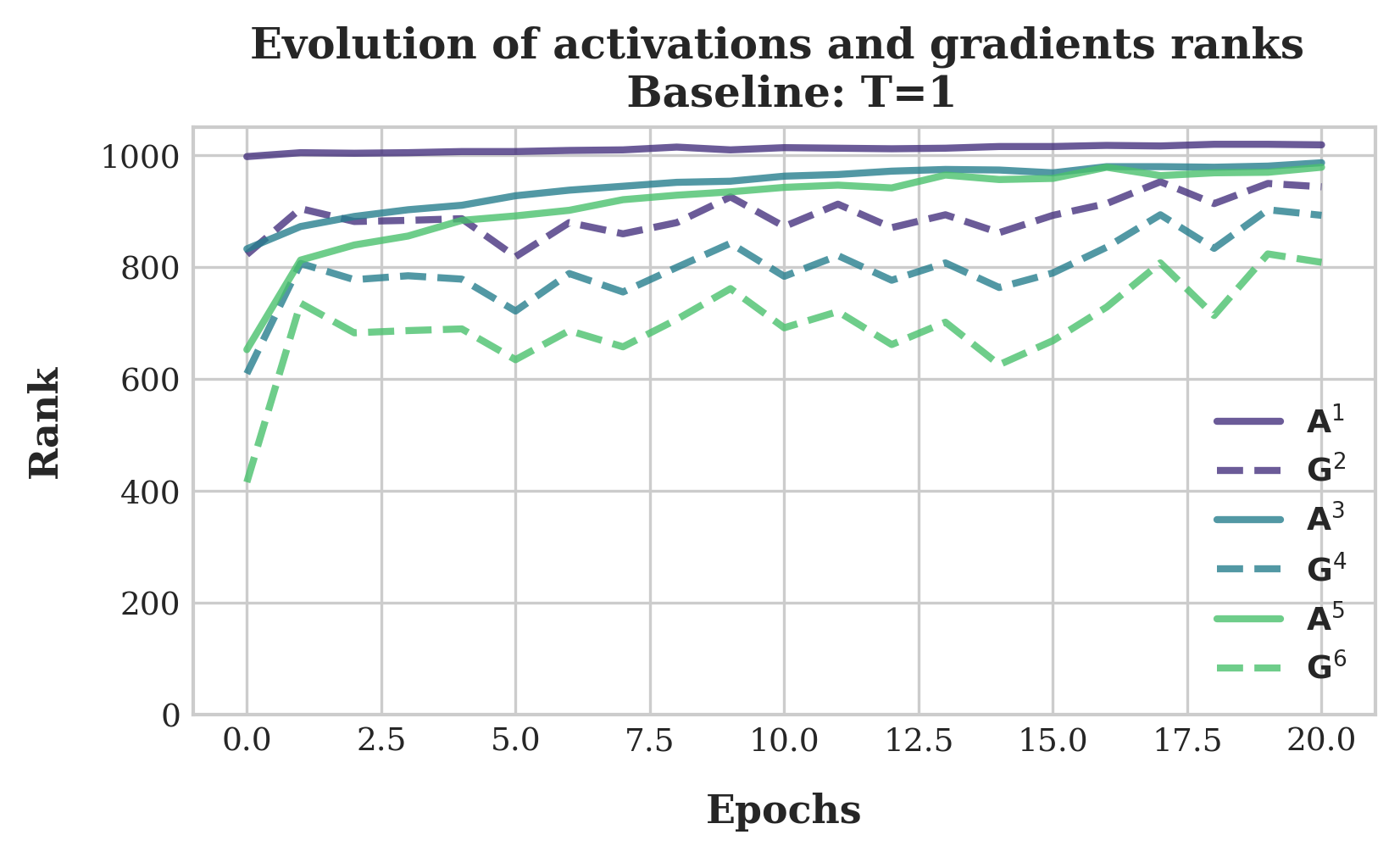}
    }
    {
    \includegraphics[width=0.48\textwidth]{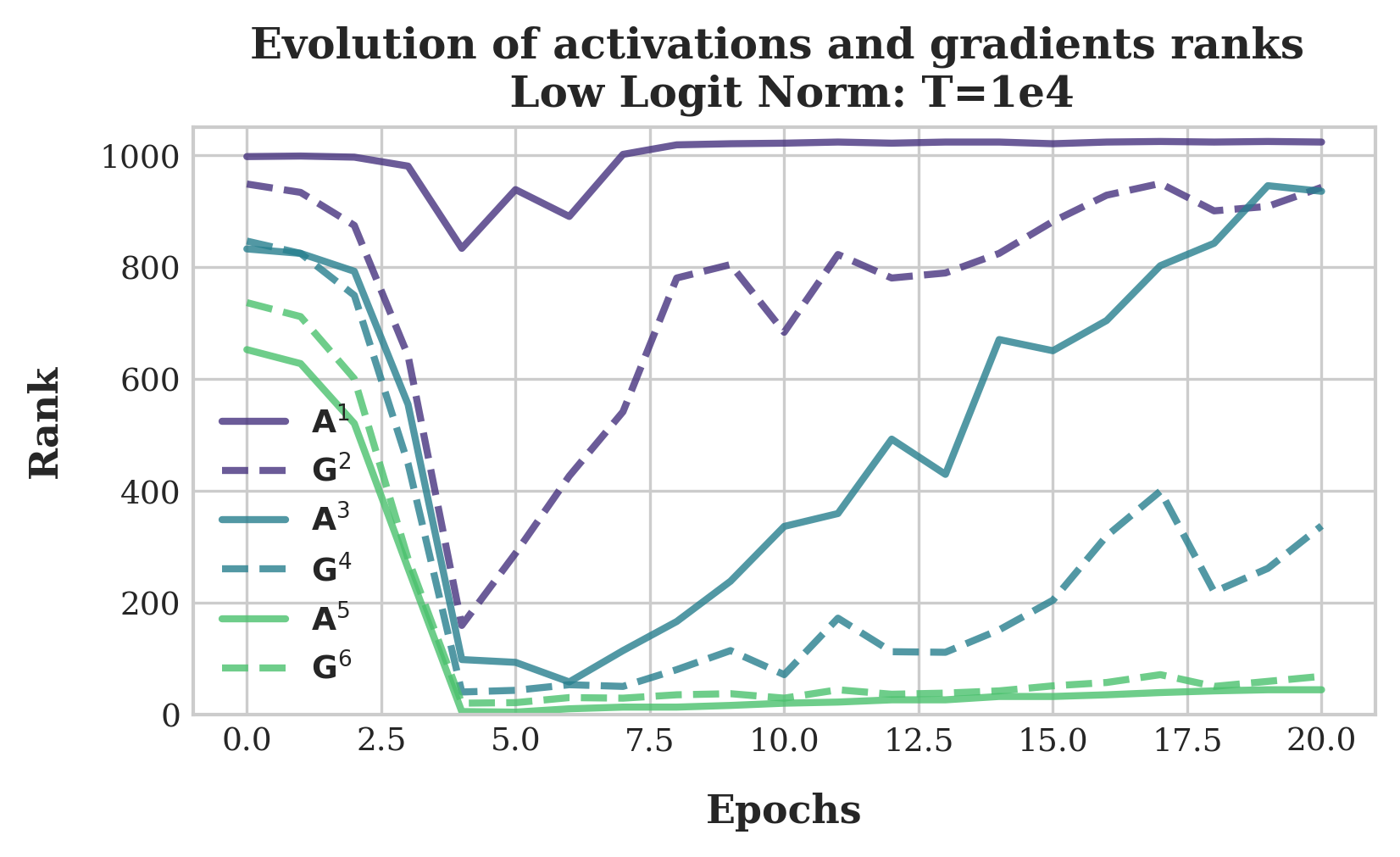}
    }
    \caption{\looseness-1 \small Rank of activations $\Am^{i}$ bounds the rank of the gradients $\Gm^{i+1}$, leading to low-rank gradients at deeper layers when trained with high temperature (right). Experiment: MLP trained on CIFAR-10.
}
    \label{fig:activations_gradients_ranks}
\end{figure}

\paragraph{Summary}
Low logit norm or high-temperature training triggers a cascade of effects leading to representation collapse, demonstrating the fundamental role of \softmax temperature in shaping neural networks' representations.

We now bridge our observations to Neural Collapse~\cite{papyan2020neuralcollapse}, a broader framework predicting the rigid geometric structure of learned representations. We demonstrate how high-temperature training leads to a novel \textit{rank-deficit bias} that fundamentally alters the expected geometric structure of learned representations. The extended comparison can be found in the Appendix~\ref{app:neural_collapse}.

\subsection{From Neural Collapse to Rank-Deficit Bias}

\begin{wrapfigure}[14]{r}{0.45\textwidth}
    \centering
        \vspace{-1.3cm}
        \includegraphics[width=0.45\textwidth]{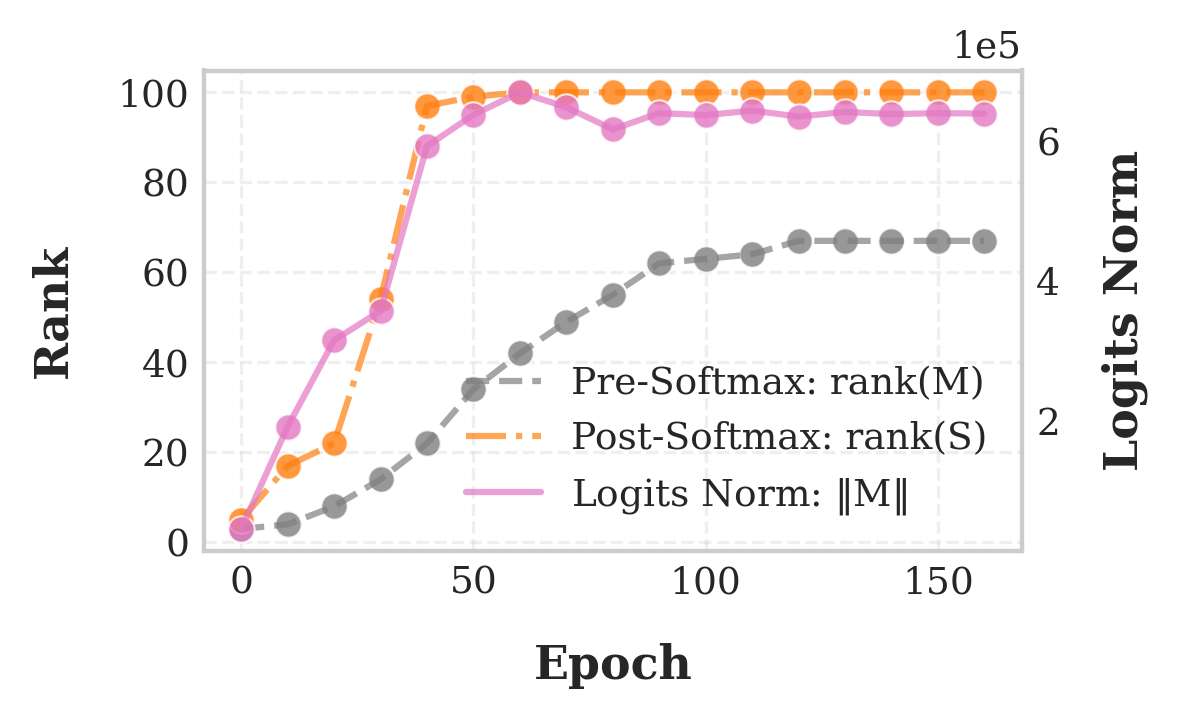}
        \caption{\looseness-1 \small Evolution of logits rank before and after applying a \softmax and logits norm before applying the \softmax function. Experiment: ResNet-34 trained on CIFAR-100.\label{fig:pre_post_softmax_rank_evolution}}
\end{wrapfigure}

In supervised classification, a neural network $f_{\theta}(\Xm) = \operatorname{softmax}_T(\Mm)$ must approximate target outputs $\Ym \in \mathbb{R}^{c\times n}$. Typically, $\rank(\Ym) = c$ since $n \gg c$. The Neural Collapse (NC) phenomenon~\cite{papyan2020neuralcollapse} reveals that successful training leads to a specific geometric structure where $\rank(\Mm) = c-1$ (see Appendix~\ref{app:neural_collapse}). Subsequent works have shown that NC solutions influence generalization, robustness, and transfer learning~\cite{kothapalli2023neuralcollapsereviewmodelling,munn2024impactgeometriccomplexityneural,papyan2020neuralcollapse}.

Our key discovery breaks from this established pattern: high-temperature training induces \textit{rank-deficit bias}, where $\rank(\Mm) \ll c-1$. To understand this paradox, we track $\|\Mm\|$, $\rank(\Mm)$, $\rank(\Sm)$, where $\Sm = \operatorname{softmax}_T(\Mm)$, and  during training.

\begin{wrapfigure}[16]{r}{0.45\textwidth}
    \centering
        \vspace{-1.2cm}
        \includegraphics[width=0.45\textwidth]{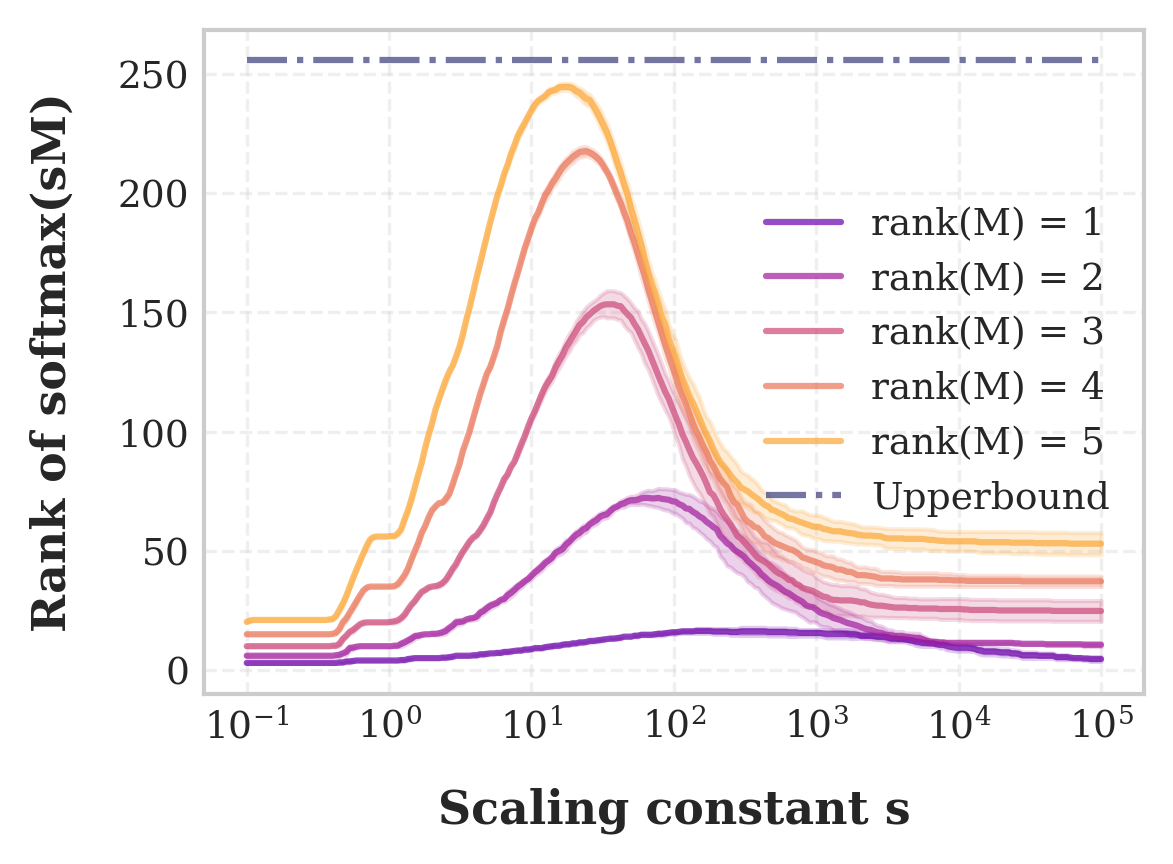}
        \caption{\looseness-1 \small The evolution of post-softmax rank of random matrices $\Mm$ scaled by a constant $c$ (inverse of temperature). Each matrix $\Mm \coloneqq \Am\Bm^\top \in \mathbb{R}^{n \times n}$, where $\Am, \Bm \in \mathbb{R}^{n\times k}$, and $k = 1, \ldots, 5$, making $\Mm$ low-rank. The elements $a_{ij}, b_{ij} ~\sim \mathcal{U}(-1, 1)$.\label{fig:softmax_scaling}}
\end{wrapfigure}

Figure~\ref{fig:pre_post_softmax_rank_evolution} reveals a pattern: while both ranks start small, the post-softmax rank quickly saturates while the pre-softmax rank remains low. The sudden growth of post-softmax rank aligns with the growth of the logits norm. Is this norm-rank relationship fundamental? To test this, we randomly generate low-rank matrices $\Mm \in \mathbb{R}^{n \times n}$ and compute their post-softmax rank when increasing their norms by scaling them with a constant $s$. Since multiplying a matrix by a scalar does not change its rank, the pre-softmax rank stays the same throughout the whole experiment; however, Figure~\ref{fig:softmax_scaling} reveals the key insight: Increasing $\|\Mm\|$ alone boosts $\rank(\operatorname{softmax}(\Mm))$, even when $\rank(\Mm)$ stays constant.

To formally analyze this effect, we propose the following upper bound on the difference between top ($\sigma_1(\Sm)$) and bottom ($\sigma_n(\Sm)$) singular values of matrix $\Sm$, where $\Sm = \operatorname{softmax}(\Mm)$ for any real $\Mm \in \mathbb{R}^{n \times n}$:

\begin{proposition} \label{proposition:B2}
Consider any matrix $\Sm\in\R^{n\times n}$ with columns $\s_j\in\R^n$ being probability vectors. Then the gap between the largest $\sigma_1(\Sm)$ and the smallest singular value $\sigma_n(\Sm)$ is bounded by the following tight inequality:
\[
0 \leq \sigma_1(\Sm) - \sigma_n(\Sm) \leq \sqrt{1+r}-\sqrt{\max\left\{\frac{1}{n}-r,0\right\}}
\]
where $r:= \max_i \sum_{j\neq i}\langle \s_i,\s_j\rangle$.
\end{proposition}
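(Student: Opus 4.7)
The natural reduction is to work with the Gram matrix $\Gm := \Sm^{\top}\Sm \in \R^{n\times n}$. Since $\Sm$ is square and $\Gm$ is symmetric positive semidefinite, $\sigma_k(\Sm)^2 = \lambda_k(\Gm)$, so the proof amounts to bounding $\lambda_{\max}(\Gm)$ from above and $\lambda_{\min}(\Gm)$ from below in terms of $r$, after which we take square roots and subtract.

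\textbf{Diagonal of $\Gm$ from the simplex structure.} Each column $\s_i$ lies in the probability simplex, so $\|\s_i\|_1 = 1$ and $\s_i \geq 0$ entrywise. This yields a uniform two-sided bound on the diagonal of $\Gm$: on the one hand $\Gm_{ii} = \|\s_i\|_2^2 \leq \|\s_i\|_1\|\s_i\|_{\infty}\leq 1$, and on the other, by Cauchy--Schwarz, $\Gm_{ii} = \|\s_i\|_2^2 \geq \|\s_i\|_1^2/n = 1/n$.

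\textbf{Gershgorin step.} Setting $r_i := \sum_{j\neq i}\langle \s_i,\s_j\rangle = \sum_{j\neq i}\Gm_{ij}$, we have $\Gm_{ij}\geq 0$ since columns are entrywise nonnegative, so the Gershgorin discs are the real intervals $[\Gm_{ii}-r_i,\Gm_{ii}+r_i]$. By definition of $r$ we have $r = \max_i r_i$. Consequently
\[
\lambda_{\max}(\Gm)\leq\max_i(\Gm_{ii}+r_i)\leq 1+r, \qquad \lambda_{\min}(\Gm)\geq\min_i(\Gm_{ii}-r_i)\geq \tfrac{1}{n} - r.
\]
Combining the latter with $\lambda_{\min}(\Gm) \geq 0$ (positive semidefiniteness) gives $\lambda_{\min}(\Gm)\geq\max\{\tfrac{1}{n} - r,\,0\}$. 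Taking square roots and subtracting yields the claimed upper bound on $\sigma_1(\Sm)-\sigma_n(\Sm)$; the lower bound $0\leq\sigma_1(\Sm)-\sigma_n(\Sm)$ is immediate from the ordering of singular values.

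\textbf{Tightness and the main obstacle.} For tightness I would exhibit $\Sm$ with all columns equal to the same standard basis vector, say $\s_j=(1,0,\ldots,0)^{\top}$ for every $j$: then $\Gm$ is the all-ones matrix, so $\lambda_{\max}(\Gm)=n$ and $\lambda_{\min}(\Gm)=0$, while $r=n-1$ and $\sqrt{1+r}-\sqrt{\max\{\tfrac{1}{n}-r,0\}}=\sqrt{n}=\sigma_1(\Sm)-\sigma_n(\Sm)$. The only subtle point of the argument is the lower bound on $\lambda_{\min}(\Gm)$: Gershgorin only guarantees that each eigenvalue lies in some disc, giving $\lambda_{\min}(\Gm)\geq\min_i(\Gm_{ii}-r_i)$, which one must then slacken via the elementary inequality $\min_i(a_i-b_i)\geq\min_i a_i - \max_i b_i$ to expose the uniform diagonal bound $\Gm_{ii}\geq 1/n$; the PSD property of $\Gm$ is what lets us finally replace a possibly negative intermediate value with $\max\{\tfrac{1}{n}-r,0\}$, ensuring the bound is meaningful in the regime $r>1/n$.
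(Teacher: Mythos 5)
Your proof is correct and follows essentially the same route as the paper's: reduce to the Gram matrix $\Gm=\Sm^{\top}\Sm$, bound its diagonal in $[\tfrac{1}{n},1]$ via Cauchy--Schwarz and the simplex constraint, apply Gershgorin with nonnegative off-diagonal entries, clamp $\lambda_{\min}$ at zero using positive semidefiniteness, and take square roots. Your tightness example for the upper bound (all columns equal to a single standard basis vector) coincides with the paper's; the paper additionally notes $\Sm=\mathbf{I}_n$ for tightness of the trivial lower bound.
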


The bound shows that the rank of a post-softmax matrix depends on the mutual similarity between the columns of the softmaxed logits, which increases as the logits norm increases, boosting the rank of $\Sm$. The proof is in the Appendix~\ref{app:softmax_analysis}.

Given the \softmax’s surprising ability to amplify rank and its connection to high-temperature training and \textit{rank-deficit bias}, we investigate the theoretically minimal rank that still permits full-rank post-softmax outputs. In Appendix~\ref{app:rank2_to_full_rank}, we prove that the minimal rank is 2 irrespective of the number of classes in the dataset. However, as Table~\ref{tab:main-table} shows, empirical results remain far from this theoretical limit. We encourage further research into training methods that push toward lower-rank solutions, given their direct impact on OOD performance explored in the next Section.

\paragraph{Summary} \softmax enables \textit{rank-deficit bias} by mapping low-rank inputs to full-rank outputs via norm amplification. However, the rank of solutions found by training is still much higher than the theoretical limit.

We now turn our attention to quantifying the consequences of the \textit{rank-deficit bias}, showing that the \softmax temperature has a direct impact on the model's performance on downstream OOD tasks.

\section{Generalization is at odds with detection}\label{sec:generalization_vs_detection}

\begin{wrapfigure}[13]{r}{0.45\textwidth}
    \centering
        \vspace{-0.6cm}
        \includegraphics[width=0.45\textwidth]{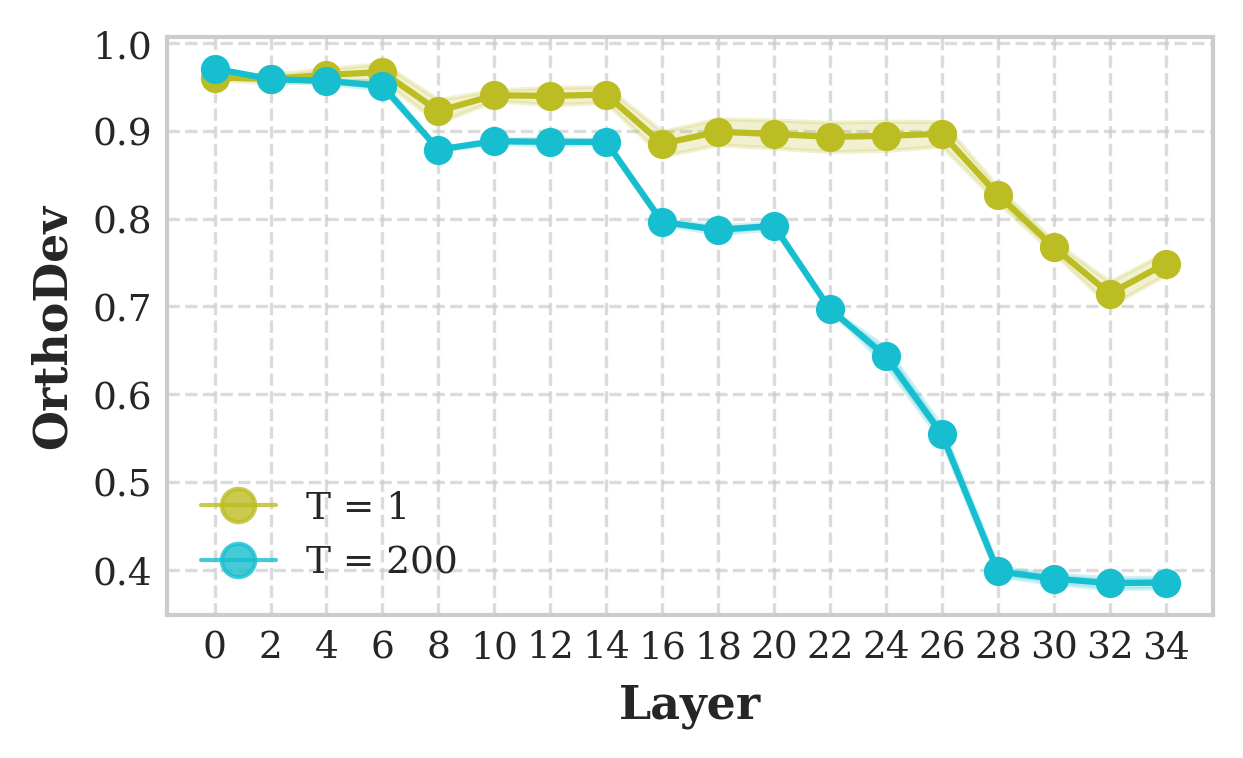}
        \caption{\looseness-1 \small Training with high temperature results in lower $\operatorname{OrthoDev}$ values compared to baseline model. Experiment: ResNet-34 CIFAR-100. Additional experiments in Appendix~\ref{app:NC5}.\label{fig:ood_score}}
\end{wrapfigure}

Studies have explored the relationship between Neural Collapse (NC) and a model's ability to generalize to OOD data~\cite{haas2023linkingneuralcollapsel2,ammar2024neconeuralcollapsebased,harun2025controllingneuralcollapseenhances}. Notably, \cite{haas2023linkingneuralcollapsel2} observed that models with stronger Neural Collapse tendencies exhibit improved OOD detection performance. Further, \cite{ammar2024neconeuralcollapsebased} demonstrated that NC leads to an OOD orthogonality condition measured in practice as the convergence of the following quantity:


\begin{equation}\label{eq:ood_metric}
\operatorname{OrthoDev} \coloneqq \operatorname{Avg}_c\left|\frac{\left\langle\mu_c, \mu_G^{\mathrm{OOD}}\right\rangle}{\left\|\mu_c\right\|_2\left\|\mu_G^{\mathrm{OOD}}\right\|_2}\right| \rightarrow 0,
\end{equation}

where $\mu_c$ denotes the class average on penultimate layer representations, and $\mu_G^{\mathrm{OOD}}$ is the global average representation on OOD data. Building on this orthogonality condition, \cite{ammar2024neconeuralcollapsebased} proposed NECO, a state-of-the-art OOD detection method. However, empirical results show that $\operatorname{OrthoDev}$ rarely converges to zero. Given authors~\cite{ammar2024neconeuralcollapsebased} links $\operatorname{OrthoDev}$ NC, we hypothesize that models with \textit{rank-deficit bias} should achieve lower $\operatorname{OrthoDev}$ values and improve OOD detection using NECO. 

\begin{table}[!h]
\centering
\resizebox{0.8\columnwidth}{!}{%
\begin{tabular}{@{}ccccc|cccc@{}}
\toprule
  ID $\rightarrow$ OOD Dataset        & \multicolumn{4}{c|}{CIFAR-100 $\rightarrow$ CIFAR-10} & \multicolumn{4}{c}{CIFAR-10 $\rightarrow$ CIFAR-100} \\ \midrule
 & \multicolumn{2}{c}{Baseline} & \multicolumn{2}{c|}{Low Logit Norm} & \multicolumn{2}{c}{Baseline} & \multicolumn{2}{c}{Low Logit Norm} \\ \midrule
          & AUROC $\uparrow$  & FPR $\downarrow$  & AUROC $\uparrow$  & FPR $\downarrow$ & AUROC $\uparrow$  & FPR $\downarrow$ & AUROC $\uparrow$ & FPR $\downarrow$ \\ \midrule
ResNet-18 &   $63.66\%$     &   $90.44\%$   &    $\textbf{ 77.04\%}$   &  $\textbf{ 80.13\%}$  &    $77.75\%$   &   $73.29\%$   &    $\textbf{87.53\%}$    &   $\textbf{55.87\%}$   \\
ResNet-34 &    $62.36\%$    &   $91.13\%$   &   $\textbf{76.37\%}$      &   $\textbf{79.27\%}$   &   $73.96\%$     &   $73.60\%$   &    $\textbf{86.18\%}$    &   $\textbf{57.23\%}$   \\ 
VIT-B &    $92.38\%$    &   $40.01\%$   &   $\textbf{ 93.96\%}$    &   $\textbf{30.92\%}$   &   $98.09\%$     &   $9.51\%$   &     $\textbf{98.33\%}$   &  $\textbf{8.87\%}$    \\ 

\bottomrule
\end{tabular}%
}
\caption{\looseness-1 \small{NECO method for OOD detection on baseline (or low logit norm) models. ID/OOD dataset: CIFAR-100/CIFAR-10 (left) and reversed (right). Metrics: AUROC and FPR (at $95\%$ TPR), both in $\%$. ViT-B (pretrained on ImageNet-21k) shows stronger baseline results than ResNet (trained from scratch). Best results are bolded.}}
\label{tab:OOD_detection}
\end{table}

Our experiments confirm this: Figure~\ref{fig:ood_score} reveals that high-temperature training reduces $\operatorname{OrthoDev}$, while Table~\ref{tab:OOD_detection} demonstrates significantly better OOD detection performance. However, Table~\ref{tab:main-table} and Figure~\ref{fig:teaser} show that these same models suffer from poor OOD generalization in the last layers. This trade-off is further explored in Appendix~\ref{app:ood_vs_generalization}, showing that stronger OOD generalization correlates with weaker OOD detection. This observation is in line with recent work~\cite{harun2025controllingneuralcollapseenhances} where authors proposed a regularization term to force NC and increase the model's OOD detection.

\paragraph{Summary} There exists a fundamental trade-off between OOD generalization and OOD detection directly controlled by the \softmax temperature.

\section{What implicitly change logits norm?}\label{sec:architecture_choices}

\begin{wraptable}[10]{r}{80mm}
\vspace{-0.4cm}

\centering
\centering
\resizebox{\linewidth}{!}{%
\begin{tabular}{@{}ccccccc@{}}
\toprule
\multirow{2}{*}{Dataset} & \multicolumn{3}{c}{Baseline} & \multicolumn{3}{c}{Low Logit Norm} \\ 

& $\kappa$ $\downarrow$ & $\rho$ $\downarrow$ & SR & $\kappa$ $\downarrow$ & $\rho$ $\downarrow$ & SR \\ 
\midrule
CIFAR-10      &  47\%     &  64\%   &   9    &   53\%    &  64\%   &   9  \\  
CIFAR-100     &  58\%     &  58\%   &   73   &   58\%    &  53\%   &   45 \\
\bottomrule
\end{tabular}%
}
\caption{\small{Training baseline VGG-19 models on CIFAR-10/CIFAR-100 already leads to collapsed models, and decreasing further logit norm does not lead to substantial differences except for solutions rank.}}
\label{tab:vgg_table}

\end{wraptable}
While our previous sections established softmax temperature's importance in shaping representations, we now demonstrate that architectural choices equally regulate logit norms through three primary mechanisms: initialization strategies, network dimensions, and normalization layers.

\paragraph{Empirical Motivation}
The necessity of this analysis becomes evident when comparing models across architectures. Tables~\ref{tab:vgg_table} and~\ref{tab:main-table} contrasts each other. While baseline ResNets find full rank solutions and training with high temperature leads to collapse, for VGG-19, the baseline models are already collapsed, and high-temperature training shows minimal effects. This occurs because VGG-19 networks inherently produce lower initial logit norms than ResNets, a phenomenon we empirically validate in Figure~\ref{fig:initial_norms_vgss_resnets}.

\paragraph{Initialization Strategies}
Weight initialization critically impacts both activation scales and early training dynamics through its effect on logit norms. The relationship can be described through the following inequalities: $\|\Mm\| = \|\Wm\Am\| \leq \|\Wm\| \|\Am\| \approx \sigma \sqrt{nm}\|\Am\|$, where $\Wm \in \mathbb{R}^{n \times m}$ with $w_{ij} \sim \mathcal{N}(0, \sigma)$. In practice, different initialization schemes lead to varying behaviors. For instance, contrastive reinforcement learning often initializes final layers from $\mathcal{U}(-10^{-12},10^{-12})$ \cite{zheng2024stabilizing}, which keeps initial representations tightly clustered. While this approach stabilizes optimization, it may slow early learning progress compared to more conventional initialization methods.

\paragraph{Network Dimensions}
The width of network layers significantly influences representation learning through its effect on logit norms. Wider layers naturally produce larger representation norms due to increased dimensionality. This effect becomes particularly pronounced in self-supervised learning frameworks, where projection heads are typically designed much wider than their backbone networks \cite{chen2020simple}. Empirical studies of SimCLR demonstrate that removing the projector leads to collapsed representation spectra, resulting in reduced feature expressivity and degraded downstream performance \cite{jing2022understanding}, highlighting the crucial role of width in maintaining effective representations.

\paragraph{Normalization Effects}
Normalization layers provide another architectural mechanism for controlling logit norm growth through their mean subtraction and variance scaling operations \cite{ba2016layer}. The placement of these layers relative to the logits proves particularly crucial: applying normalization after the logits effectively prevents norm inflation and consequently mitigates \textit{rank-deficit bias}. Recent research has shown that batch normalization not only stabilizes training but also preserves layer-wise representation diversity, actively preventing rank collapse \cite{daneshmand2020batch}. These findings underscore the importance of careful normalization layer placement in controlling representation quality.

Collectively, these architectural elements establish a model's implicit "temperature" bias, directing learning trajectories independent of explicit softmax temperature settings. We provide comprehensive experimental validation of these effects in Appendix~\ref{app:architectural_hyperparameters}.

\section{Related works}
Our work connects several research areas: Neural Collapse, representation learning, \softmax temperature effects, and the interplay between out-of-distribution (OOD) generalization, detection, and model compression. We contextualize our contributions within these fields.

Neural Collapse (NC)~\cite{papyan2020neuralcollapse} describes a rigid geometric structure where penultimate layer representations converge to an equiangular tight frame (ETF) simplex when trained to the terminal phase (TPT). Subsequent work~\cite{rangamani23intermediate,parker2023neuralcollapseintermediatehidden} identified similar structures in intermediate layers (Intermediate Neural Collapse), though without explaining the underlying mechanisms. 

Our findings reveal key differences from NC: (1) rank collapse occurs early in training, well before TPT; (2) it emerges without specific regularization or hyperparameters; and (3) solutions exhibit ranks significantly lower than NC predictions. While related to NC, our results demonstrate that networks with collapsed rank only partially satisfy NC conditions, suggesting NC cannot fully explain our observations. Crucially, we provide a mechanistic explanation for rank collapse and show how to control it via hyperparameters or \softmax temperature (see Appendix~\ref{app:neural_collapse}).

Unconstrained Feature Models (UFMs) provide a simplified theoretical framework for studying Neural Collapse by treating both model parameters and input features as optimizable variables. Prior work has used UFMs to analyze Neural Collapse under class imbalance~\cite{hong2024neural} and extended them to deep architectures (DUMFs), where~\cite{sukenik2024neural} identified low-rank solutions in DUMFs before the last ReLU under high weight decay. In contrast, we study standard architectures, measuring rank directly from pre-softmax logits—the classification-relevant space—without relying on weight decay. Our results demonstrate that \textit{rank-deficit bias} is inherent in practical networks, not just theoretical models, and we offer actionable insights for controlling it via temperature and hyperparameters.

Temperature scaling has been used at inference time for sharpening output distribution for OOD tasks~\cite{velickovic2024softmax} or improving model calibration~\cite{guo2017calibration}. On the other hand, temperature tuning turned out to be crucial in self-supervised learning~\cite{chen2020simple} or private LLM inference~\cite{jha2024aerosoftmaxonlyllmsefficient}. While temperature tuning has been used in multiple different areas, to the best of our knowledge, prior work has not examined its impact on representation learning and OOD performance—a key focus of our study.

Recent work explores the compression-OOD performance trade-off~\cite{harun2024variables,masarczyk2024tunnel}, with~\cite{evci2022head2toe,masarczyk2024tunnel} demonstrating improved transfer learning via intermediate representations. Others~\cite{ammar2024neconeuralcollapsebased,haas2023linkingneuralcollapsel2,harun2025controllingneuralcollapseenhances} link stronger NC to better OOD detection, though at the cost of generalization~\cite{haas2023linkingneuralcollapsel2}. We extend these findings by showing how low logit norms, induced by temperature or architecture, affect OOD generalization and detection.

Our work unifies these perspectives, offering new insights into how \softmax shapes neural representations and suggesting directions for improving deep learning models. A more detailed related works section can be found in the Appendix~\ref{app:related_works}.

\section{Conclusions}

In this work, we have presented a systematic investigation of how the \softmax function shapes the learned representations in deep neural networks. Our primary contribution is the identification and analysis of \emph{rank-deficit bias}—a previously unrecognized phenomenon where networks trained with \softmax consistently converge to solutions whose representations have significantly lower rank than the number of classes. This finding contrasts with the full-rank solutions predicted by Neural Collapse theory, revealing an important aspect of \softmax's influence on learning dynamics.

Through extensive theoretical analysis and empirical validation across multiple architectures and datasets, we have demonstrated that this behavior is fundamentally governed by the norm of the logits at initialization. Importantly, we have shown that the \softmax temperature parameter serves as a powerful control mechanism, enabling practitioners to explicitly manage the trade-off between representation compactness and model performance. Our results provide both theoretical insights and practical tools for optimizing neural network training.

While our findings provide novel insights into \softmax's role in representation learning, this work has several limitations that point to valuable future research directions. Our analysis has focused primarily on supervised image classification, leaving open questions about how \textit{rank-deficit bias} manifests in other architectures and learning paradigms.

For instance, investigating \softmax in intermediate Transformer layers could yield insights into training stability and efficiency, particularly in NLP, where these layers share structural similarities with classification tasks yet remain understudied in this context~\cite{wu2024linguisticcollapseneuralcollapse}. Similarly, self-supervised learning methods, many of which rely on \softmax-based losses (e.g., contrastive learning), could benefit from our framework.

Finally, while our work analyzes fixed-temperature training, an important next step would be to explore dynamic temperature scheduling, which could combine the benefits of both standard and high-temperature regimes during optimization.

\begin{ack}
MO was partially supported by the National Science Centre, Poland, under a grant 2023/51/D/ST6/01609.
We gratefully acknowledge Polish high-performance computing infrastructure PLGrid (HPC Center: ACK Cyfronet AGH) for providing computer facilities and support within computational grant no. PLG/2025/017963

\end{ack}

\bibliography{bibliography}
\bibliographystyle{unsrt}

\newpage

\appendix

\section{Experimental details}\label{app:architectures}
\subsection{Architectures}

In this section, we detail the model architectures examined in the experiments and list all hyperparameters used in the experiments.

\paragraph{VGG~\cite{Simonyan15}} In the main text, we use VGG-19. The architecture consists of five stages, each consisting of a combination of convolutional layers with ReLU activation and max pooling layers. The VGG-19 has 19 layers, including 16 convolutional layers and three fully connected layers. The first two fully connected layers are followed by ReLU activation. The base number of channels in consecutive stages for VGG architectures equals 64, 128, 256, 512, and 512.

\paragraph{ResNet~\cite{he2016deep}} 
In experiments, we utilize three variants of the ResNet family of architectures, i.e., ResNet-18, ResNet-34, and ResNet-50. ResNet-$N$ is a five-staged network characterized by depth, with a total of $N$ layers. The initial stage consists of a single convolutional layer -- with kernel size $7\times7$ and 64 channels and ReLU activation, followed by max pooling $2\times2$, which reduces the spatial dimensions. The subsequent stages are composed of residual blocks. Each residual block typically contains two or three convolutional layers and introduces a shortcut connection that skips one or more layers. Each convolutional layer in the residual block is followed by batch normalization and ReLU activation. The remaining four stages in ResNet-18 and ResNet-34 architectures consist of 3x3 convolutions with the following number of channels: 64, 128, 256, and 512. ResNet-50 uses bottleneck blocks with 1x1, 3x3, and 1x1 convolutions, with channel dimensions of 256, 512, 1024, and 2048 in the four main stages. When training ResNets on CIFAR-10/CIFAR-100, we modify the kernel size of the first layer ($7 \times 7 \rightarrow 3 \times 3$) and do not use the max pooling layer. 

\paragraph{VIT-B~\cite{dosovitskiy2020image}}
The Vision Transformer (ViT-B) architecture processes images by dividing them into fixed-size patches (16x16), which are then linearly embedded. These patch embeddings are combined with positional embeddings and fed into a standard Transformer encoder. The ViT-B variant consists of 12 transformer layers, each with a hidden size of 768 and 12 attention heads. The Multi-Layer Perceptron (MLP) in each transformer block has a dimension of 3072. The model uses layer normalization before each block and residual connections around each sub-layer. Unlike convolutional networks, ViT-B relies entirely on self-attention mechanisms to model relationships between image patches, allowing it to capture both local and global dependencies in the image. When fine-tuned on CIFAR-10/CIFAR-100, we resize the images to $224 \times 224$.

\paragraph{MLP~\cite{bebis1994feed}} 
An MLP (Multi-Layer Perceptron) network is a feedforward neural network architecture type. It consists of multiple layers -- in our experiments, 8 hidden layers with ReLU activations (except the last layer, which has \softmax activation). In our experiments, the underlying architecture has 2048 neurons per layer.

\subsection{Datasets}

In this article, we present the results of experiments conducted on the following datasets:

\paragraph{CIFAR-10~\cite{Krizhevsky09learningmultiple}} 
CIFAR-10 is a widely used benchmark dataset in the field of computer vision. It consists of 60,000 color images in 10 different classes, with each class containing 6,000 images. The dataset is divided into 50,000 training images and 10,000 test images. The images in CIFAR-10 have a resolution of $32\times 32$ pixels. The dataset is released under a custom license that grants all rights to users with the only obligation being proper citation of the original work.

\paragraph{CIFAR-100~\cite{Krizhevsky09learningmultiple}} 
CIFAR-100 is a dataset commonly used for image classification tasks in computer vision. It contains 60,000 color images, with 100 different classes, each containing 600 images. The dataset is split into 50,000 training images and 10,000 test images. The images in CIFAR-100 have a resolution of $32\times 32$ pixels. Unlike CIFAR-10, CIFAR-100 offers a higher level of granularity, with more fine-grained categories such as flowers, insects, household items, and various types of animals and vehicles. The license terms for CIFAR-100 are identical to those of CIFAR-10.

\paragraph{ImageNet-100~\cite{DenDon09Imagenet}}
ImageNet-100 is a subset of the ImageNet-1k dataset, consisting of 100 randomly sampled classes while maintaining the original dataset's distribution. It contains approximately 130,000 training images and 5,000 validation images, with each class having roughly 1,300 training and 50 validation examples on average. The images vary in resolution, but we preprocess them to $224\times 224$ pixels. ImageNet-100 provides a more manageable scale for experimentation while preserving the diversity and complexity of the full ImageNet dataset. The dataset is released under the same terms as ImageNet-1k, which allows for non-commercial research use. 

\paragraph{ImageNet-1k~\cite{DenDon09Imagenet}}
ImageNet-1k (also known as ILSVRC 2012) is a large-scale dataset containing 1.2 million training images and 50,000 validation images across 1,000 object categories. Each category contains approximately 1,300 training images and 50 validation images. The images have varying resolutions but we preprocess them to $224\times 224$ pixels. ImageNet-1k has been instrumental in advancing computer vision research and serves as a standard benchmark for image classification tasks. The dataset is available for non-commercial research purposes under the terms specified by the ImageNet organization, which requires proper attribution and prohibits commercial use without additional permissions.

\subsection{Data Augmentations}

For CIFAR-10 and CIFAR-100, we applied standard preprocessing with normalization, along with the following data augmentations:
\begin{itemize}
    \item Random cropping:
    \begin{itemize}
        \item Size: $32 \times 32$
        \item Padding: $4$
    \end{itemize}
    \item RandomHorizontalFlip: Applied with $p = 0.5$
\end{itemize}

For models trained on ImageNet-100 and ImageNet-1k, we used a more extensive augmentation pipeline, implemented via the \texttt{Timm} library~\cite{rw2019timm}. The full specifications can be found in the library's repository~\url{https://github.com/rwightman/timm}. The augmentations included:
\begin{itemize}
    \item RandomResizedCropAndInterpolation:
    \begin{itemize}
        \item Size: $224 \times 224$
        \item Scale range: $(0.08, 1.0)$
        \item Aspect ratio range: $(0.75, 1.3333)$
        \item Interpolation: Bilinear/Bicubic
    \end{itemize}
    \item RandomHorizontalFlip: Applied with $p = 0.5$
    \item ColorJitter:
    \begin{itemize}
        \item Brightness adjustment: $(0.6, 1.4)$
        \item Contrast adjustment: $(0.6, 1.4)$
        \item Saturation adjustment: $(0.6, 1.4)$
        \item Hue adjustment: None
    \end{itemize}
\end{itemize}

When fine-tuning a pretrained ViT-B for the experiments in Section~\ref{sec:generalization_vs_detection}, we used the augmentations proposed by the authors of the \texttt{NECO} method~\cite{ammar2024neconeuralcollapsebased}. The data transformations were implemented as follows:

\begin{itemize}
    \item RandomResizedCropAndInterpolation:
    \begin{itemize}
        \item Output size: $224 \times 224$
        \item Scale range: $(0.05, 1.0)$
    \end{itemize}
    \item Normalization with mean $[0.5, 0.5, 0.5]$ and std $[0.5, 0.5, 0.5]$
\end{itemize}

\subsection{Hyperparameters}

The hyperparameters used for neural network training are listed in Table~\ref{tab:cifar10_100_hparams}. Each column corresponds to a different combination of architecture and dataset. The values provided are optimized for achieving the best performance on the CIFAR-10 dataset~\cite{liu2019rethinkingvaluenetworkpruning}. For ImageNet, we used the default hyperparameters recommended by the \texttt{timm} library~\cite{rw2019timm}.

Notably, for all experiments except those on ImageNet, the high-temperature models were trained using the same hyperparameters as their baseline counterparts. This ensures that any performance differences can be attributed solely to the temperature adjustment. However, we anticipate that further improvements could be achieved by fine-tuning hyperparameters specifically for high-temperature training—a promising direction for future work.

In the case of ImageNet experiments with high-temperature training, we adjusted only the number of epochs and learning rate to mitigate the slowdown caused by increased temperature, while keeping other hyperparameters unchanged.

\begin{table}[!h]
    \centering
    \begin{tabular}{c|c|c|c|c}
        \hline
        & \multicolumn{3}{c}{CIFAR-10/CIFAR-100} & ImageNet \\
        \hline
        Parameter & VGG & ResNet & MLP & ResNet \\
        \hline
        Learning rate (LR) & $0.1$ & $0.1$  &  $0.05$ & $1.6$ \\ 
        SGD momentum & $ 0.9$& $0.9$ & $0.0$ & $0.9$ \\
        Weight decay & $10^{-4}$ & $10^{-4}$ & $0$ & $10^{-4}$ \\
        Number of epochs & 160& 164  &  100 & 120 \\
        Mini-batch size & 128& 128 & 128 & 1024\\
        LR-decay-milestones & 80, 120& 82, 123 &  - & 30, 60, 90 \\
        LR-decay-gamma & 0.1& 0.1& 0.0 & 0.1 \\
        \hline
    \end{tabular}
    \caption{Hyperparameters used during training the models on different datasets.}
    \label{tab:cifar10_100_hparams}
\end{table}

\subsection{Experimental methodology}

Our analysis primarily relies on quantities derived from linear probing accuracies and numerical rank estimates. To obtain these, we collect representations from each layer following the nonlinear activation function, with the exception of the final layer, where we compute statistics both before and after applying the \softmax operation.

\begin{wraptable}[10]{r}{40mm}
\centering
\centering
\resizebox{\linewidth}{!}{%
\begin{tabular}{c|c}
\hline
Parameter & Value \\
\hline
Learning rate & $0.001$ \\ 
Weight decay & $10^{-3}$ \\
Number of epochs & 50 \\
Mini-batch size & 4096 \\
\hline
\end{tabular}
}
\caption{Hyperparameters for training linear probes on representations.}
\label{tab:linear_probes_hparams}
\end{wraptable}

We note significant variability in numerical rank computations depending on experimental design. Consistently, we observe that the rank of pre-activations (prior to ReLU) is substantially lower than that of post-activation representations. Furthermore, while some researchers compute the numerical rank of the representation covariance matrix rather than the representations themselves, this approach typically yields lower rank estimates while losing the interpretability of the rank as a proxy for the number of linearly independent features or samples at a given layer. This interpretation is particularly crucial for our analysis of \textit{rank-deficit bias}, which is why we consistently report the rank of the activation matrix itself rather than its covariance.

For both linear probing and numerical rank computations, we used a feature subset with size equal to the minimum of 10,000 or the full feature dimension (of flattened representations in case of convolutions). When computing solution ranks, we evaluated this quantity on the complete dataset, with the sole exception of ImageNet-1k, where we used a subset of 100,000 samples.

The hyperparameters for our linear probing experiments are detailed in Table~\ref{tab:linear_probes_hparams}. All probes were trained using the Adam optimizer.

All reported statistics represent averages across 3 runs with identical hyperparameters but different random seeds. Where applicable, we include the standard deviation across these runs.

\subsection{Computational resources}

Given the extensive scope of our experimental evaluation, we refrain from specifying exact computational requirements for individual experiments. Instead, we provide an overview of the infrastructure used across our study. 

Our investigation proceeded in two phases:
\begin{itemize}
    \item \textbf{Preliminary exploration} was conducted using a single NVIDIA GeForce RTX 2080 Ti GPU (11GB), accumulating approximately 500 GPU-hours of compute time.
    \item \textbf{Full-scale experiments} utilized two multi-GPU configurations with the following GPUs:
    \begin{itemize}
        \item NVIDIA RTX A5000 (24GB)
        \item NVIDIA GH200 (96GB)
    \end{itemize}

\end{itemize}
The complete experimental setup required approximately 50,000 GPU-hours of compute time.

\subsection{Logits norm comparison of randomly initialized models}\label{app:initial_norms_vgss_resnets}

\begin{wrapfigure}[23]{r}{0.6\textwidth}
    \centering
    \includegraphics[width=\linewidth]{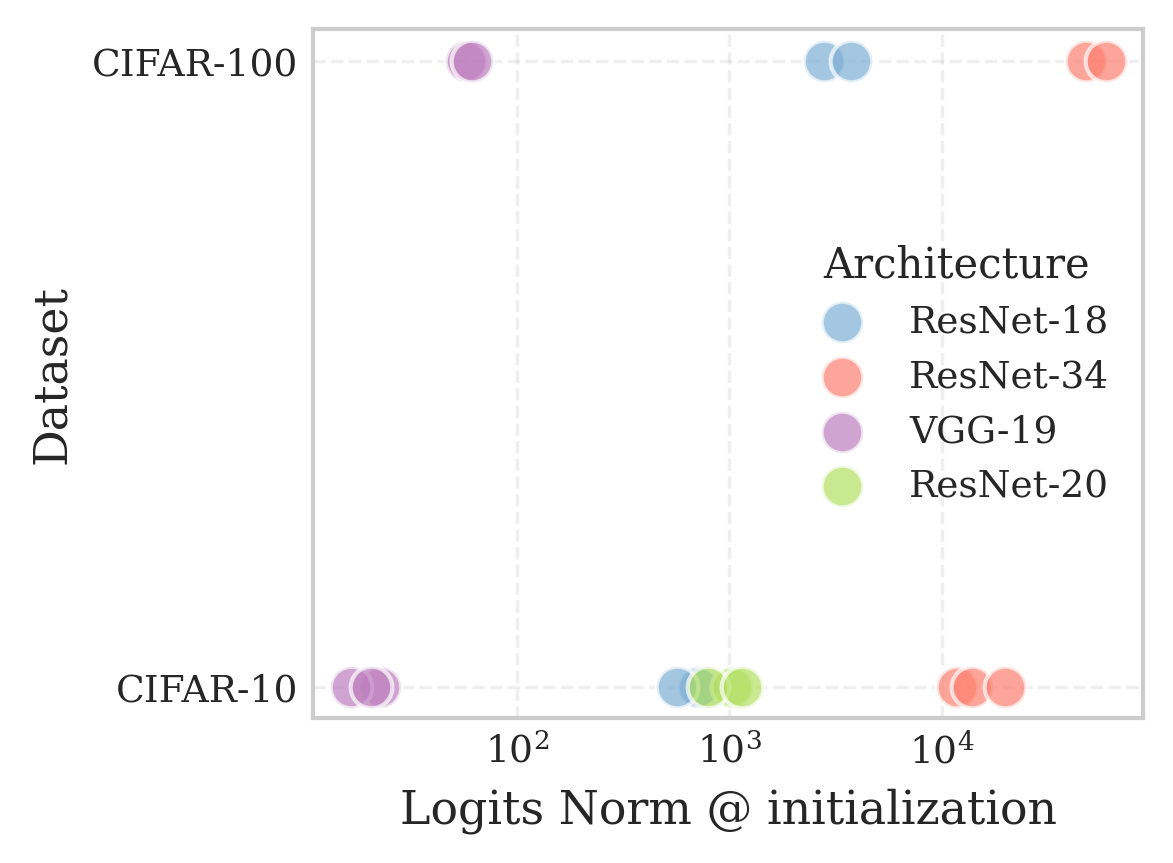}
    \caption{VGG-19 has orders of magnitude lower logit norm compared to models from the ResNet family, leading to a \textit{rank-deficit bias} for baseline models trained with $T=1$. Each point represents a single randomly initialized network.}
    \label{fig:initial_norms_vgss_resnets}
\end{wrapfigure}

Figure~\ref{fig:initial_norms_vgss_resnets} confirms our hypothesis that VGG-19 exhibits lower logit norms than ResNet variants on both CIFAR-10 and CIFAR-100. To investigate whether increasing the logit norm via low-temperature training affects representation learning, we calibrate the temperature for VGG-19 to match ResNet-18's logit norm on CIFAR-10. As shown in Figure~\ref{fig:low_temperature_vgg}, this adjustment yields the expected improvement in reducing the collapse of representations (or at least pushing it to deeper layers). However, even with matched logit norms, VGG-19 and ResNet-18 learn distinct solutions, underscoring the architectural influence on representation learning. We would like to stress the fact that while the logit norms serve as a useful proxy for comparing models of the same architecture and their post-training behavior, they do not generalize across different backbones.

\begin{figure}[!h]
    \centering
    {
    \includegraphics[width=0.48\textwidth]{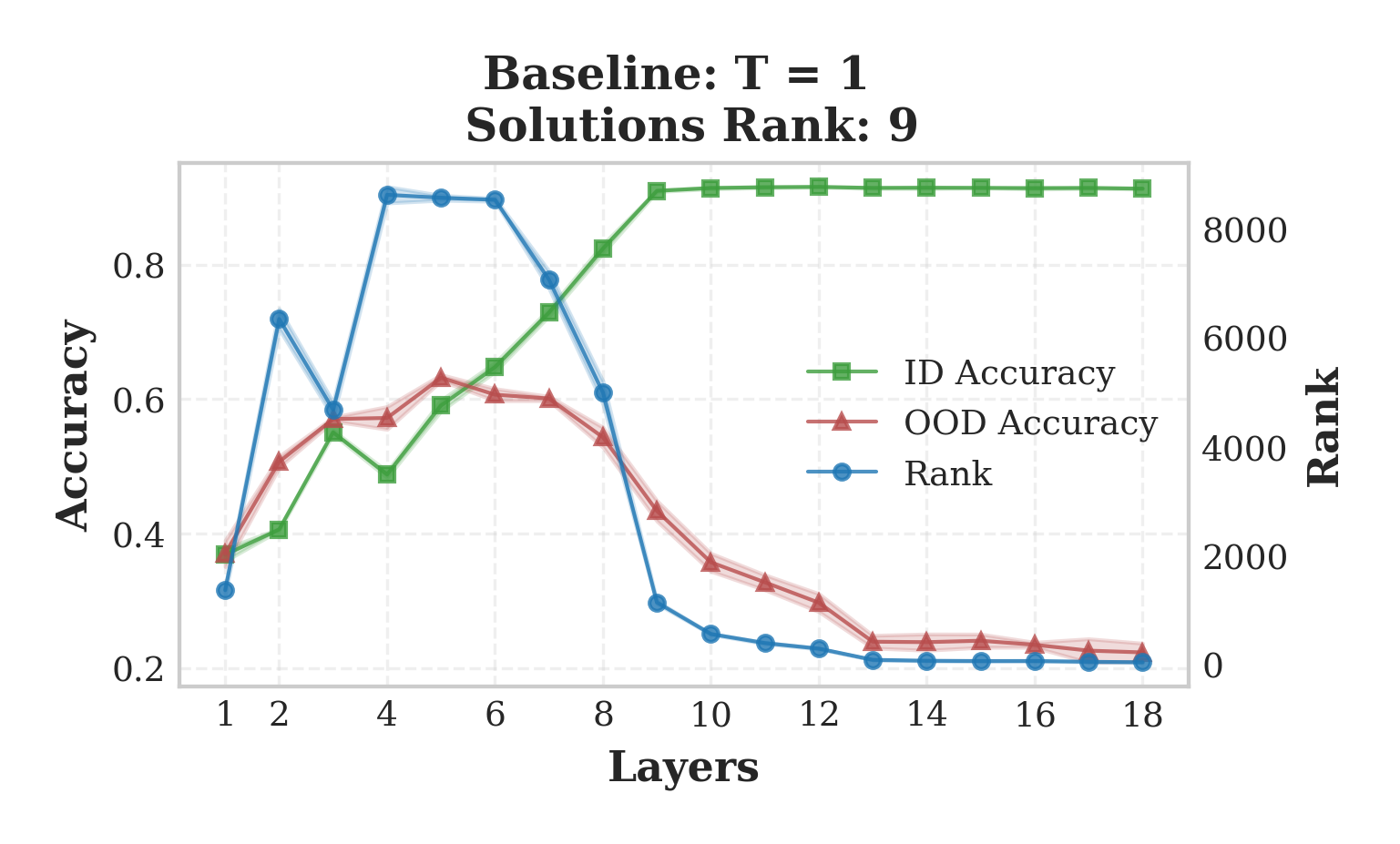}
    }
    {
    \includegraphics[width=0.48\textwidth]{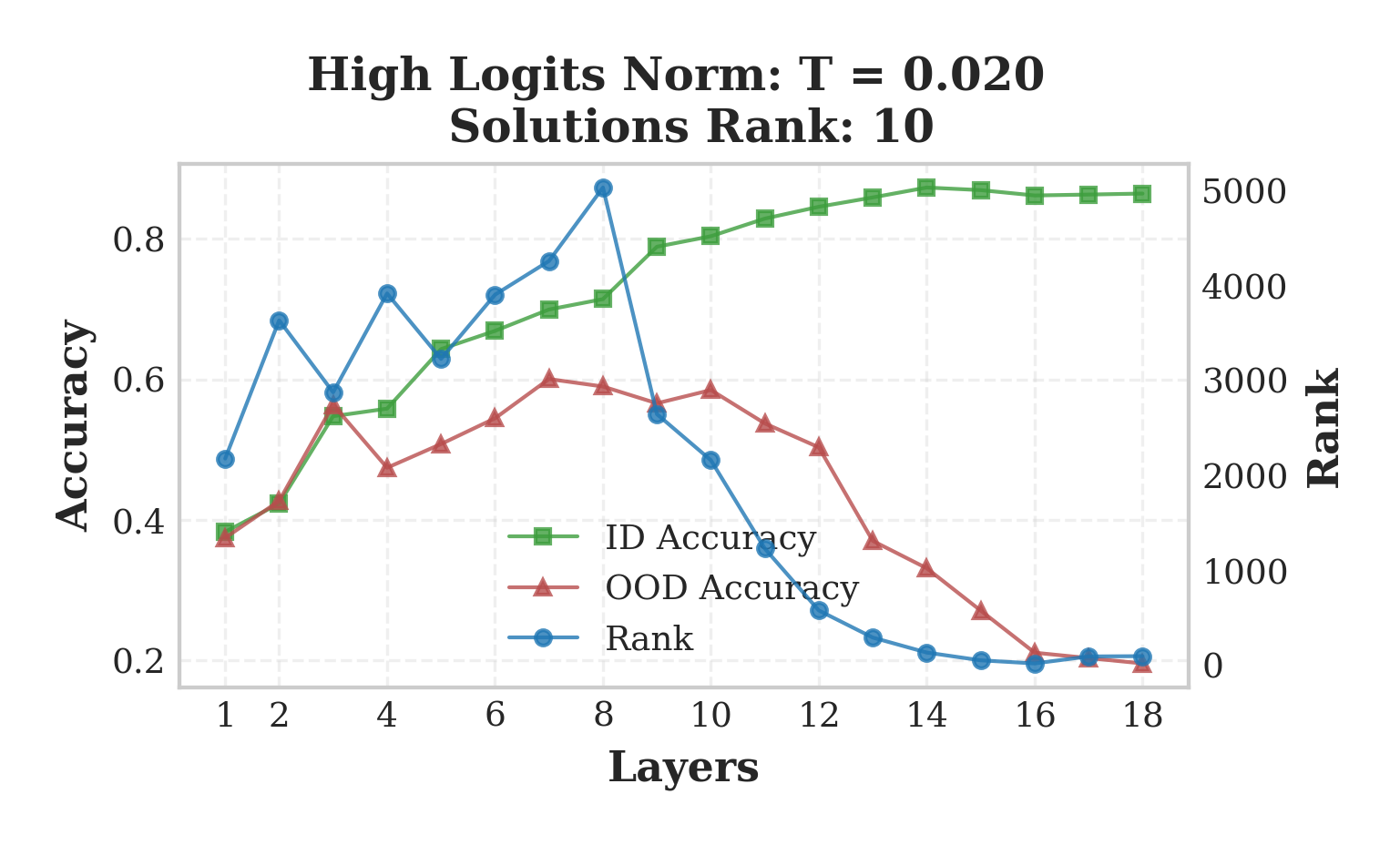}
    }
    \caption{VGG-19 trained with low temperature (right) reduces the effect of collapse, making deeper layers more active in building representations compared to a baseline model (left), which collapses due to low logits norm. } 
    \label{fig:low_temperature_vgg}
\end{figure}

\section{Finegrained alignment}\label{app:pabs}

Figure~\ref{fig:finegrained_alignment} reveals a striking difference in layer alignment dynamics between standard and high-temperature training. The visualization tracks the evolution of inter-layer alignment (y-axis: training progress, x-axis: layer depth) through cosine similarity matrices of the top-15 singular vectors between consecutive layers' representations and weights.

The baseline model shows negligible alignment throughout training (Figure~\ref{fig:alignment}, top), while the high-temperature model develops significant alignment early in training. This alignment emerges first among the dominant singular vectors and gradually propagates to others, though even after 50 epochs it remains concentrated in the top singular vectors - a pattern that appears to constrain learning as other components stay largely orthogonal.

\begin{figure}[!h]
    \centering
    {
    \includegraphics[width=0.48\textwidth]{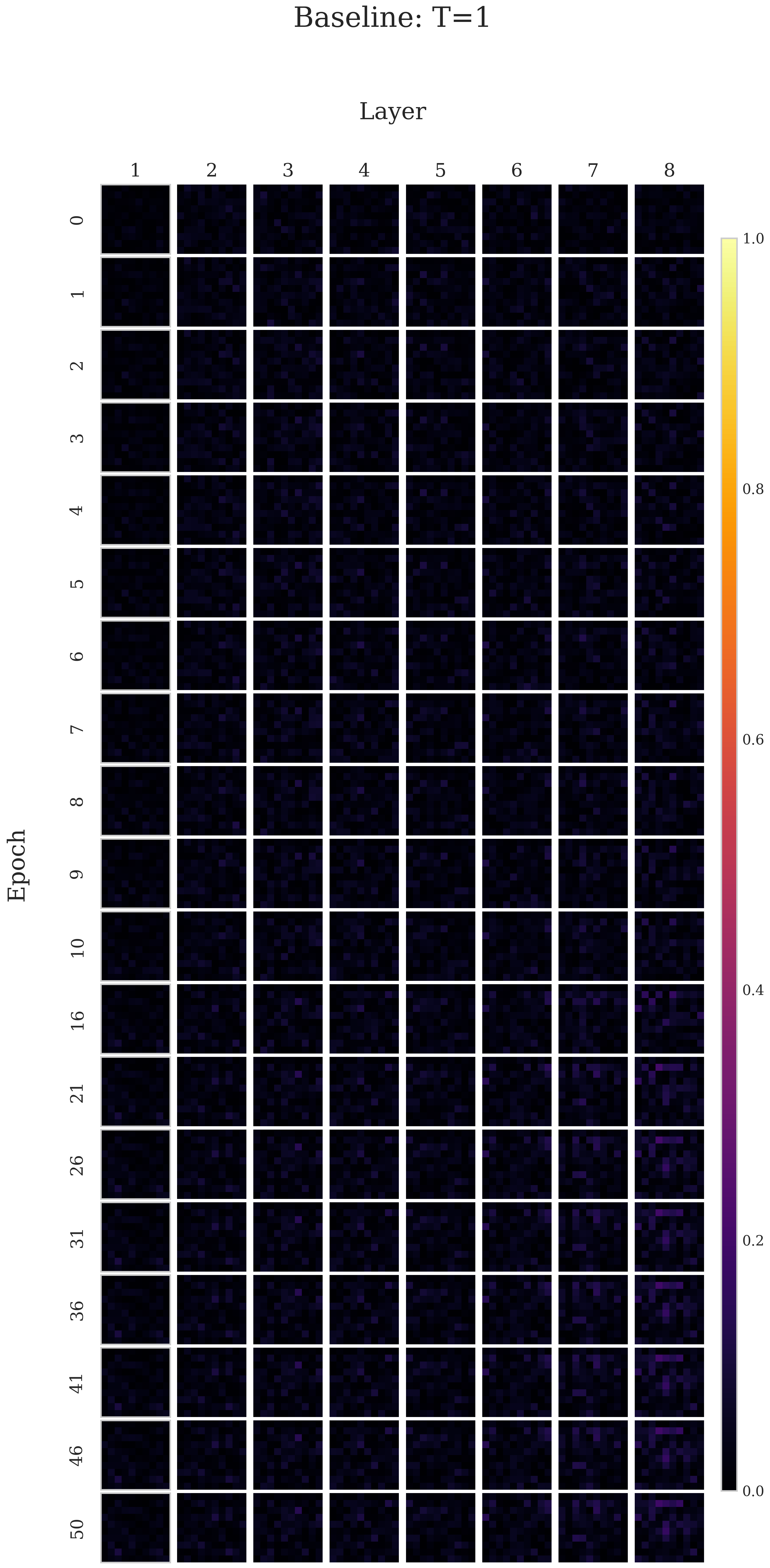}
    }
    {
    \includegraphics[width=0.48\textwidth]{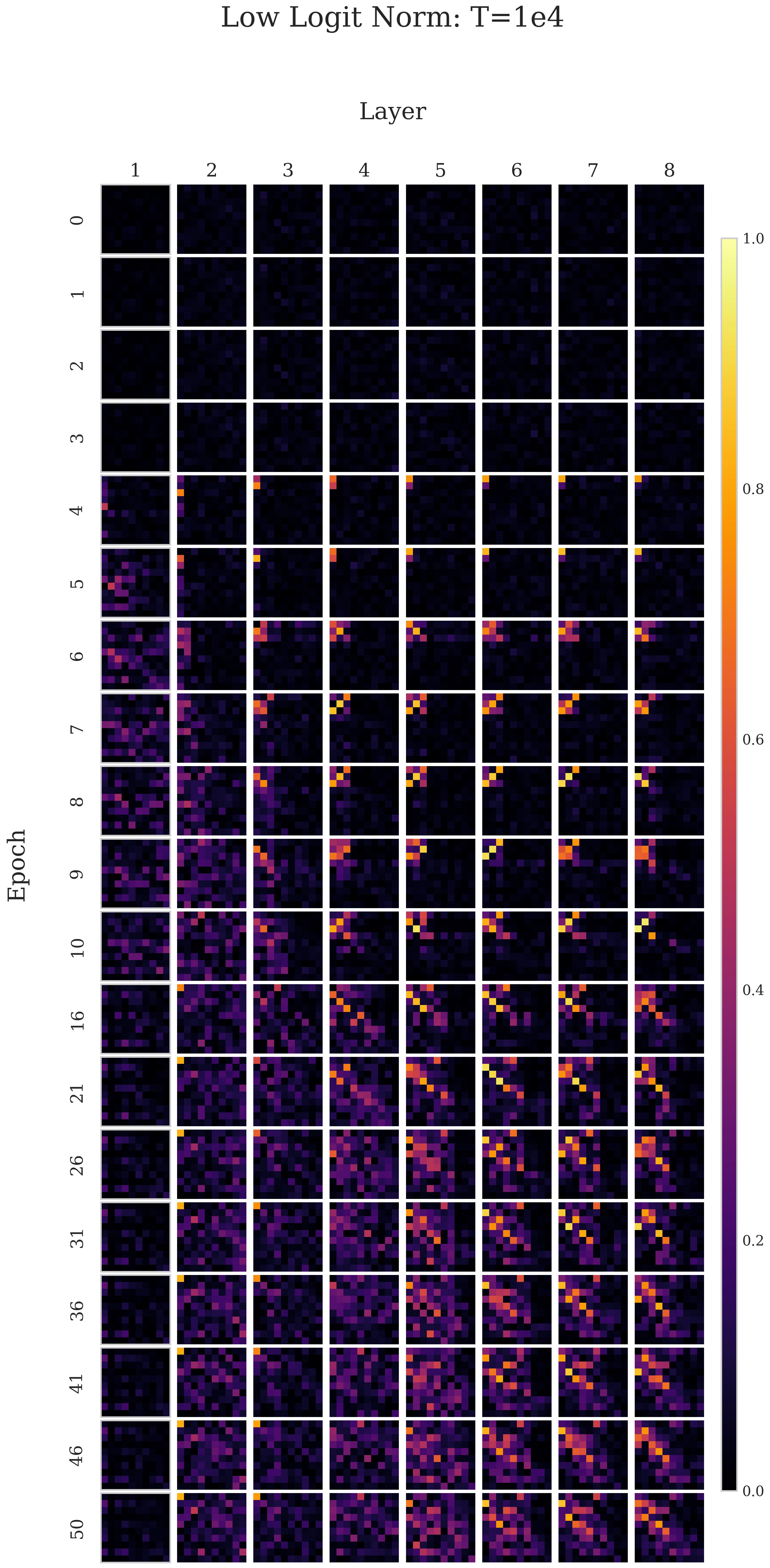}
    }
    \caption{\looseness-1 \small \textbf{The alignment (cosine similarity) between singular vectors of weights and representations} forms during initial training epochs (rows) at deeper layers of the model (columns) trained with high temperature (right), in contrast to the baseline model (left). The plot presents the training process of an MLP network trained on CIFAR-10.} 
    \label{fig:finegrained_alignment}
\end{figure}

For ResNet-18 on CIFAR-100 (Figure~\ref{fig:finegrained_alignment_resnet_high} and Figure~\ref{fig:finegrained_alignment_resnet_baseline}) while the specific alignment pattern differs, the fundamental trend persists: high-temperature training induces substantially stronger alignment, particularly in the final layer and among top singular vectors across all layers, compared to the weaker alignment in baseline models. This consistent observation across architectures strongly suggests that temperature scaling systematically influences neural network learning dynamics at a fundamental level.

\newpage

\section{Additional experiments}\label{app:remaining_experiments}

\subsection{ResNet-18}

\begin{figure}[!h]
    \centering
    {
    \includegraphics[width=0.49\textwidth]{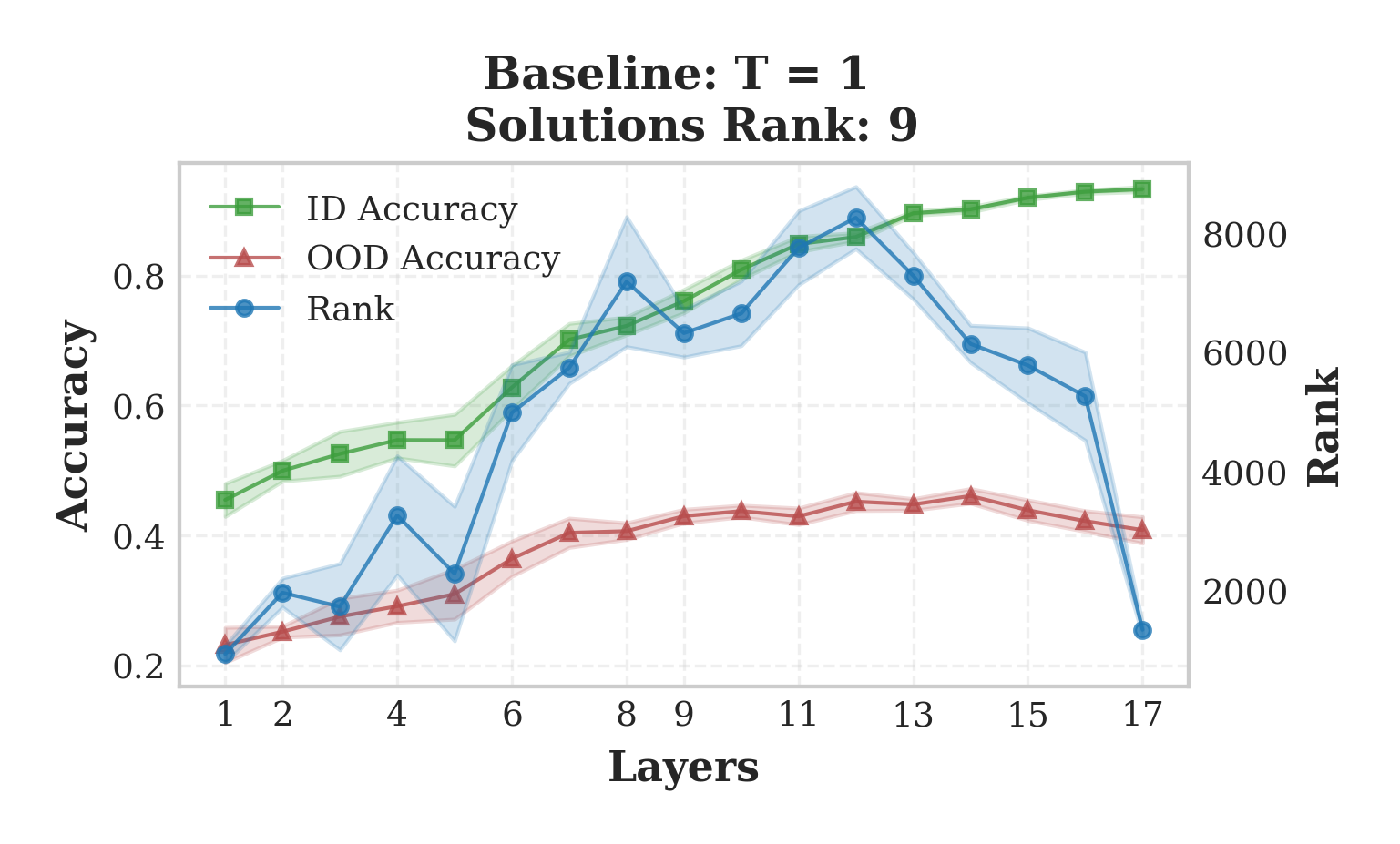}
    }
    {
    \includegraphics[width=0.49\textwidth]{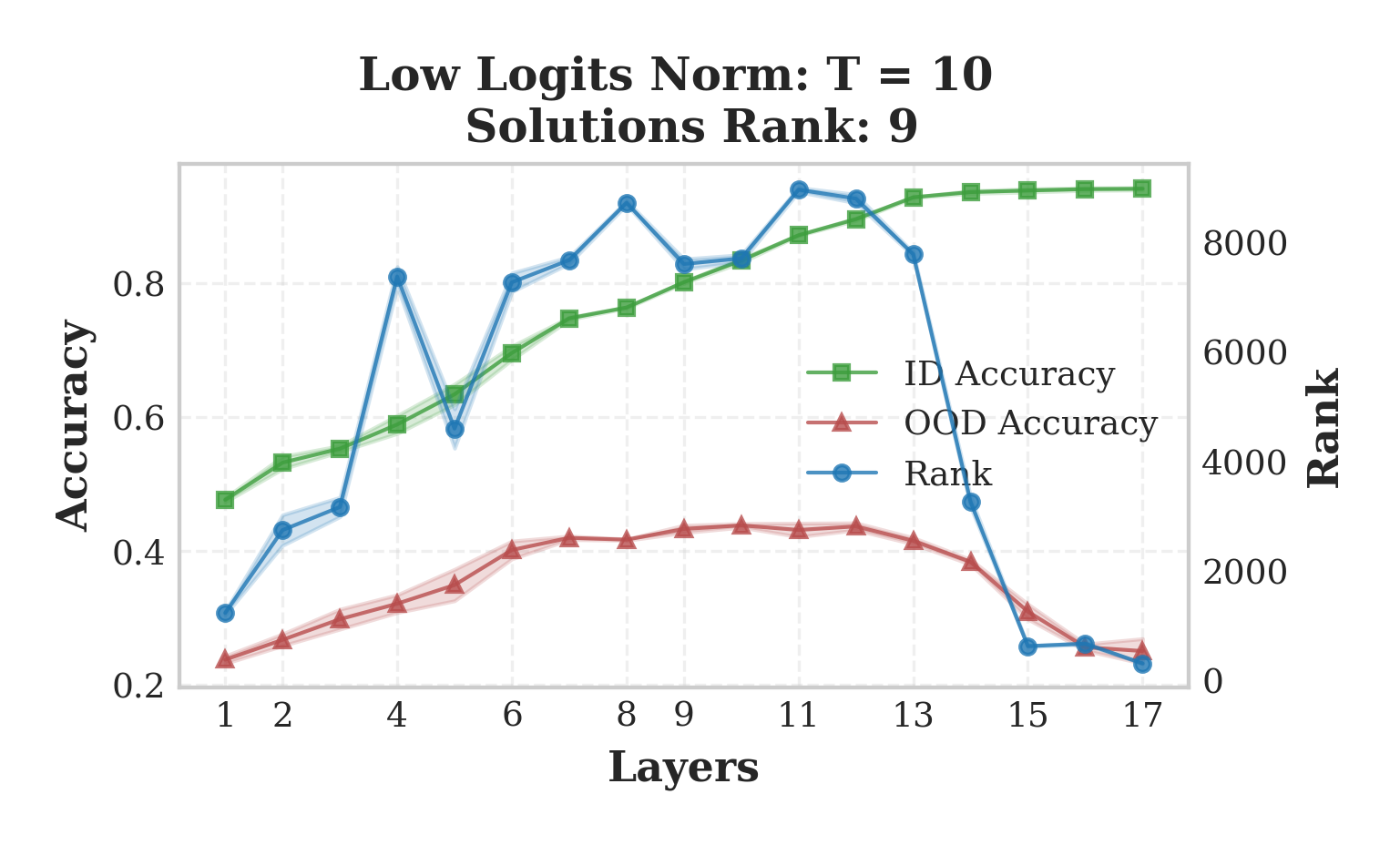}
    }
        {
    \includegraphics[width=0.49\textwidth]{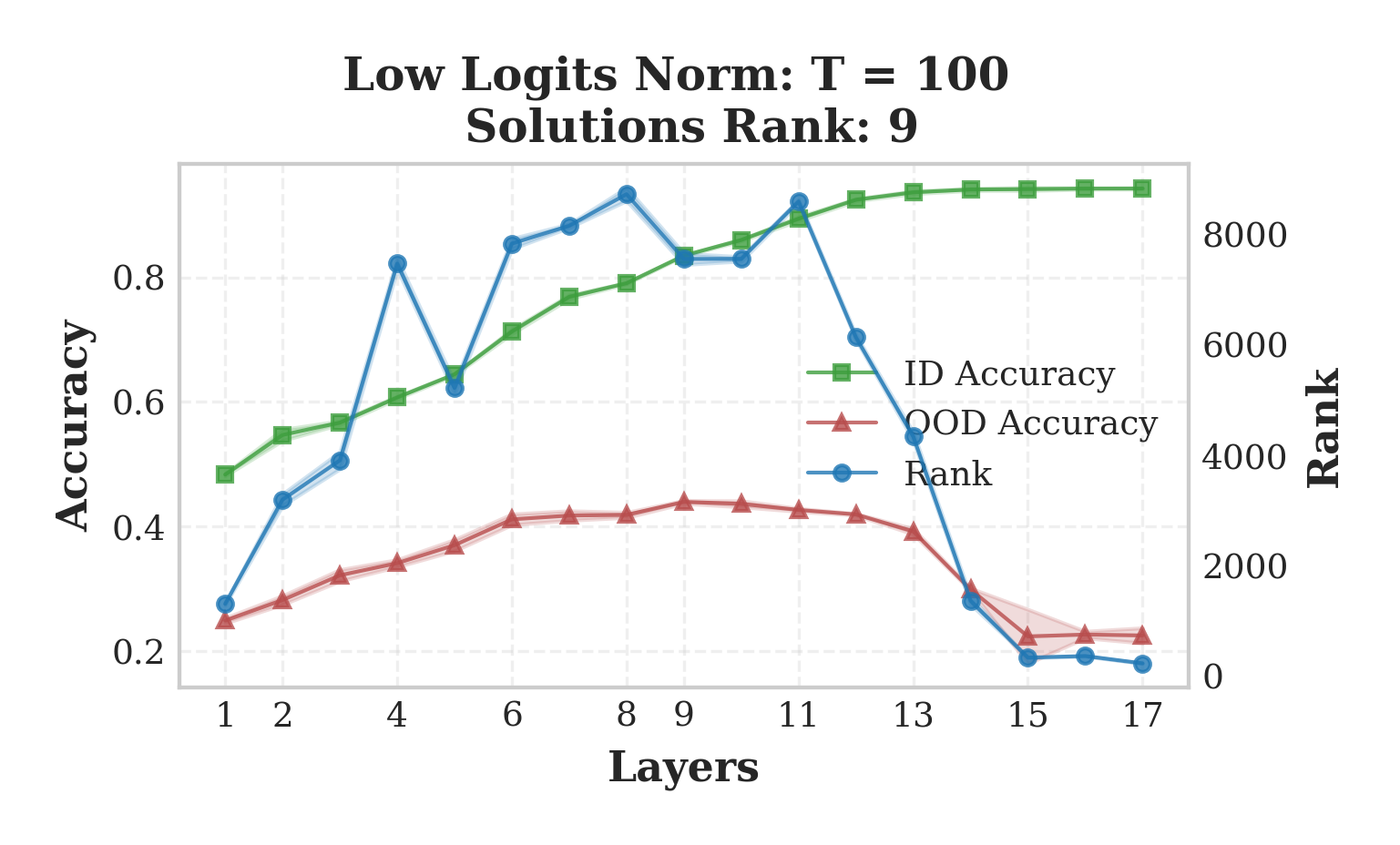}
    }
    {
    \includegraphics[width=0.49\textwidth]{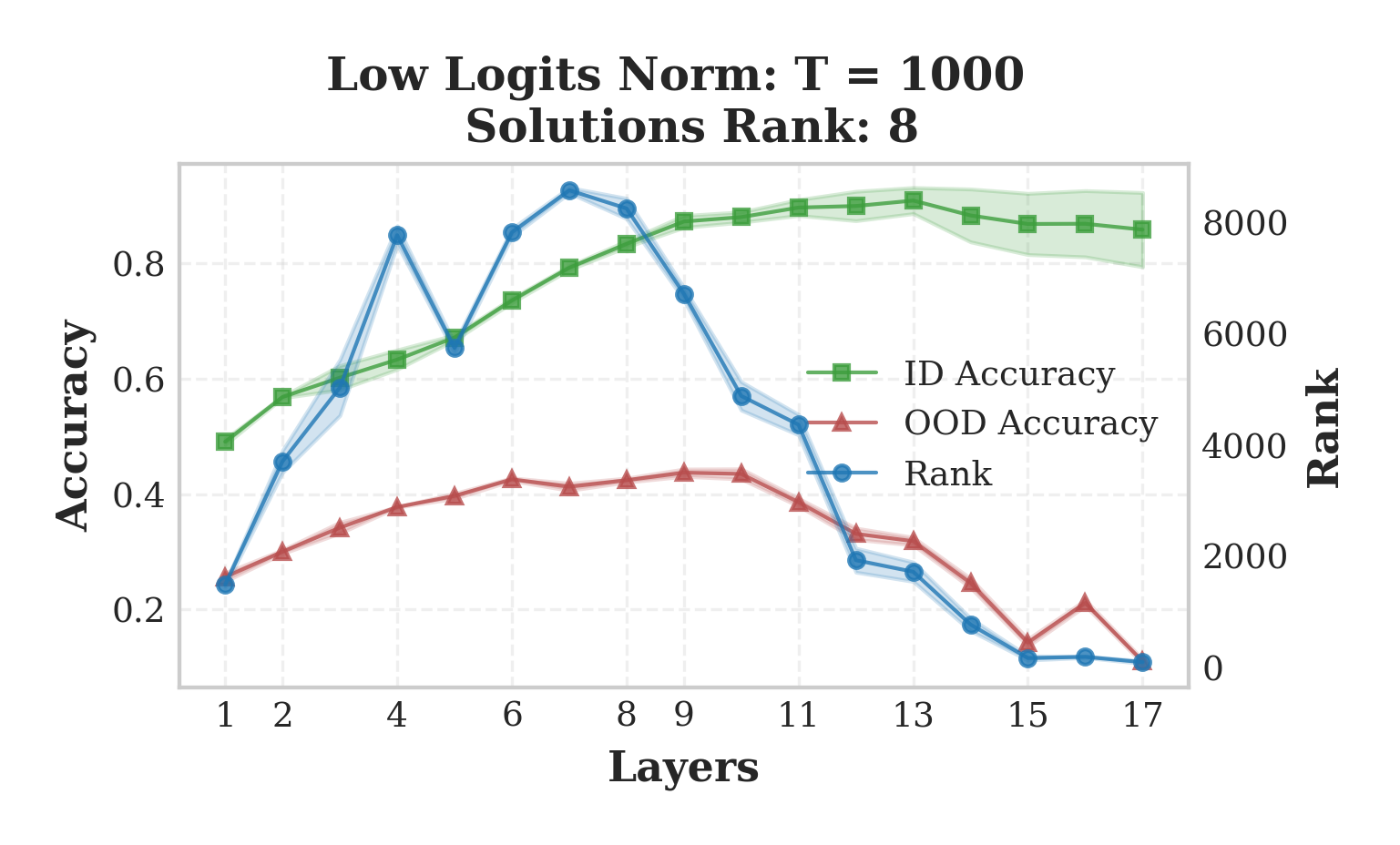}
    }
    \caption{Plot presents the impact of training with high temperature on learned representations and the model's ability to generalize OOD. The higher the temperature, the lower the solutions' rank found by the model. Experiment: ResNet-18 trained on CIFAR-10.} 
\end{figure}

\begin{figure}[!h]
    \centering
    {
    \includegraphics[width=0.49\textwidth]{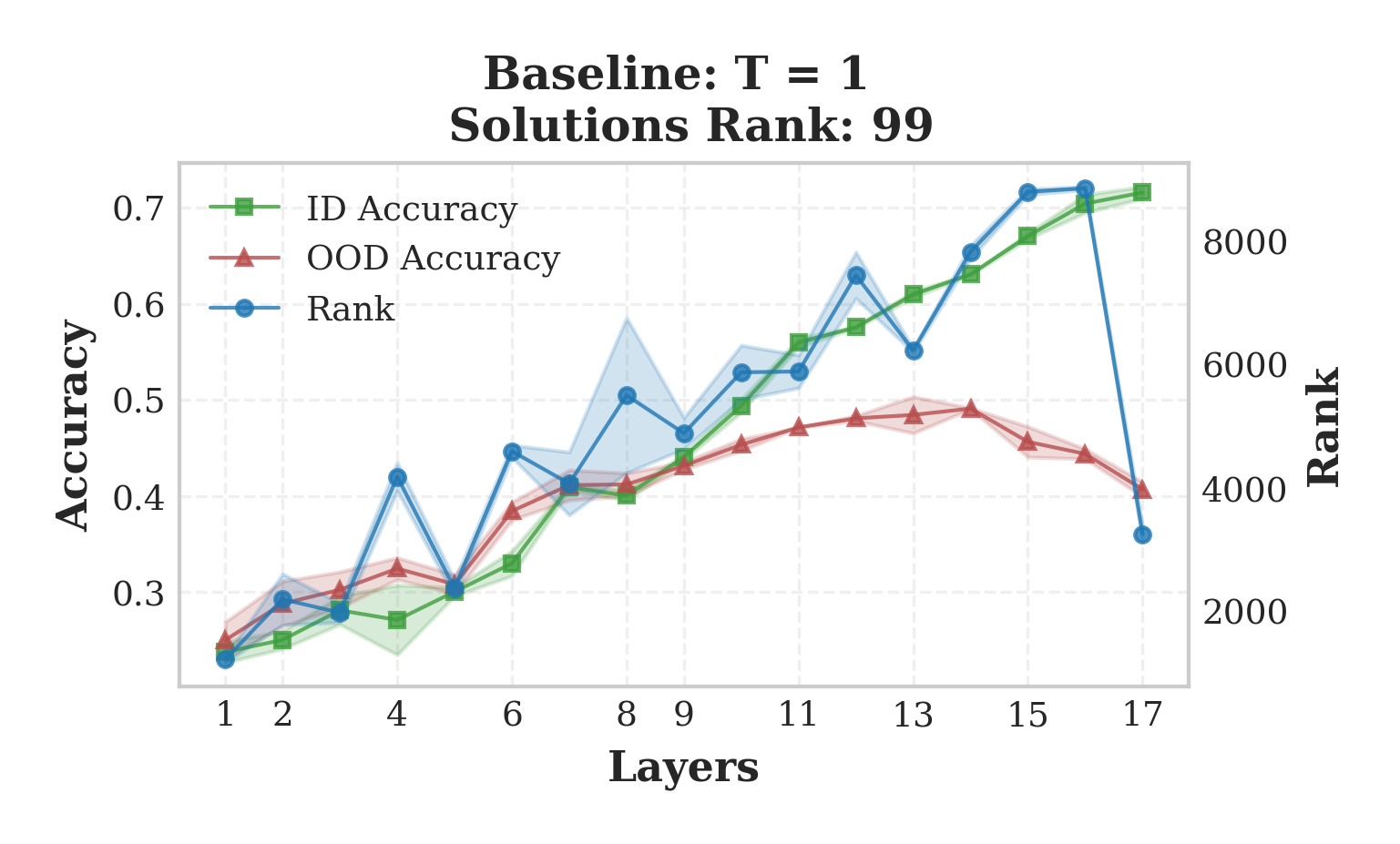}
    }
    {
    \includegraphics[width=0.49\textwidth]{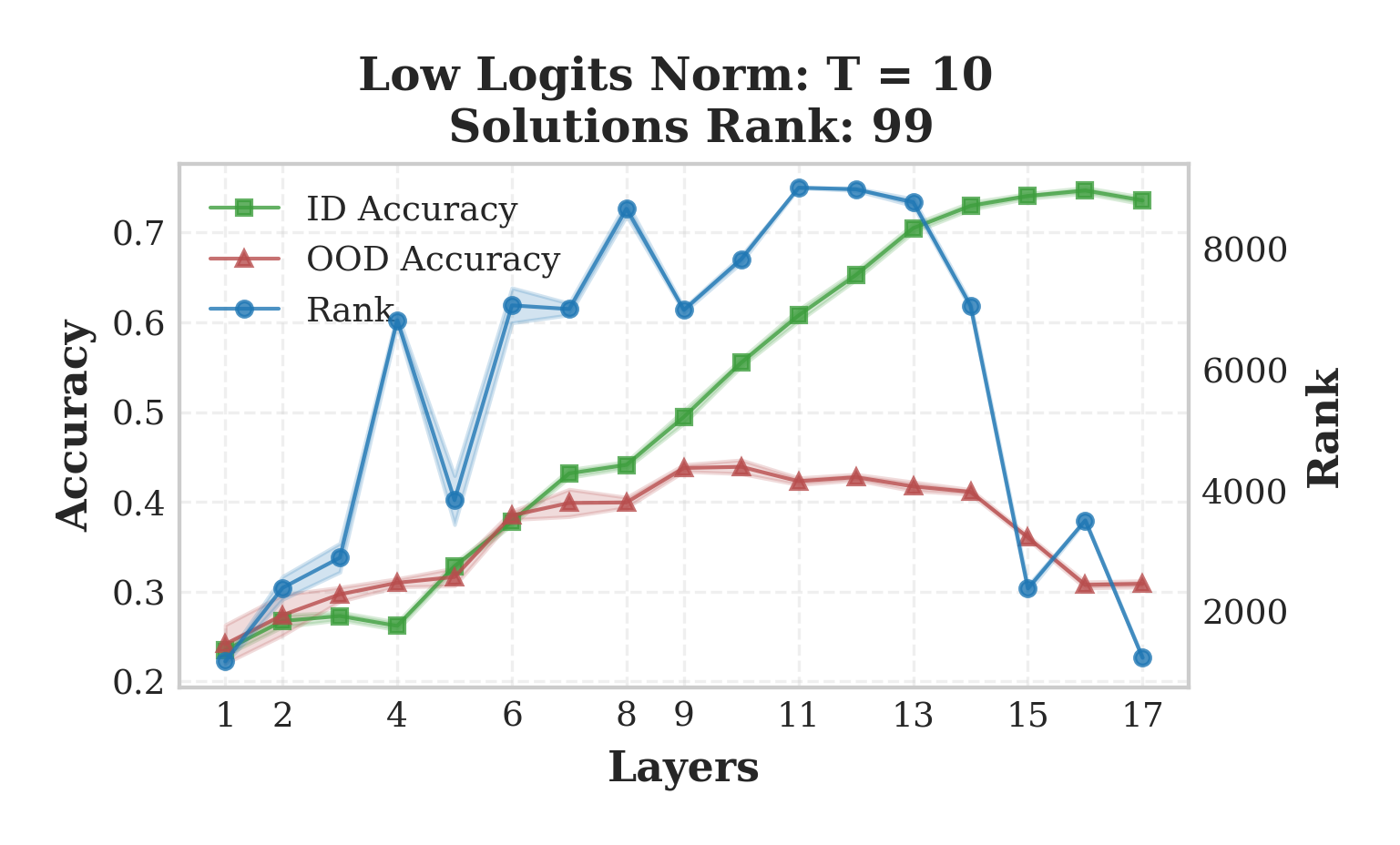}
    }
        {
    \includegraphics[width=0.49\textwidth]{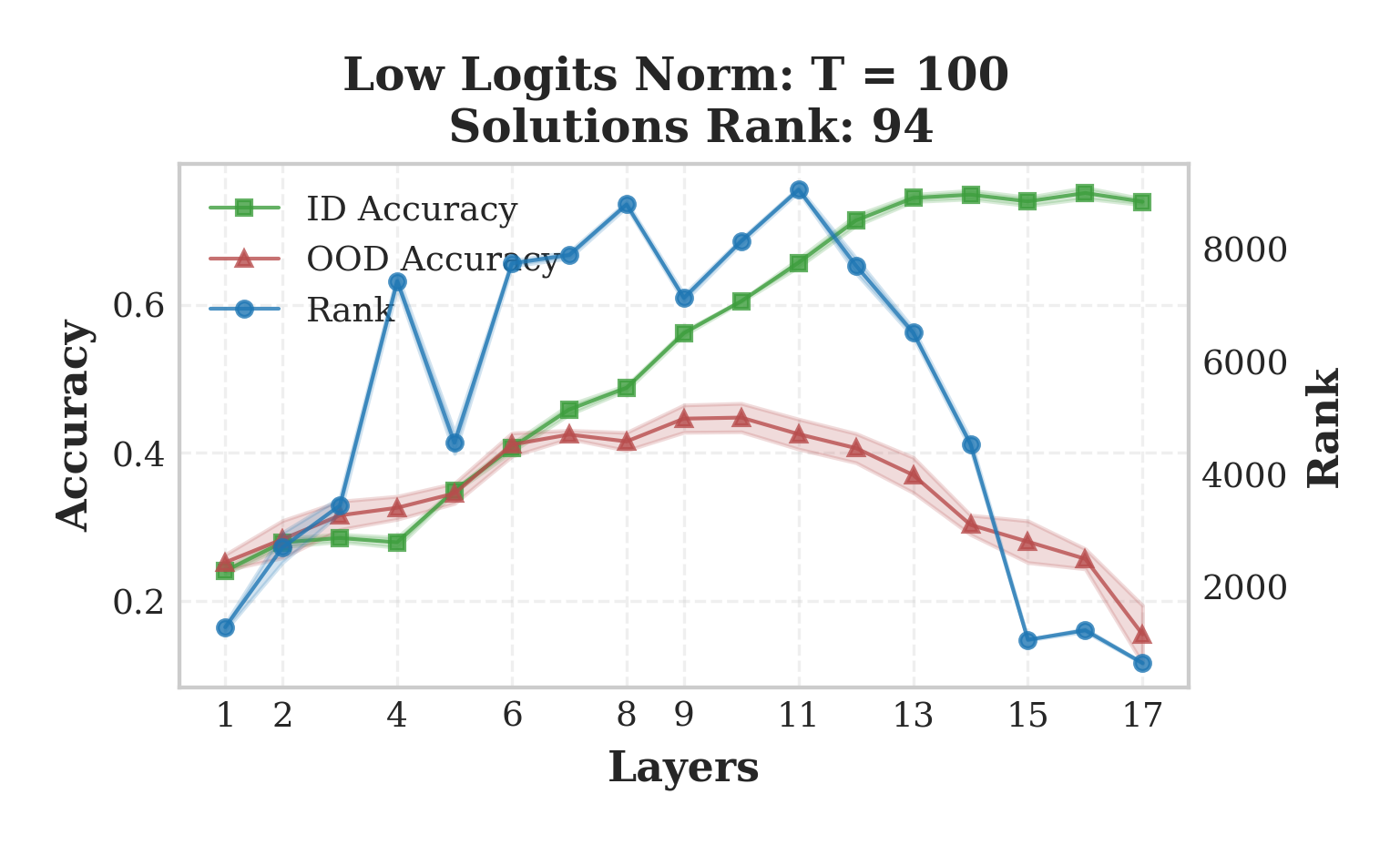}
    }
    {
    \includegraphics[width=0.49\textwidth]{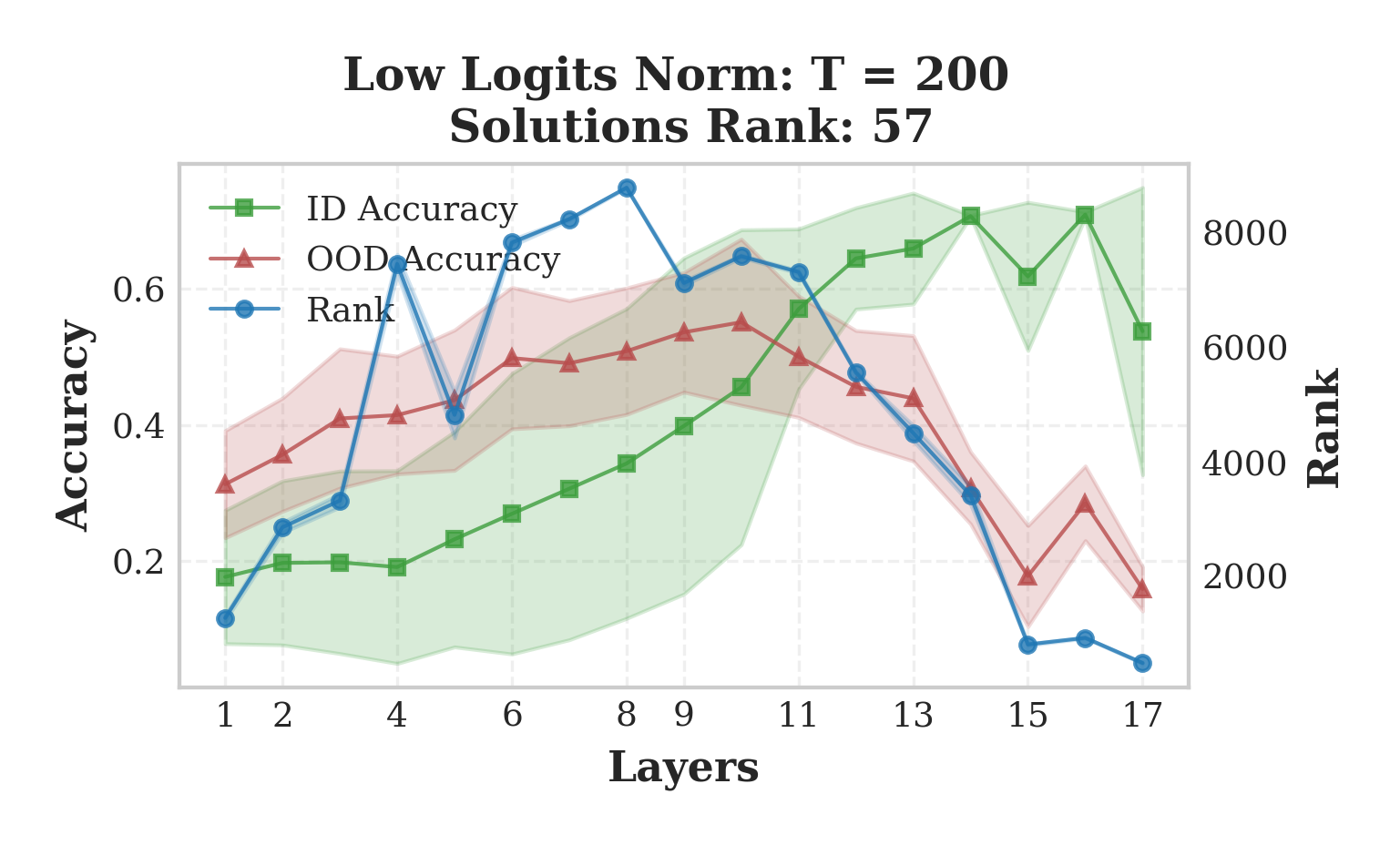}
    }
    \caption{Plot presents the impact of training with high temperature on learned representations and the model's ability to generalize OOD. The higher the temperature, the lower the solutions' rank found by the model. Experiment: ResNet-18 trained on CIFAR-100.} 
\end{figure}

\subsection{ResNet-20}

\begin{figure}[!h]
    \centering
    {
    \includegraphics[width=0.49\textwidth]{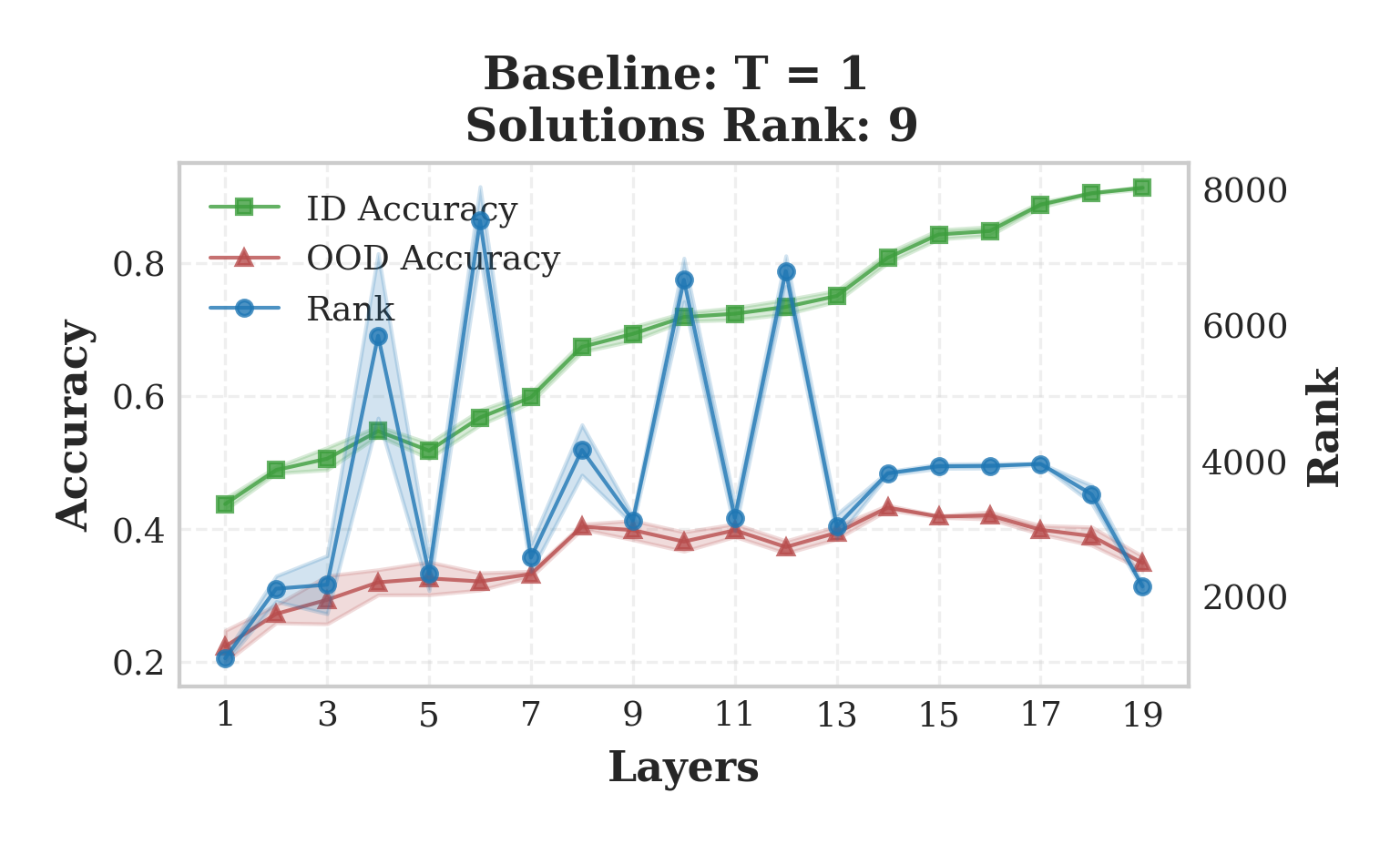}
    }
    {
    \includegraphics[width=0.49\textwidth]{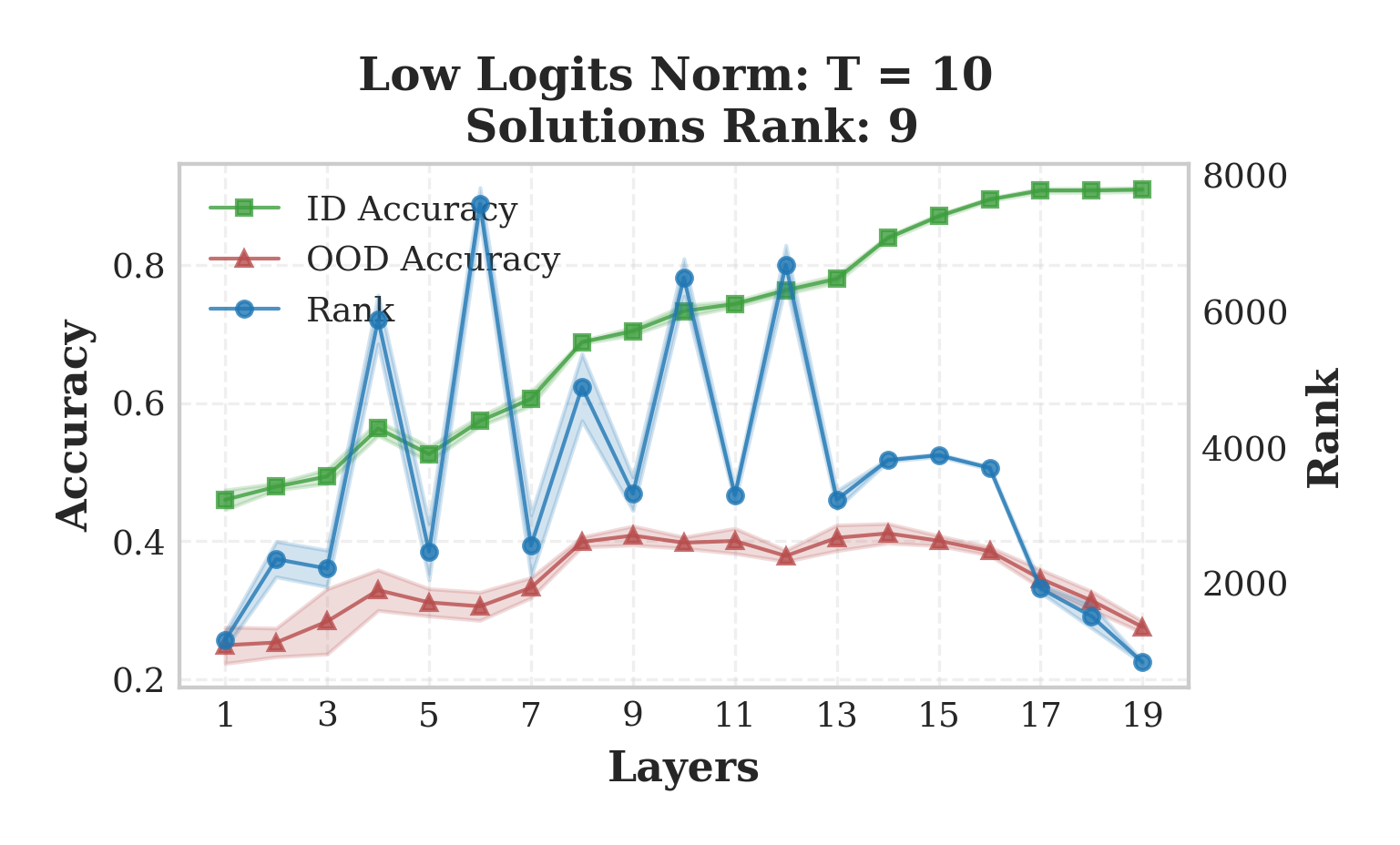}
    }
        {
    \includegraphics[width=0.49\textwidth]{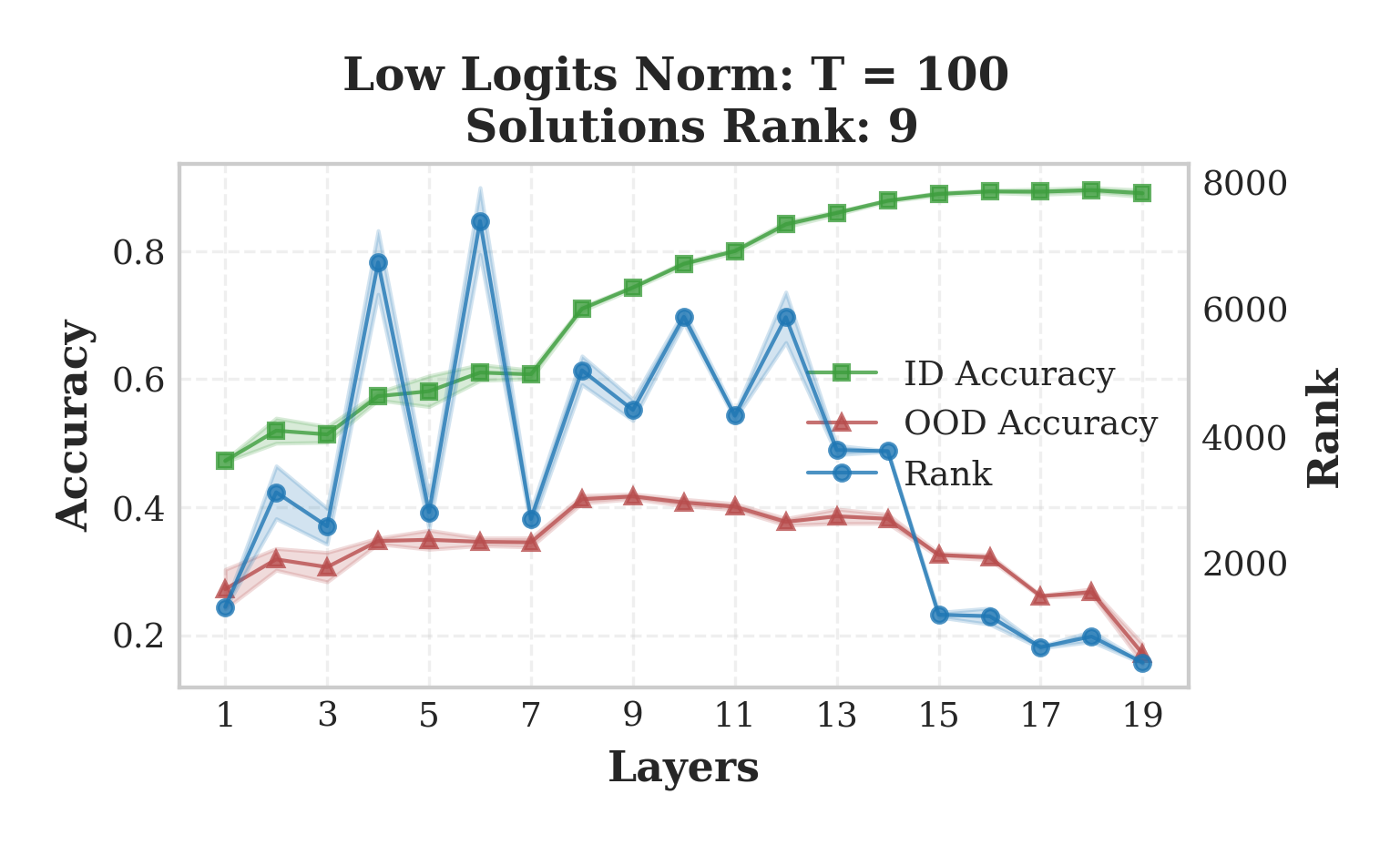}
    }
    {
    \includegraphics[width=0.49\textwidth]{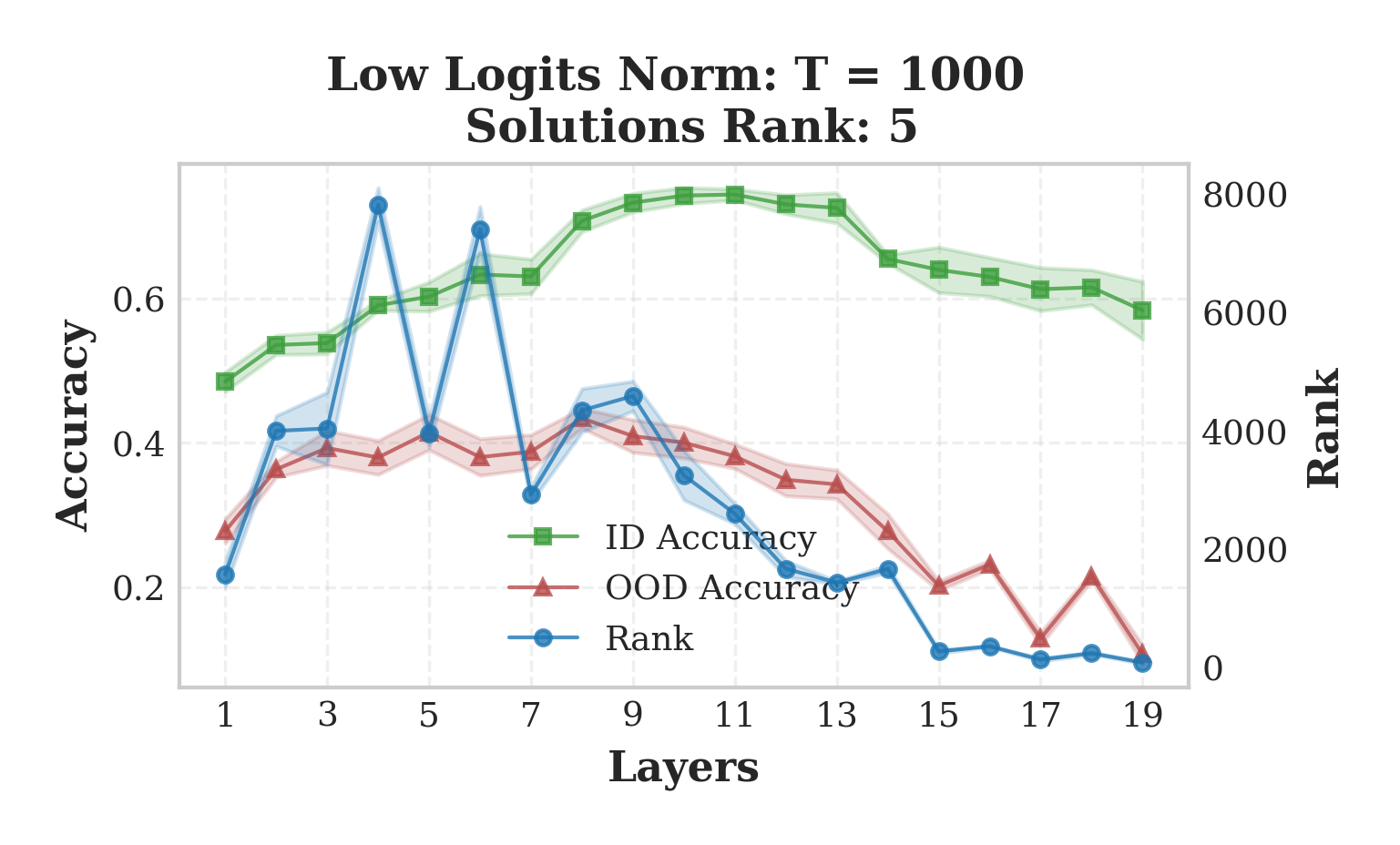}
    }
    \caption{\small{Plot presents the impact of training with high temperature on learned representations and the model's ability to generalize OOD. The higher the temperature, the lower the solutions' rank found by the model. Experiment: ResNet-20 trained on CIFAR-10. The right bottom plot presents an unsuccessful training case.}}
\end{figure}

\subsection{ResNet-34}

\begin{figure}[!h]
    \centering
    {
    \includegraphics[width=0.49\textwidth]{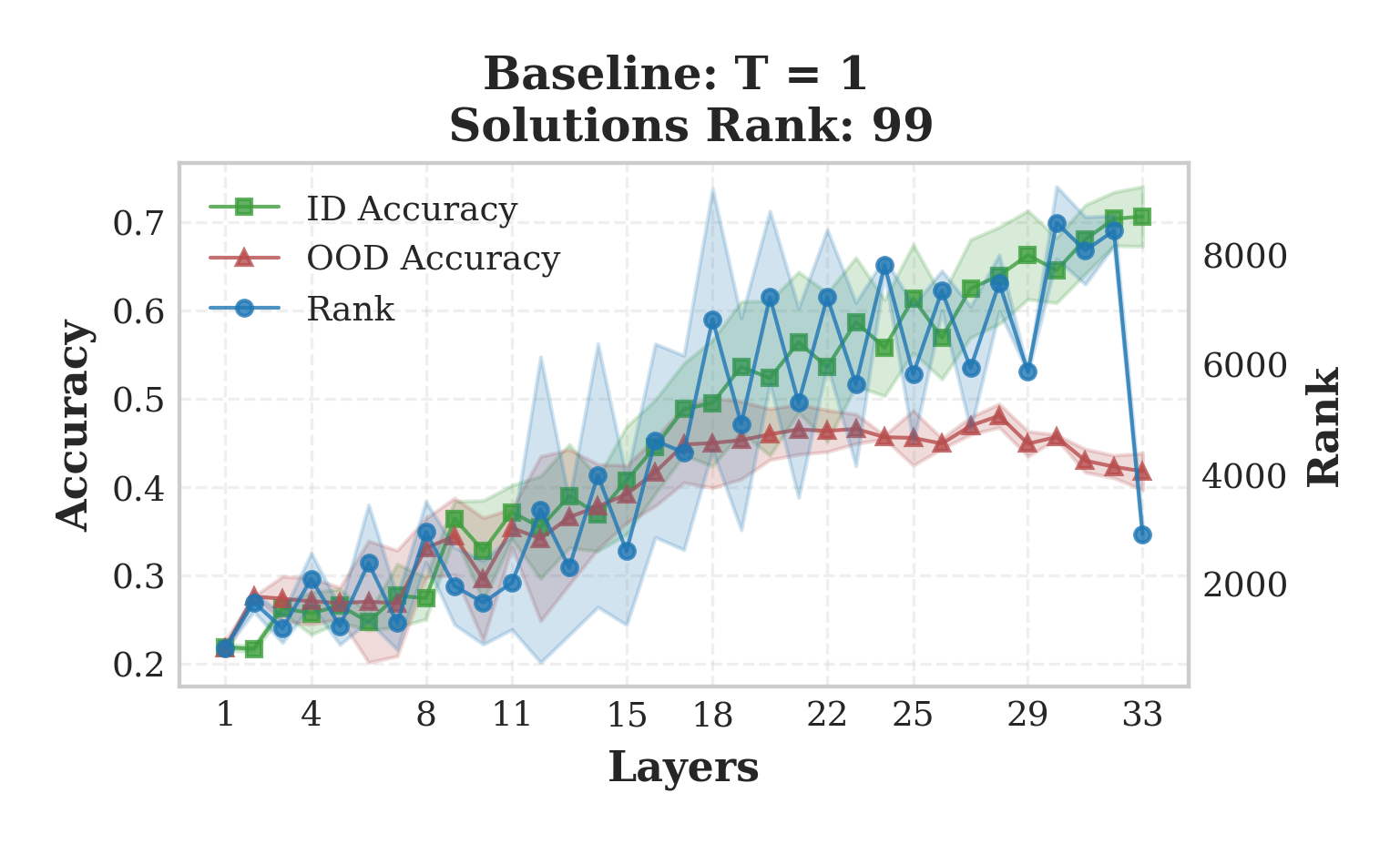}
    }
    {
    \includegraphics[width=0.49\textwidth]{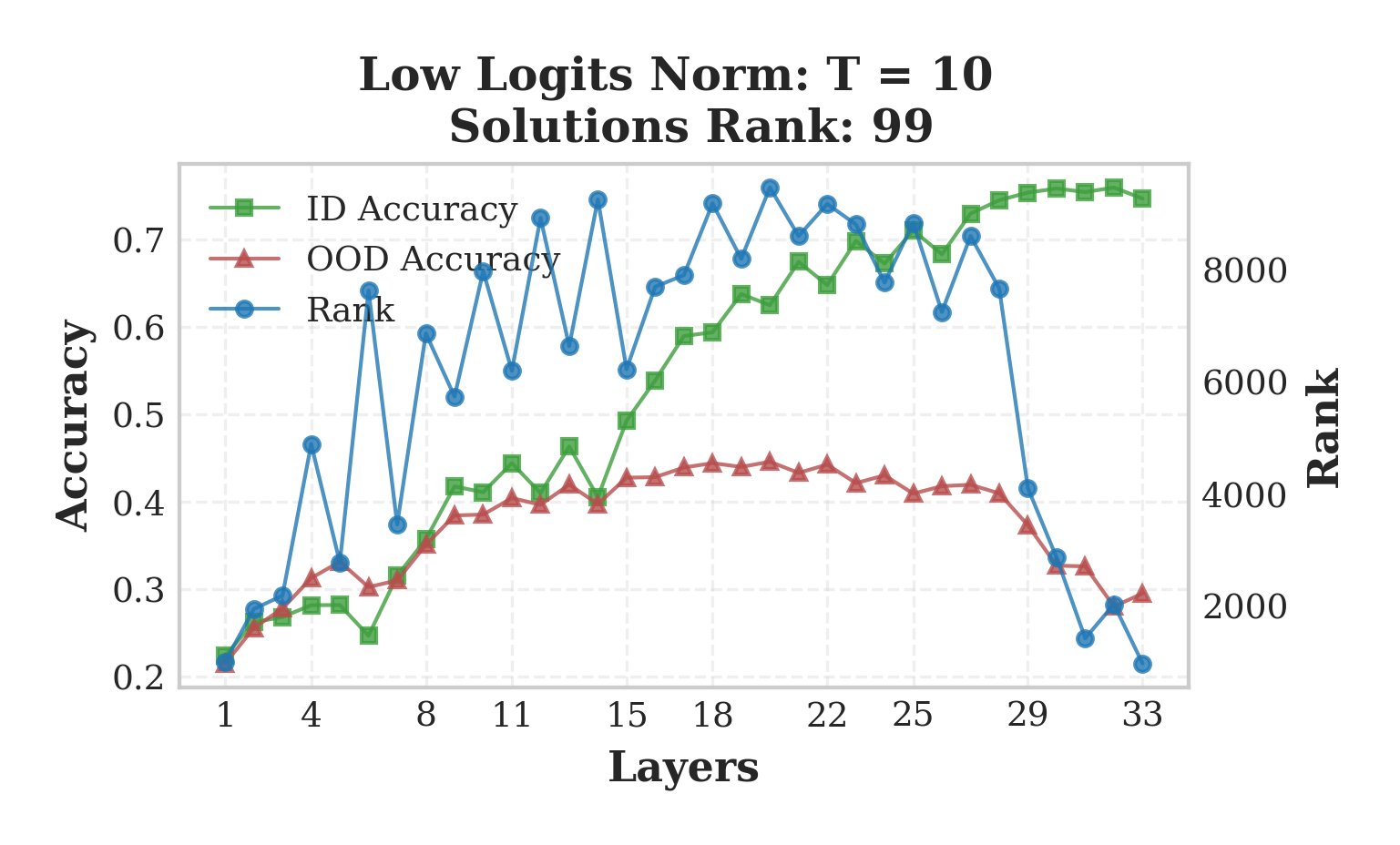}
    }
        {
    \includegraphics[width=0.49\textwidth]{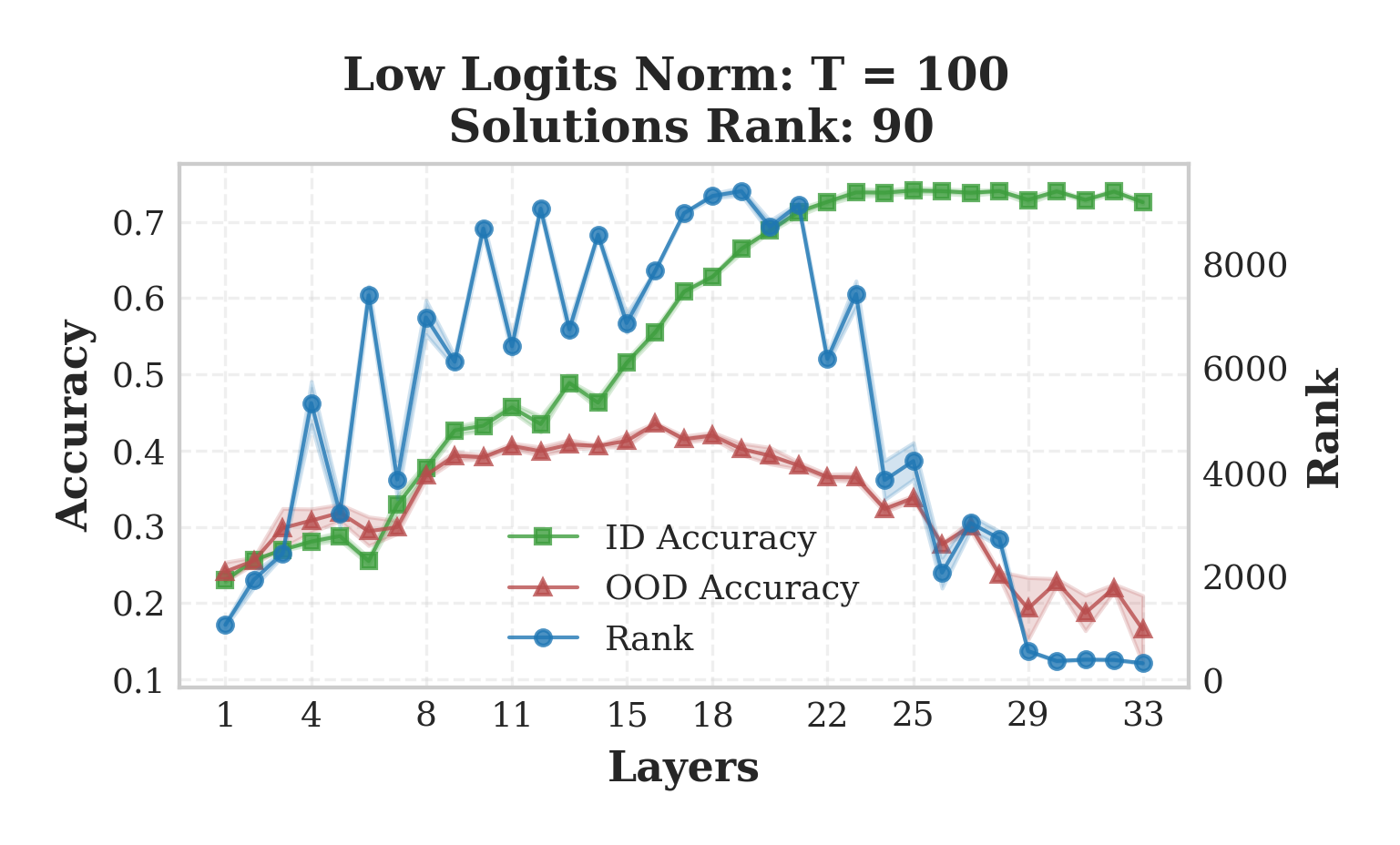}
    }
    {
    \includegraphics[width=0.49\textwidth]{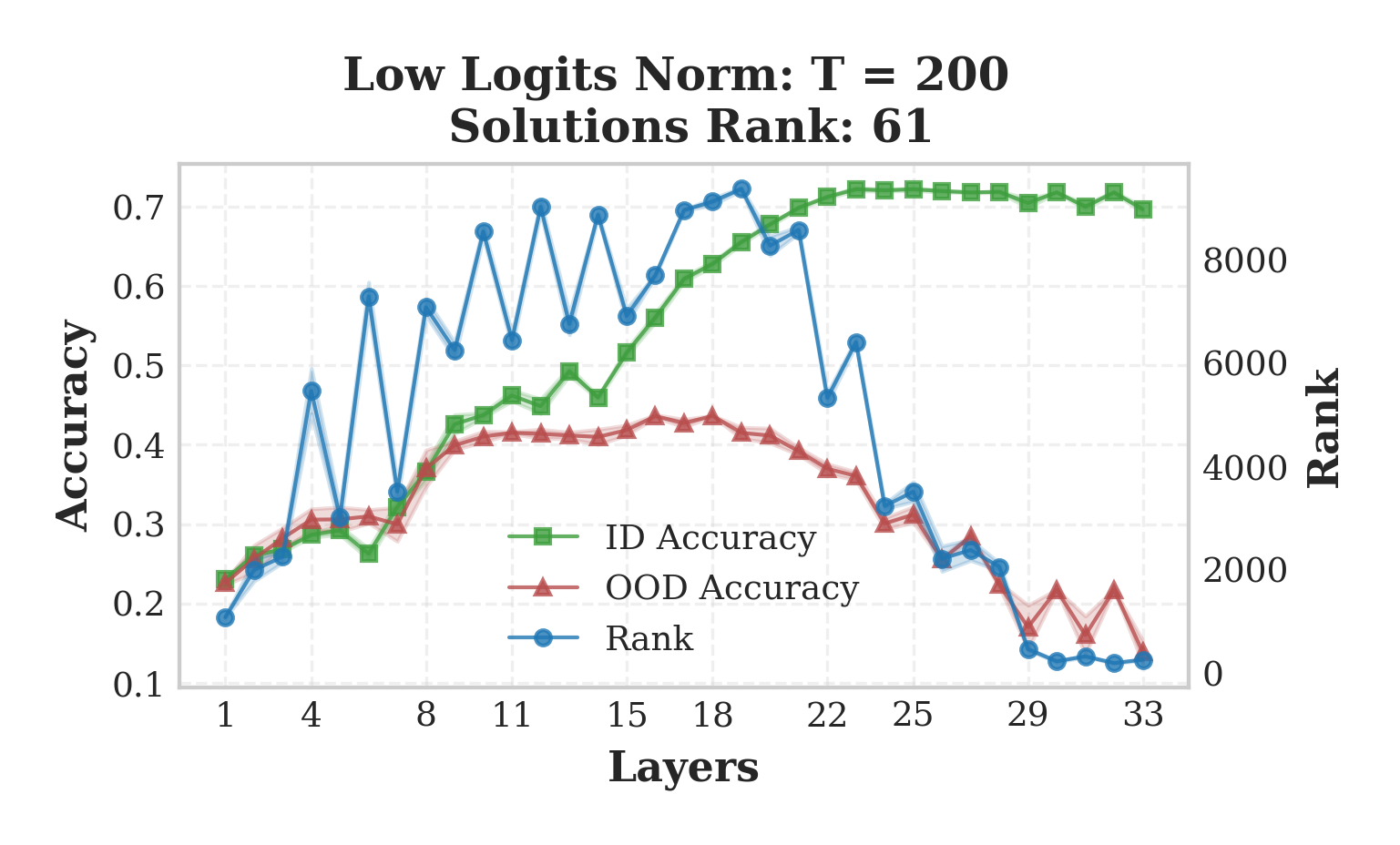}
    }
    \caption{\small{Plot presents the impact of training with high temperature on learned representations and the model's ability to generalize OOD. The higher the temperature, the lower the solutions' rank found by the model. Experiment: ResNet-34 trained on CIFAR-100.}}
\end{figure}

\subsection{VGG-19}

\begin{figure}[!h]
    \centering
    {
    \includegraphics[width=0.49\textwidth]{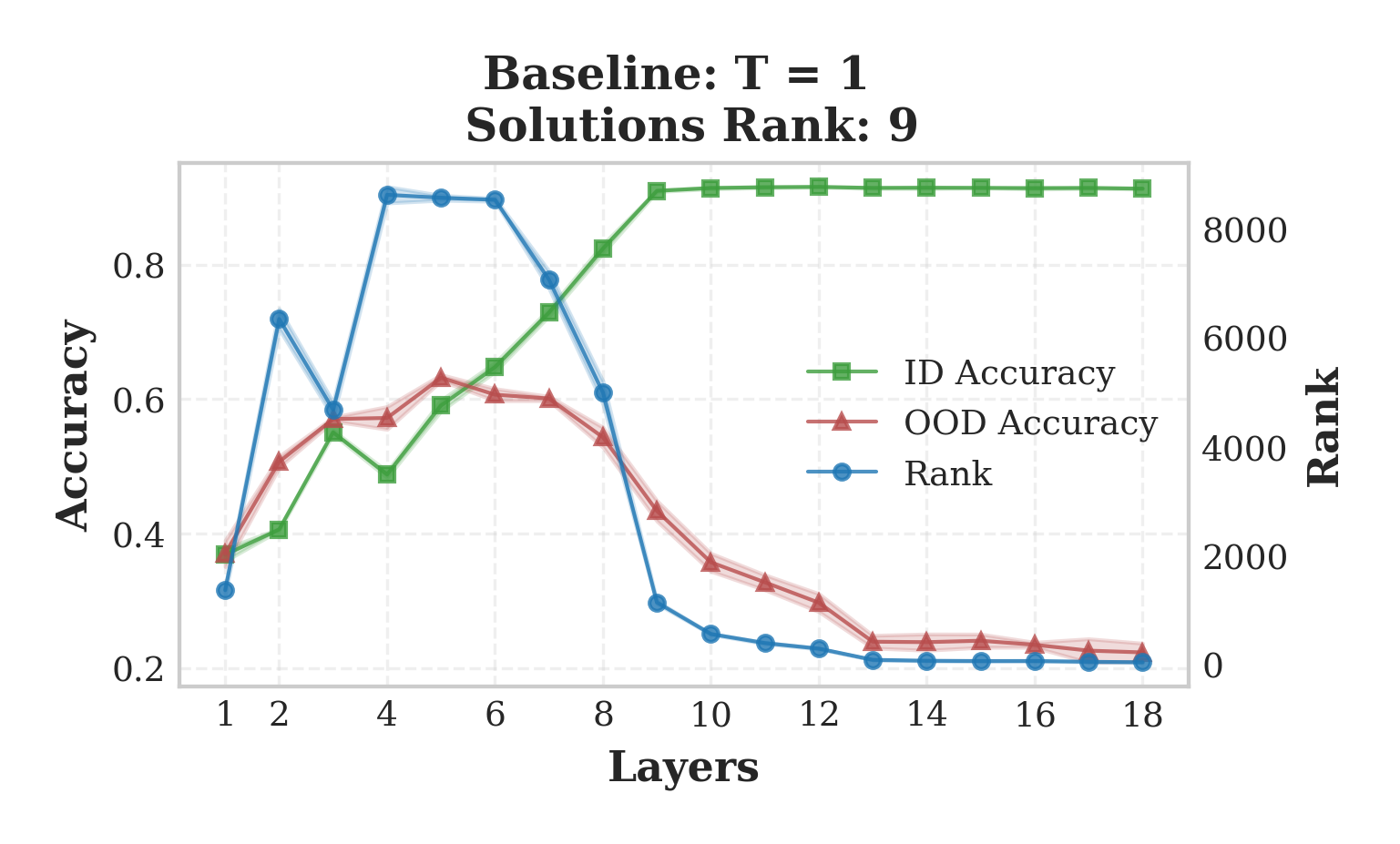}
    }
    {
    \includegraphics[width=0.49\textwidth]{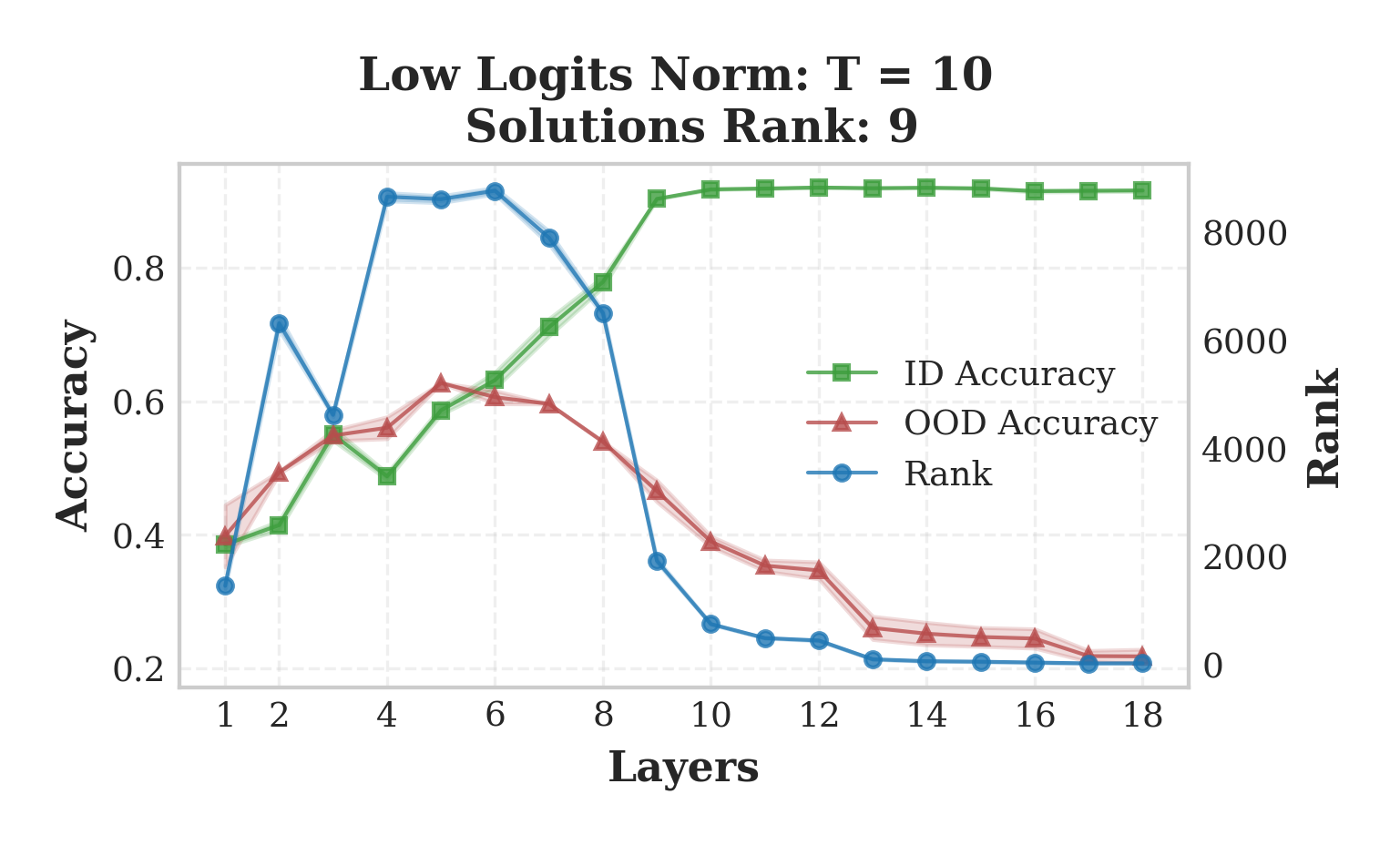}
    }
        {
    \includegraphics[width=0.49\textwidth]{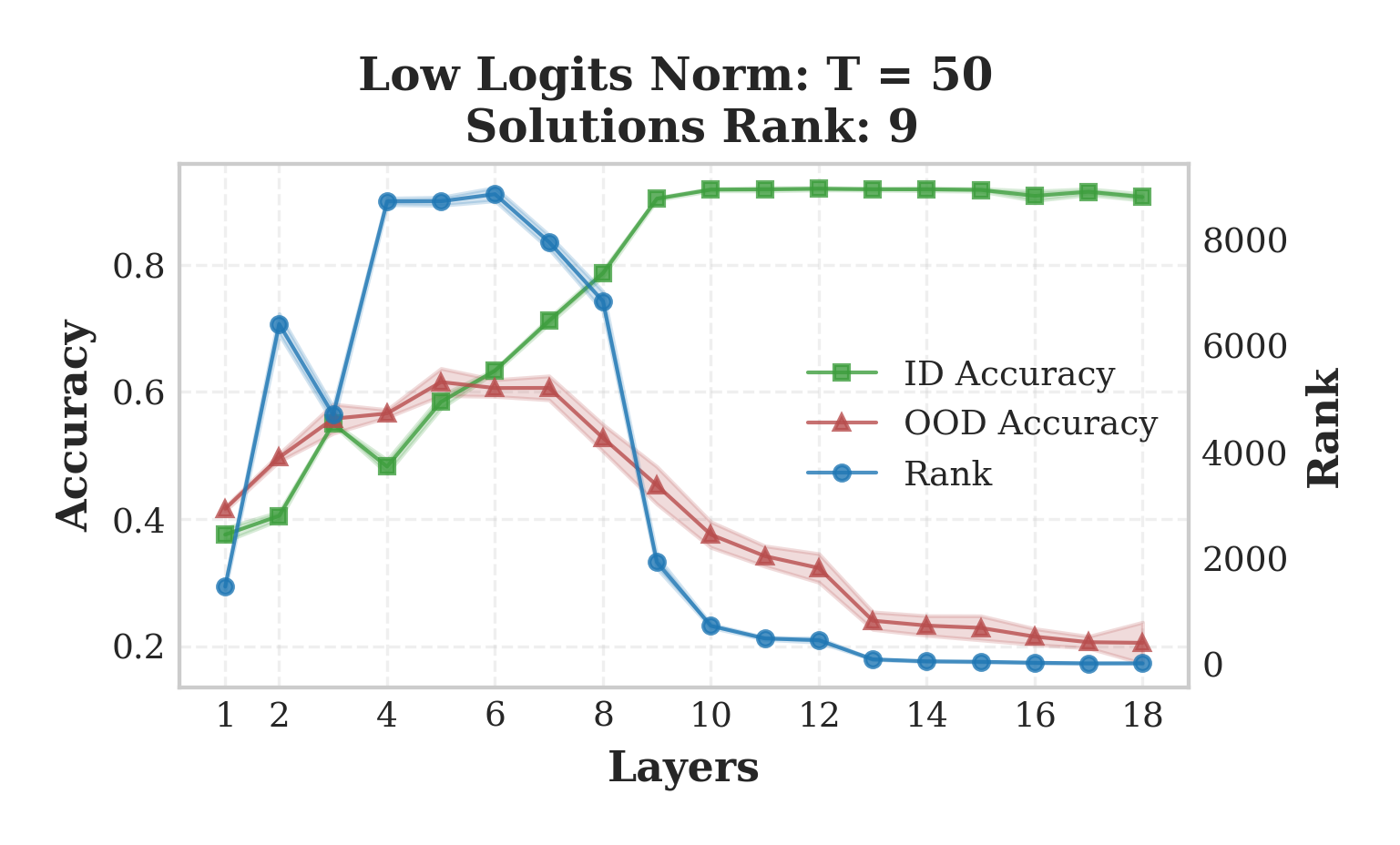}
    }
    {
    \includegraphics[width=0.49\textwidth]{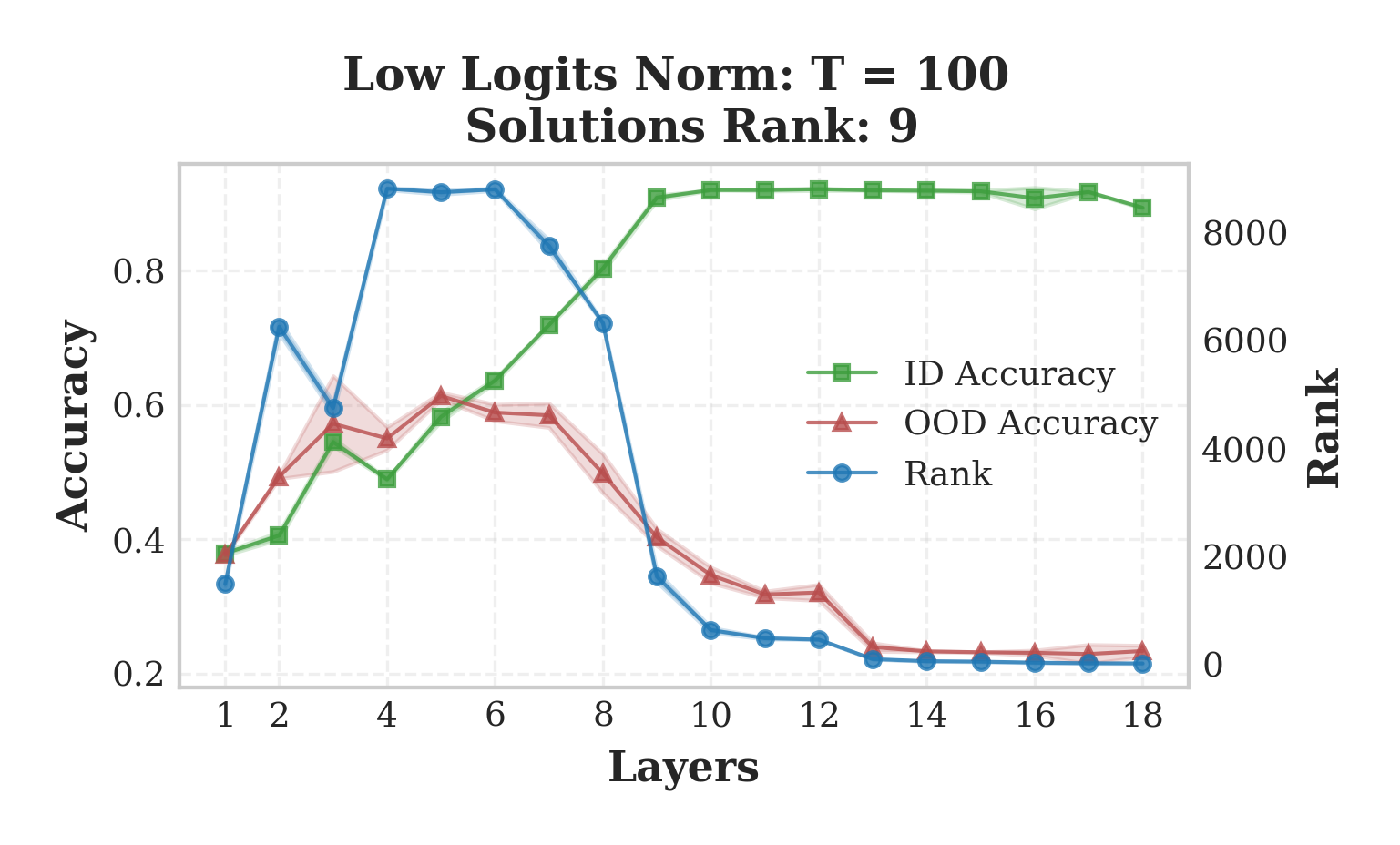}
    }
    \caption{Plot presents the impact of training with high temperature on learned representations and the model's ability to generalize OOD. The higher the temperature, the lower the solutions' rank found by the model. Experiment: VGG-19 trained on CIFAR-10.} 
\end{figure}

\begin{figure}[!h]
    \centering
    {
    \includegraphics[width=0.49\textwidth]{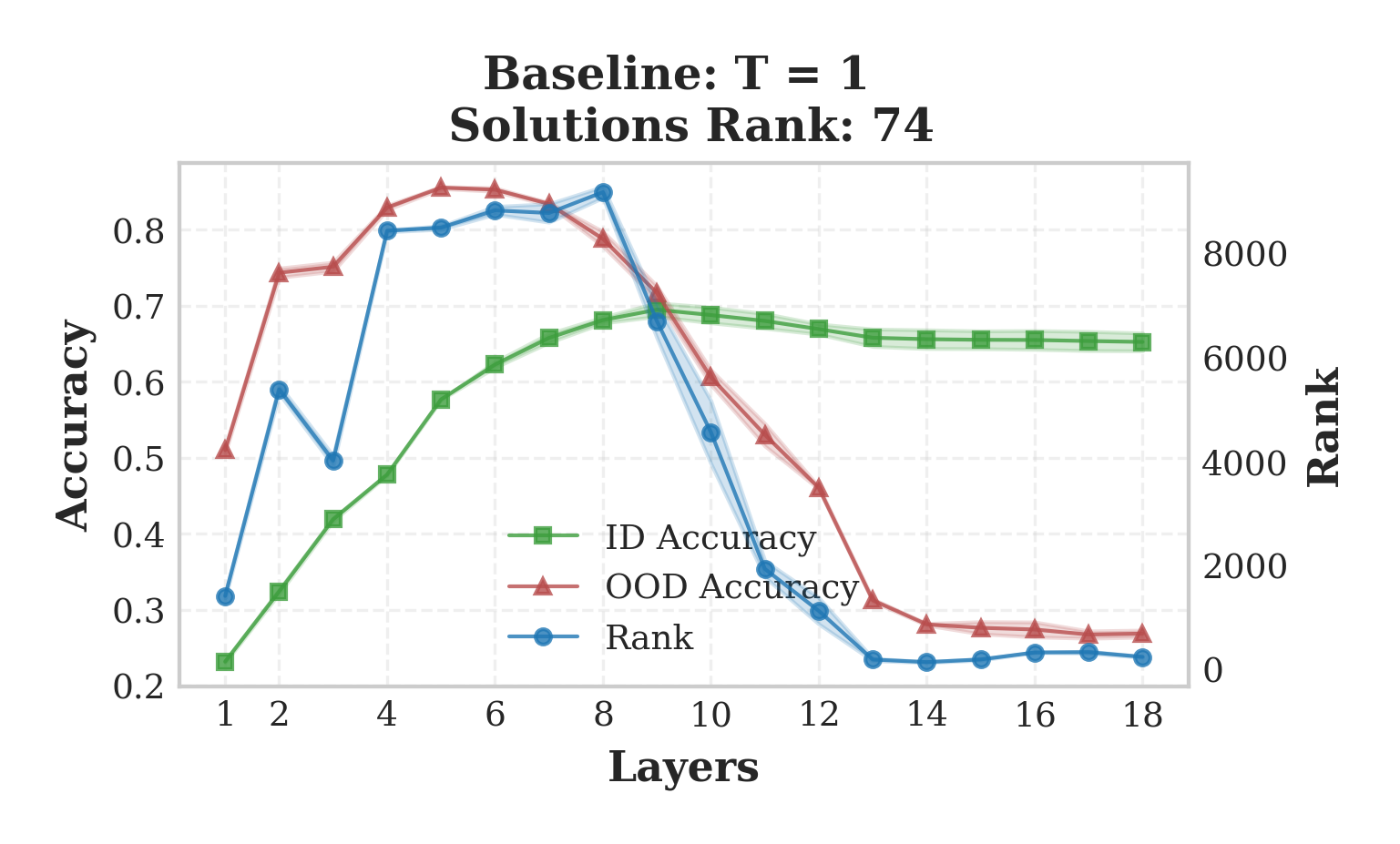}
    }
    {
    \includegraphics[width=0.49\textwidth]{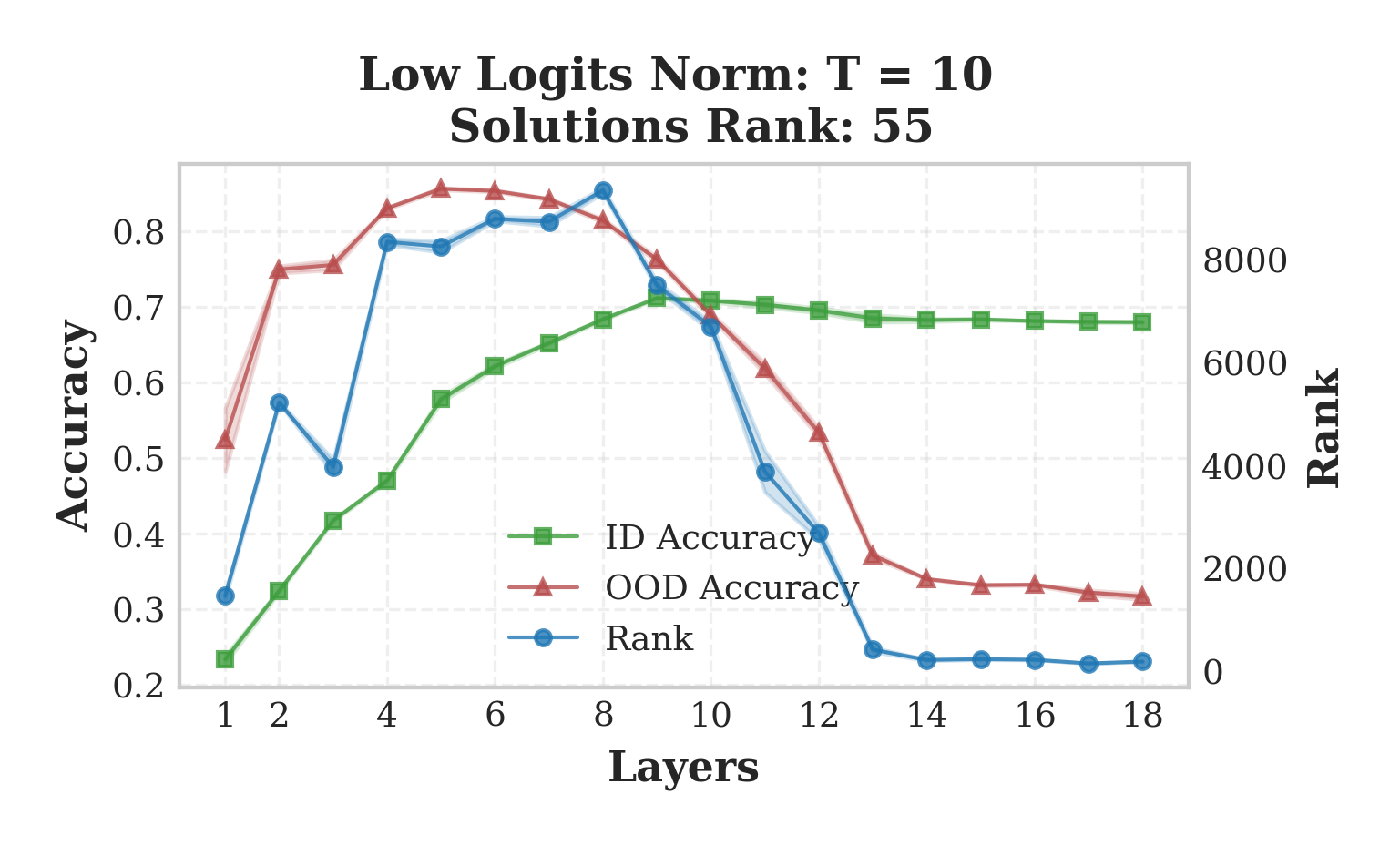}
    }
        {
    \includegraphics[width=0.49\textwidth]{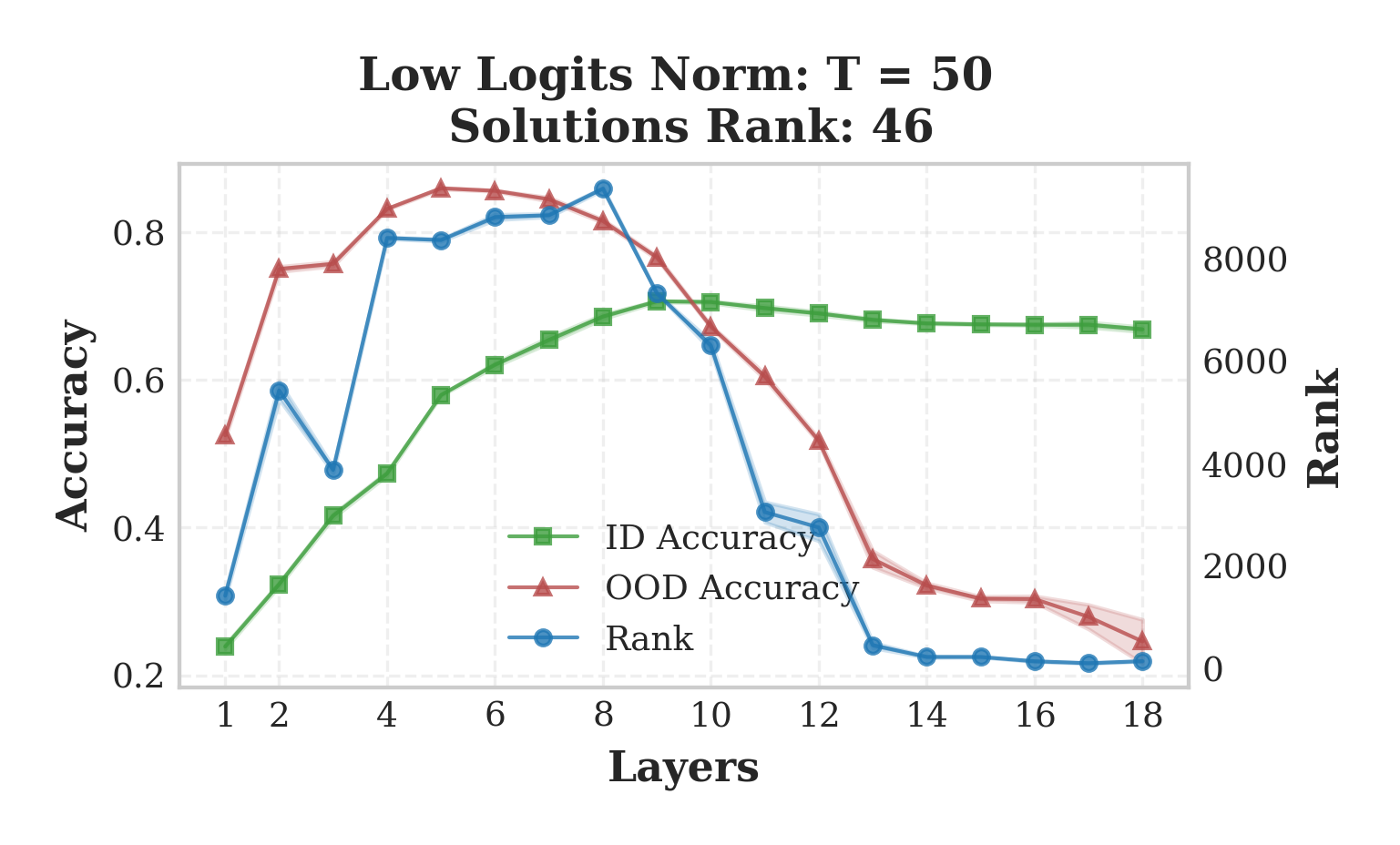}
    }
    {
    \includegraphics[width=0.49\textwidth]{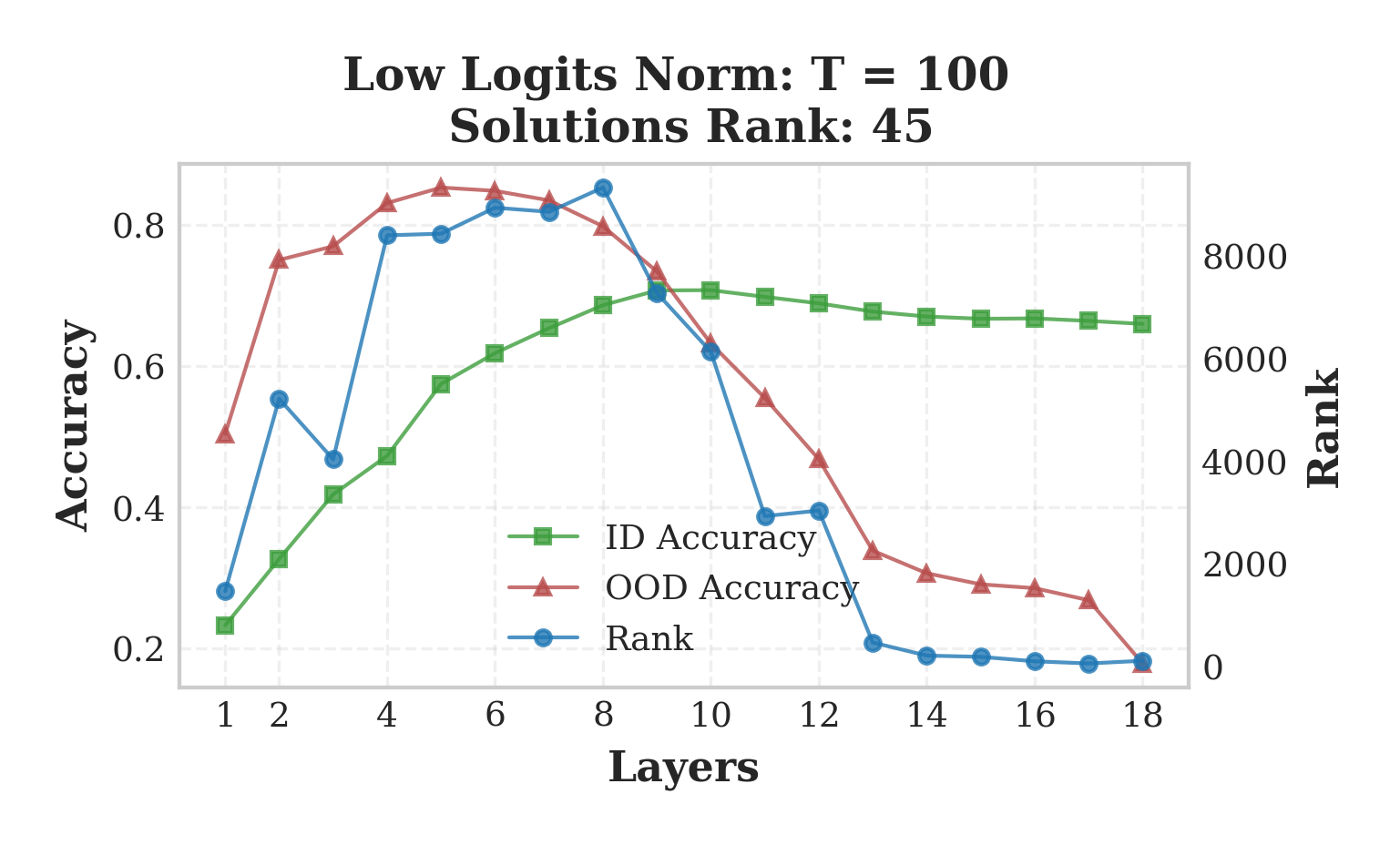}
    }
    \caption{Plot presents the impact of training with high temperature on learned representations and the model's ability to generalize OOD. The higher the temperature, the lower the solutions' rank found by the model. Experiment: VGG-19 trained on CIFAR-100.} 
\end{figure}

\newpage

\section{Extended related works}\label{app:related_works}
Our research intersects with four key areas of deep learning: (1) Neural Collapse and its variants, (2) unconstrained feature models, (3) \softmax temperature effects, and (4) the relationship between model compression and out-of-distribution performance. We provide a comprehensive analysis of each area before situating our contributions.

\subsection{Neural Collapse and Representation Learning}
The Neural Collapse (NC) phenomenon~\cite{papyan2020neuralcollapse} describes a surprising geometric regularity that emerges in the terminal phase of training (TPT), defined as when models achieve 100\% training accuracy. Under balanced class conditions without data augmentation, the penultimate layer representations converge to form an equiangular tight frame (ETF) simplex with rank equal to $C-1$, where $C$ is the number of classes. This structure exhibits three key properties: (i) class means form a simplex ETF, (ii) representations collapse to their class means, and (iii) classifiers align with the class means.

Subsequent work has identified similar geometric structures in intermediate layers. \cite{rangamani23intermediate} and \cite{parker2023neuralcollapseintermediatehidden} demonstrated Intermediate Neural Collapse, where hidden layers develop ETF-like structures before the penultimate layer. However, these works share two limitations with the original NC theory: (1) they provide no mechanistic explanation for why networks converge to these solutions, and (2) they require training to reach TPT to observe the phenomena.

Our findings reveal several fundamental differences from NC:
\begin{itemize}
    \item \textbf{Training Dynamics:} The rank collapse we observe begins in early training phases and persists throughout, without requiring TPT (in fact, most of our models never reach perfect training accuracy)
    \item \textbf{Augmentation Robustness:} Our results hold under strong data augmentation, while NC requires unaugmented datasets~\cite{papyan2020neuralcollapse}
    \item \textbf{Rank Behavior:} We demonstrate networks can find effective solutions with ranks significantly lower than the $C-1$ predicted by NC
    \item \textbf{Mechanistic Control:} We provide both theoretical and empirical evidence showing how rank collapse can be directly controlled through \softmax temperature and other hyperparameters
\end{itemize}

As we detail in Appendix~\ref{app:neural_collapse}, while our observations share similarities with NC, the underlying mechanisms and implications differ substantially. Most notably, none of our models simultaneously satisfy all NC conditions, suggesting our rank collapse phenomenon operates through a distinct pathway.

\subsection{Unconstrained Feature Models}
The unconstrained feature model (UFM) framework~\cite{hong2024neural} was developed to analyze NC under more general conditions, particularly for imbalanced class distributions. This approach treats both network parameters and input features as optimizable variables. The deep UFM (DUMF) extension~\cite{sukenik2023deep,sukenik2024neural} incorporates multiple layers and reveals solutions with rank lower than NC predictions.

However, these models present three key limitations our work addresses:
\begin{enumerate}
    \item \textbf{Measurement Protocol:} DUMF studies measure rank \textit{before} the ReLU activation, observing that ReLU can restore the full rank. This differs fundamentally from our direct measurement of pre-\softmax logits, which directly impact model decisions.
    \item \textbf{Architectural Constraints:} To observe low-rank solutions, these works stack multiple linear layers atop standard backbones and employ high weight decay. This setup was recently shown to induce a low-rank bias~\cite{kobayashi2024weight}.
    \item \textbf{Practical Relevance:} Our experiments demonstrate rank collapse occurs in standard architectures (MLPs, ResNets, VGGs) across multiple datasets without specialized regularization or architectural modifications.
\end{enumerate} 

\subsection{Softmax Temperature Effects}
Temperature scaling has been employed in two distinct contexts:

\paragraph{Inference-Time Adjustment}
\cite{velickovic2024softmax} identified temperature scaling as crucial for obtaining sharp predictions on OOD data, while \cite{guo2017calibration} used it for model calibration. Both approaches only adjust temperature during inference, leaving the learned representations unchanged.

\paragraph{Training-Time Optimization}
In self-supervised learning~\cite{chen2020simple}, temperature acts as a critical hyperparameter controlling representation quality. For LLMs, \cite{jha2024aerosoftmaxonlyllmsefficient} demonstrated temperature's role in private inference scenarios. However, these works neither examine temperature's impact on representation rank nor its relationship to OOD performance.

Our work bridges this gap by demonstrating how \softmax temperature during training affects representation geometry and model behavior. We provide the first systematic study showing temperature's dual role in controlling rank collapse and modulating OOD performance.

\subsection{Model Compression and OOD Performance}

Recent work has revealed complex relationships between model compression, OOD generalization, and detection:

\paragraph{Transfer Learning}
\cite{evci2022head2toe} and \cite{masarczyk2024tunnel} showed intermediate representations can enhance transfer learning, while \cite{harun2024variables} demonstrated width-depth tradeoffs in compressed models.

\paragraph{OOD Detection}
Several works~\cite{ammar2024neconeuralcollapsebased,haas2023linkingneuralcollapsel2,harun2025controllingneuralcollapseenhances} link stronger NC to improved OOD detection. Notably, \cite{haas2023linkingneuralcollapsel2} showed $L_2$ regularization can improve detection at the cost of generalization.

Our work extends these findings by demonstrating how logit norm reduction—whether through architectural choices or temperature scaling—creates a tunable tradeoff between OOD generalization and detection. This provides practitioners with new knobs to optimize models for specific deployment scenarios.

\subsection{Synthesis of Contributions}
By unifying insights from these diverse areas, our work:
\begin{itemize}
    \item Establishes rank collapse as a fundamental phenomenon distinct from NC
    \item Provides mechanistic explanations and control strategies through temperature scaling
    \item Reveals new connections between representation geometry and OOD behavior
    \item Offers practical guidelines for model optimization via temperature tuning
\end{itemize}

\section{What hyperparameters act as softmax temperature?}\label{app:architectural_hyperparameters}

To systematically investigate the impact of hyperparameters on logits norm and model behavior, we conduct a series of controlled experiments using an 8-layer MLP with 2048 hidden units per layer trained on CIFAR-10. The models were initialized using distinct schemes: \{Kaiming initialization~\cite{he2015delvingdeeprectifierssurpassing}, PyTorch default initialization~\cite{paszke2019pytorch}, or Normal distribution with specified standard deviation $\sigma$\}.

\begin{figure}[!h]
    \centering
    {
    \includegraphics[width=0.96\textwidth]{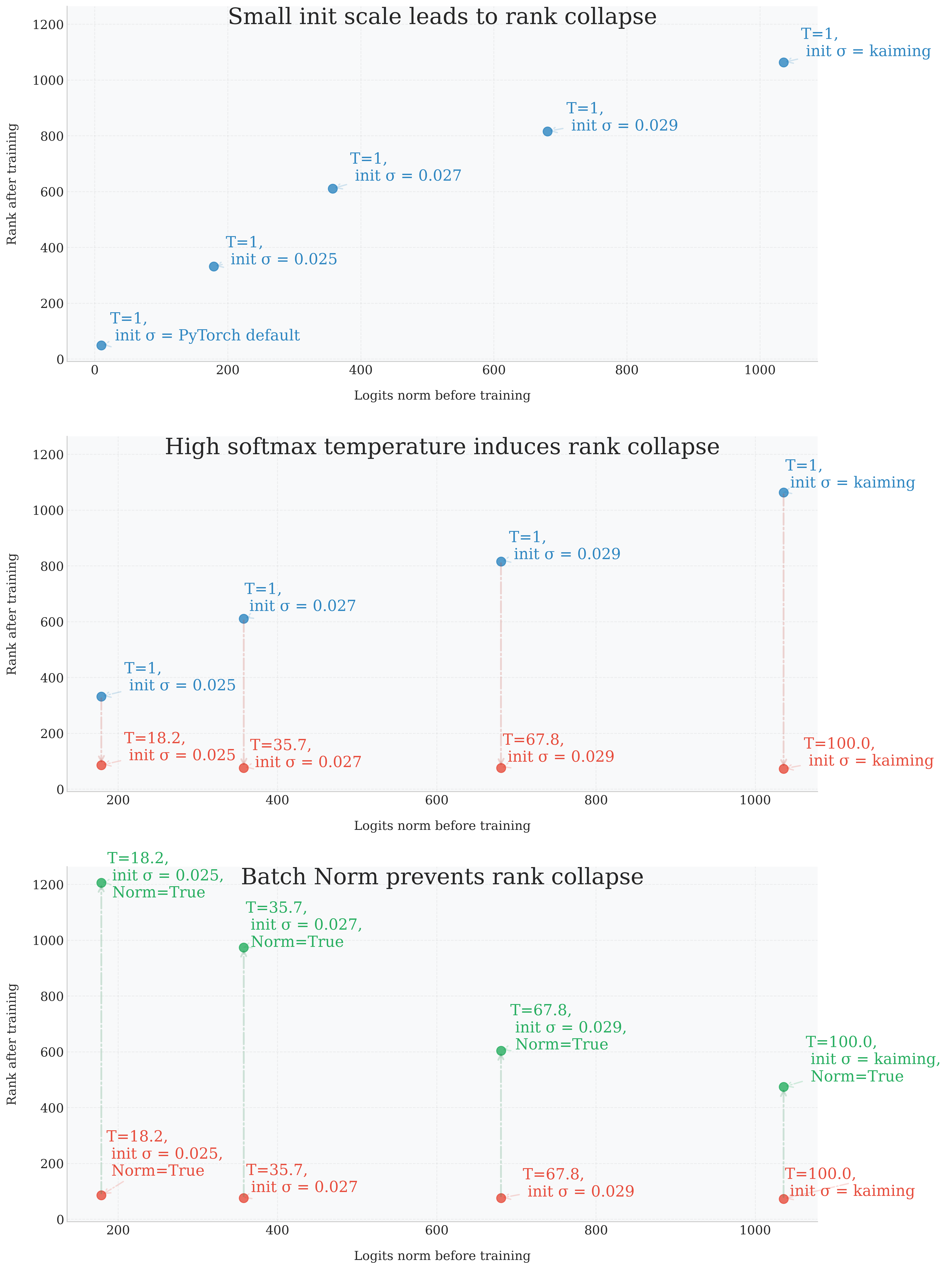}
    }
    \caption{
\textbf{(Top)} Initialization effects on pre-training logits norm (x-axis) demonstrate a clear relationship between initialization scale and model behavior. Smaller $\sigma$ values yield reduced logit norms, effectively equivalent to training with higher temperature. This relationship persists post-training, as shown by the nearly linear correlation with penultimate layer rank (y-axis) -- models with lower initial logits norms consistently achieve lower final ranks.
\textbf{(Middle)} When logits norms are normalized across initialization schemes through temperature adjustment, all models converge to similar rank solutions, confirming the dominant role of logits norm in representation learning.
\textbf{(Bottom)} Introduction of layer normalization prevents representation collapse despite low logits norm, demonstrating the critical importance of normalization layers in maintaining representation quality.
\label{fig:init_temp_batchnorm}
}
\end{figure}

Our first experiment establishes a fundamental relationship between initialization scale and model behavior. As shown in Figure~\ref{fig:init_temp_batchnorm} (top), the choice of initialization directly controls the pre-training logits norm (x-axis), with lower $\sigma$ values producing proportionally smaller norms. Crucially, this initial condition determines the model's final state, as evidenced by the strong linear relationship between initial logits norm and post-training rank (y-axis).

To isolate the effect of logits norm, we conduct a second experiment where we normalize this quantity across initialization schemes through precise temperature adjustment. Remarkably, as demonstrated in Figure~\ref{fig:init_temp_batchnorm} (middle), this normalization causes all models to converge to nearly identical rank solutions, regardless of their initialization. This compelling result confirms that logits norm is the primary determinant of representation quality.

We emphasize that this relationship is asymmetric. Attempting to match larger initial logits norms through temperature reduction fails due to fundamental limitations. This comes from the fact that lower init (especially in deeper networks) decreases the rank of the representations, even for randomly initialized models, and as highlighted in Section~\ref{sec:DNN_analysis}, collapsed representations collapse the gradients, making it much harder for the model to escape this collapsed regime. This critical insight explains why careful initialization and other techniques preserving the rank of the representation remain essential in deep learning.

Our final experiment reveals the protective effect of normalization layers. By applying layer normalization to models that would otherwise collapse (Figure~\ref{fig:init_temp_batchnorm}, bottom), we demonstrate that normalization layers enable rank improvement through alternative mechanisms that don't depend on logits norm amplification. This finding has important practical implications: proper normalization can prevent representation collapse even in unfavorable optimization conditions.

\section{Results for MSE}\label{app:mse_results}

\begin{wraptable}[9]{r}{65mm}

\centering
\centering
\resizebox{\linewidth}{!}{%
\begin{tabular}{@{}ccccccc@{}}
\toprule
\multirow{2}{*}{Architecture} & \multicolumn{3}{c}{Baseline} & \multicolumn{3}{c}{Low Logit Norm}\\ 
& $\kappa$ $\downarrow$ & $\rho$ $\downarrow$ & SR & $\kappa$ $\downarrow$ & $\rho$ $\downarrow$ & SR \\ \midrule
 ResNet34       &    100\%   &  10\%   &   99   & 90\% & 20\% & 95 \\
 \bottomrule
\end{tabular}%
}
\caption{The results for ResNet-34 trained on CIFAR-100 with MSELoss instead of CrossEntropy.}
\label{tab:mse_comparison}

\end{wraptable}

To determine whether our findings generalize beyond Cross Entropy loss, we conducted additional experiments using MSELoss with ResNet-34 on CIFAR-100. By applying the \softmax transformation to model outputs prior to loss computation, we maintained control over the temperature parameter. Remarkably, as shown in Table~\ref{tab:mse_comparison}, we observe the same fundamental trends with MSELoss as with Cross Entropy, without any hyperparameter optimization. However, we note that the differences are milder than in the case of CrossEntropy; we believe this difference should be attributed to the poorly designed parameters. This strongly suggests that our core observations are not loss-function specific, but rather reflect fundamental properties of deep learning optimization. While further tuning could potentially improve absolute performance metrics, the consistent patterns across different loss functions provide compelling evidence for the robustness of our findings.

\newpage

\section{Finegrained temperature experiments}\label{app:more_temperatures}

\begin{table}[!h]
\centering
\resizebox{1\columnwidth}{!}{%
\begin{tabular}{@{}ccccccccccccc@{}}
\toprule
\multirow{2}{*}{Model/Temp} & \multicolumn{3}{c}{T=1} & \multicolumn{3}{c}{T=10} & \multicolumn{3}{c}{T=100} & \multicolumn{3}{c}{T=1000}\\ 
& $\kappa$ $\downarrow$ & $\rho$ $\downarrow$ & SR & $\kappa$ $\downarrow$ & $\rho$ $\downarrow$ & SR & $\kappa$ $\downarrow$ & $\rho$ $\downarrow$ & SR & $\kappa$ $\downarrow$ & $\rho$ $\downarrow$ & SR\\ \midrule
 ResNet18  &   100\%    &  8\%   &  9     & 88\% & 42\% & 9 & 81\% & 49\% & 9 & 81\%    & 52\%    &   9   \\
 ResNet20  &  100\%     &  10\%   &   9   &  94\% & 24\% & 9 & 83\% &  34\% & 9 & - &  -  &  -  \\ 
 \midrule
 \multirow{2}{*}{Model/Temp} & \multicolumn{3}{c}{T=1} & \multicolumn{3}{c}{T=10} & \multicolumn{3}{c}{T=50} & \multicolumn{3}{c}{T=100}\\ 
 & $\kappa$ $\downarrow$ & $\rho$ $\downarrow$ & SR & $\kappa$ $\downarrow$ & $\rho$ $\downarrow$ & SR & $\kappa$ $\downarrow$ & $\rho$ $\downarrow$ & SR & $\kappa$ $\downarrow$ & $\rho$ $\downarrow$ & SR\\ \midrule
 VGG19  &  53\%     &  64\%   &   9   &  59\% & 65\% & 9 & 59\% &  66\% & 9 & 59\% &  63\%  &  9  \\ 
 
 \bottomrule
\end{tabular}%
}
\caption{Supplementary results with fine-grained results on CIFAR-10. To compute $\rho$, we used SVHN as an OOD dataset. When results are not provided, the training did not finish successfully with the given hyperparameters.}
\label{tab:cifar10_finegrained_results}
\end{table}

\begin{table}[!h]
\centering
\resizebox{1\columnwidth}{!}{%
\begin{tabular}{@{}ccccccccccccc@{}}
\toprule
\multirow{2}{*}{Model/Temp} & \multicolumn{3}{c}{T=1} & \multicolumn{3}{c}{T=10} & \multicolumn{3}{c}{T=100} & \multicolumn{3}{c}{T=200}\\ 
& $\kappa$ $\downarrow$ & $\rho$ $\downarrow$ & SR & $\kappa$ $\downarrow$ & $\rho$ $\downarrow$ & SR & $\kappa$ $\downarrow$ & $\rho$ $\downarrow$ & SR & $\kappa$ $\downarrow$ & $\rho$ $\downarrow$ & SR\\ \midrule
 ResNet18  &   100\%    &  10\%   &  99     & 94\% & 30\% & 99 & 88\% & 48\% & 94 & 81\%    & 43\%    &   57    \\
 ResNet34  &  100\%     &  12\%   &   99   &  91\% & 37\% & 99 & 72\% &  49\% & 90 & 72\% &  50\%  &  61  \\ 
 \midrule
 \multirow{2}{*}{Model/Temp} & \multicolumn{3}{c}{T=1} & \multicolumn{3}{c}{T=10} & \multicolumn{3}{c}{T=50} & \multicolumn{3}{c}{T=100}\\ 
 & $\kappa$ $\downarrow$ & $\rho$ $\downarrow$ & SR & $\kappa$ $\downarrow$ & $\rho$ $\downarrow$ & SR & $\kappa$ $\downarrow$ & $\rho$ $\downarrow$ & SR & $\kappa$ $\downarrow$ & $\rho$ $\downarrow$ & SR\\ \midrule
 VGG19  &  53\%     &  69\%   &   74   &  53\% & 62\% & 55 & 53\% &  68\% & 46 & 53\% &  69\% & 45  \\ 
 
 \bottomrule
\end{tabular}%
}
\caption{Supplementary results with fine-grained results on CIFAR-100. To compute $\rho$, we used SVHN as an OOD dataset. When results are not provided, the training did not finish successfully with the given hyperparameters.}
\label{tab:cifar100_finegrained_results}
\end{table}

\begin{table}[!h]
\centering
\resizebox{1\columnwidth}{!}{%
\begin{tabular}{@{}ccccccccccccc@{}}
\toprule
\multirow{2}{*}{Architecture} & \multicolumn{3}{c}{T=1} & \multicolumn{3}{c}{T=10} & \multicolumn{3}{c}{T=100} & \multicolumn{3}{c}{T=1000}\\ 
& $\kappa$ $\downarrow$ & $\rho$ $\downarrow$ & SR & $\kappa$ $\downarrow$ & $\rho$ $\downarrow$ & SR & $\kappa$ $\downarrow$ & $\rho$ $\downarrow$ & SR & $\kappa$ $\downarrow$ & $\rho$ $\downarrow$ & SR\\ \midrule
 ResNet34       &    100\%   &  5\%   &   512   & 94\% & 2\% & 462 &  94\% & 11\%  & 262 & 82\%   &  23\%    &   122   \\
 ResNet50  &   100\%    &  5\%   &  947     & 100\% & 2\% & 509  & 90\% & 14\% & 249 & 78\%    & 22\%    &   128   \\
 \bottomrule
\end{tabular}%
}
\caption{Supplementary results with fine-grained results on ImageNet-1k. To compute $\rho$, we used CIFAR-100 as an OOD dataset. When results are not provided, the training did not finish successfully with the given hyperparameters.}
\label{tab:finegrained_main-table}
\end{table}

\newpage

\section{Additional experiments -- logits norm}\label{app:growing_logits_norm}

\begin{figure}[!h]
    \centering
    {
    \includegraphics[width=0.48\textwidth]{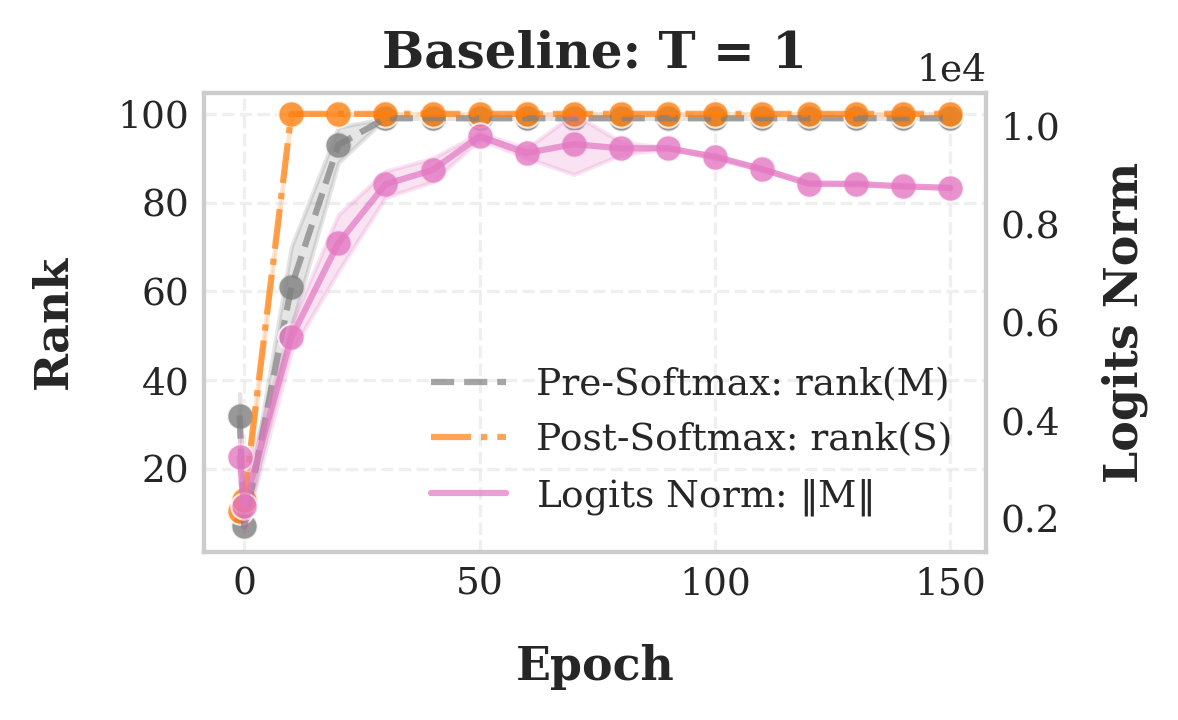}
    }
    {
    \includegraphics[width=0.48\textwidth]{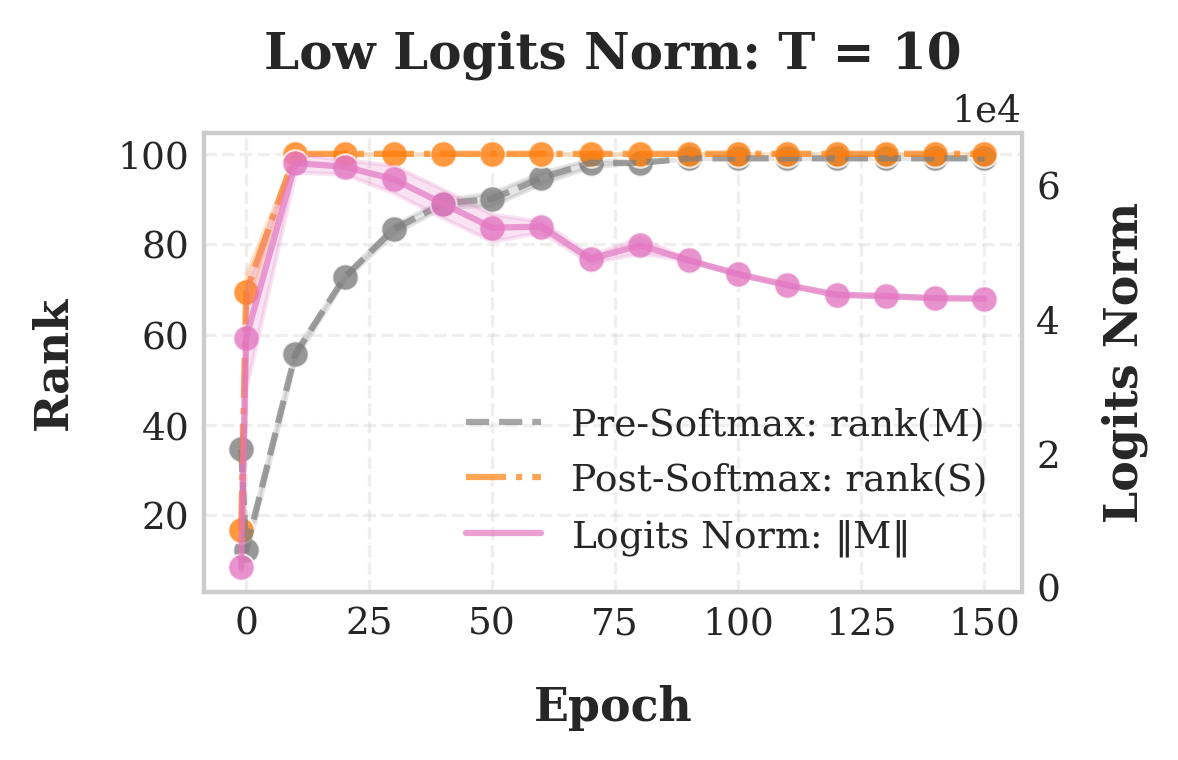}
    }
        {
    \includegraphics[width=0.48\textwidth]{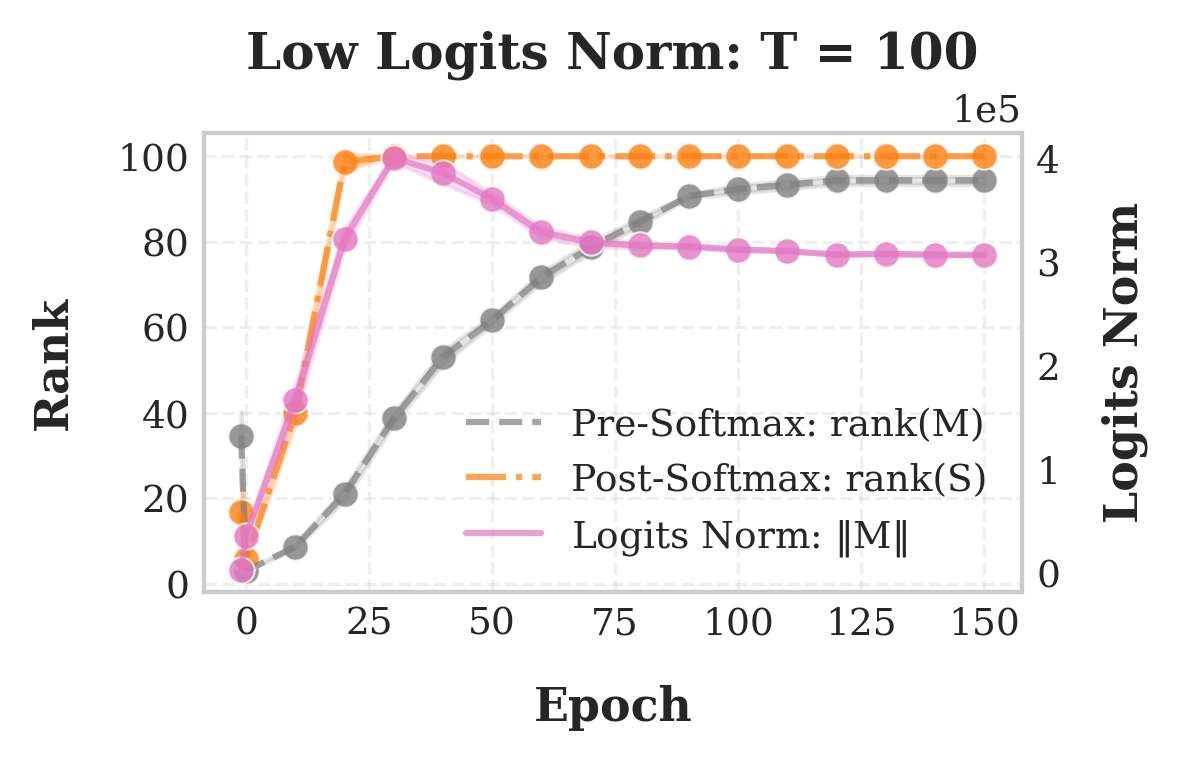}
    }
    {
    \includegraphics[width=0.48\textwidth]{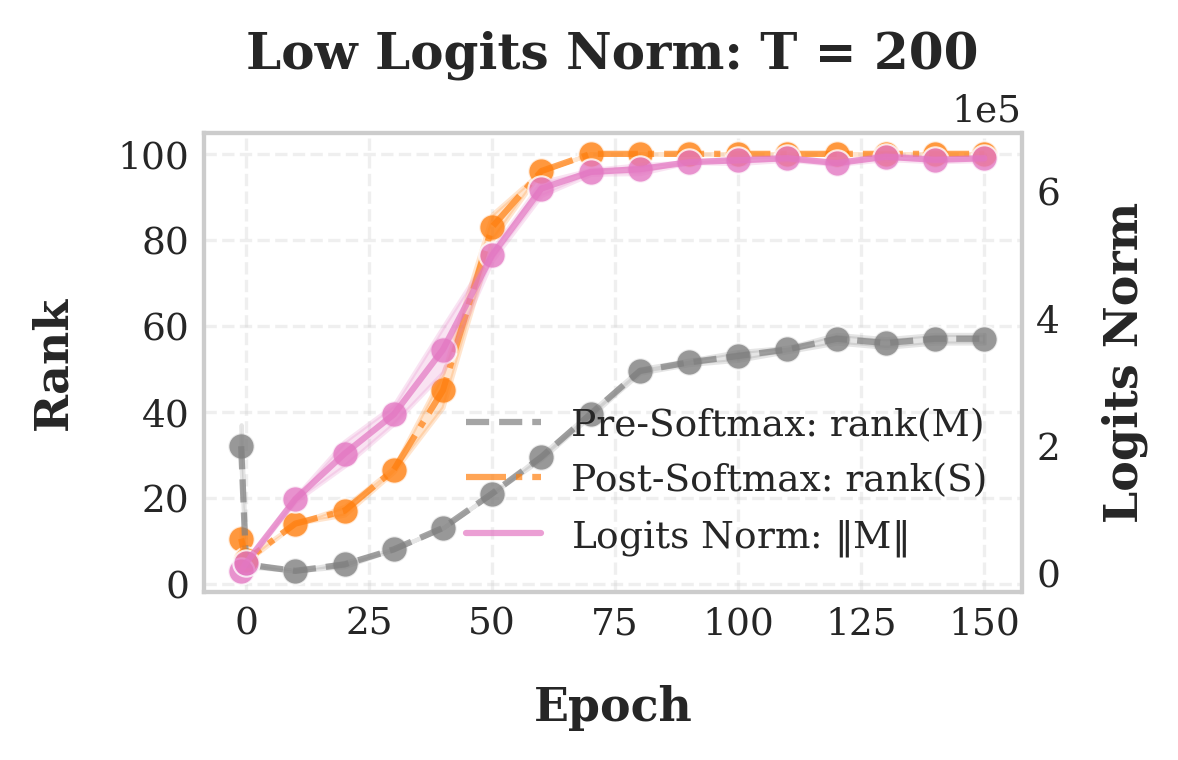}
    }
    \caption{\small{Plot presents the evolution of logits norm and its impact on pre and post \softmax rank leading to \textit{rank-deficit bias}. When training with high temperatures, the \textcolor{orange}{post-softmax rank growth} is triggered by \textcolor{appendix_pink}{the growth of the logits norm} not the \textcolor{gray}{pre-softmax rank}. Experiment: ResNet-18 trained on CIFAR-100.}} 
\end{figure}

\begin{figure}[!h]
    \centering
    {
    \includegraphics[width=0.48\textwidth]{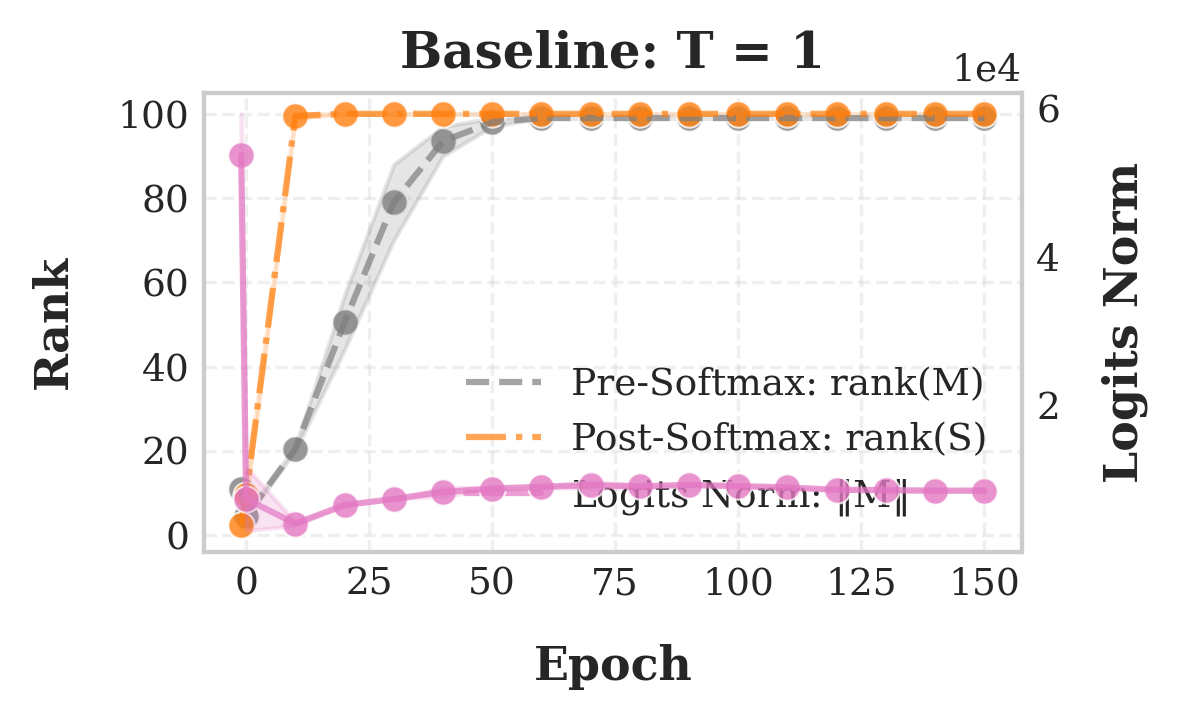}
    }
    {
    \includegraphics[width=0.48\textwidth]{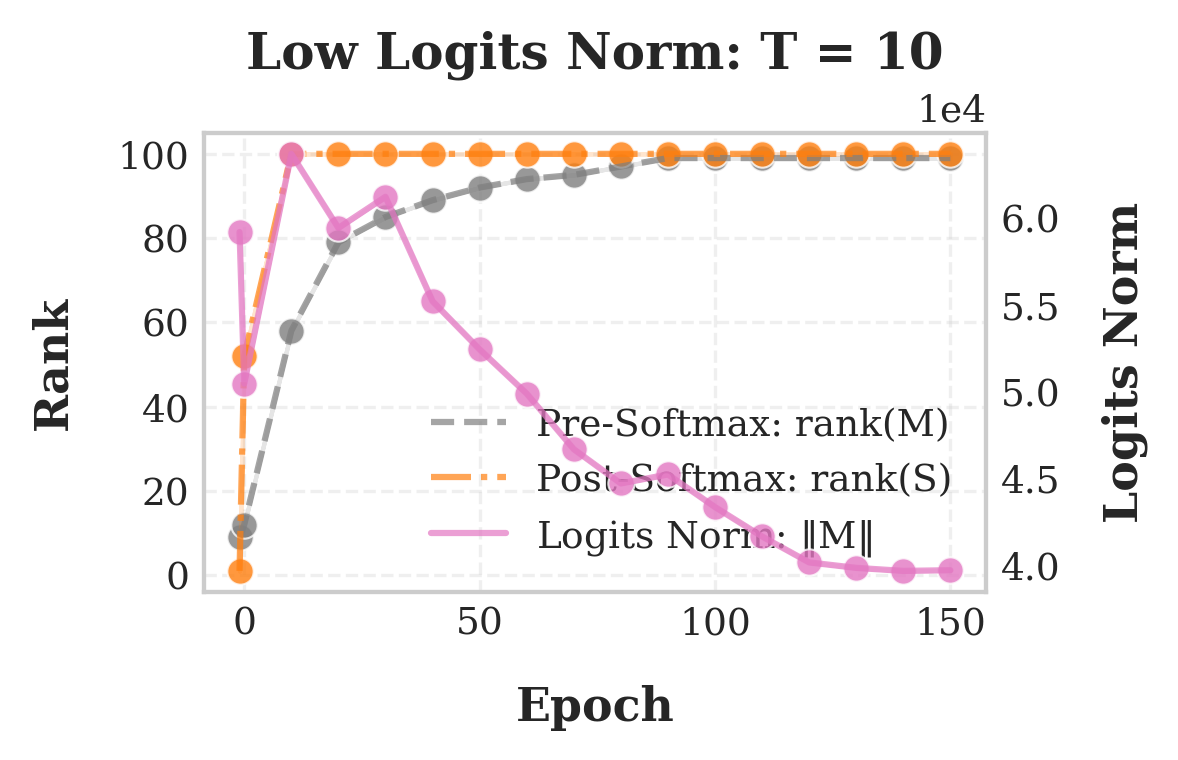}
    }
        {
    \includegraphics[width=0.48\textwidth]{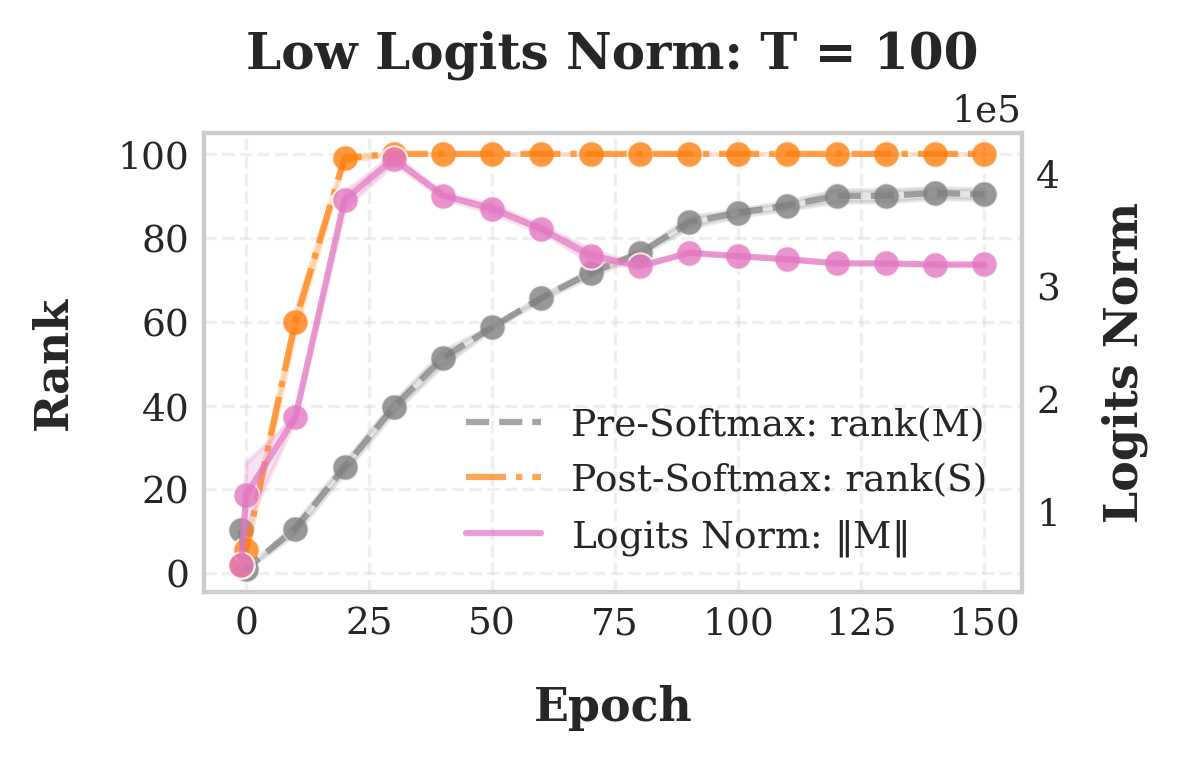}
    }
    {
    \includegraphics[width=0.48\textwidth]{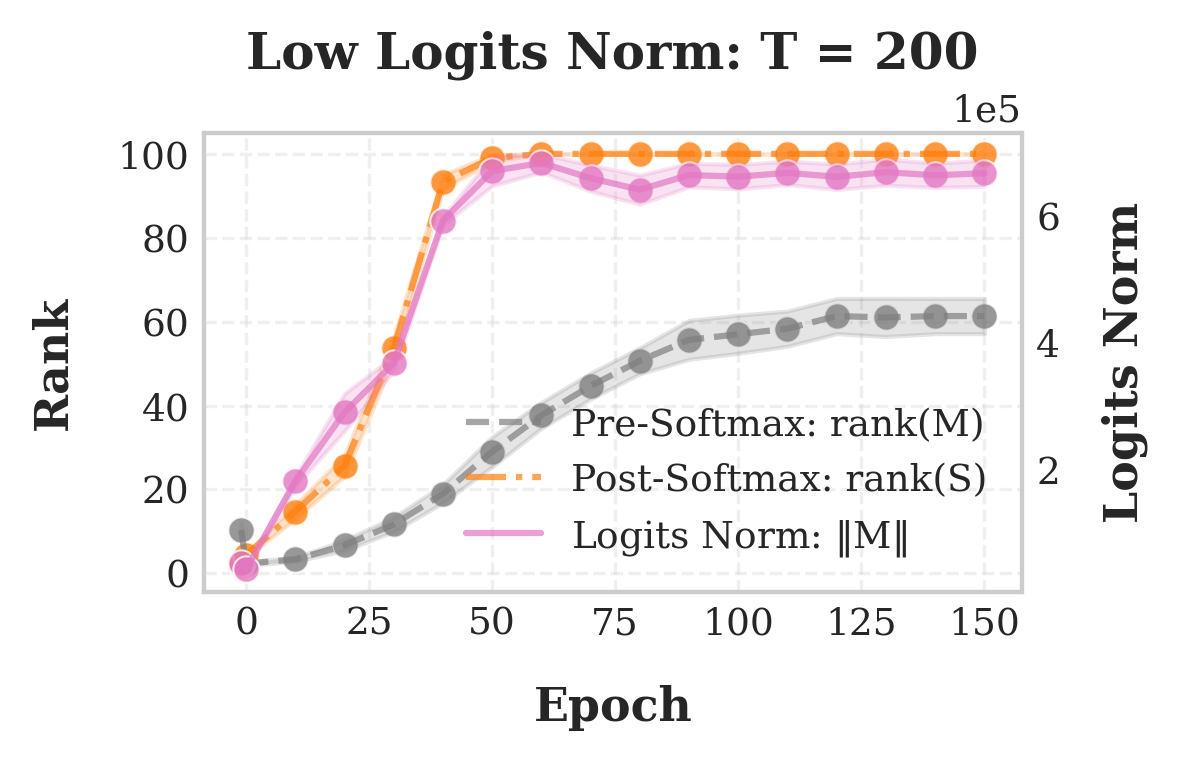}
    }
    \caption{\small{Plot presents the evolution of logits norm and its impact on pre and post \softmax rank leading to \textit{rank-deficit bias}. When training with high temperatures, the \textcolor{orange}{post-softmax rank growth} is triggered by \textcolor{appendix_pink}{the growth of the logits norm} not the \textcolor{gray}{pre-softmax rank}. Experiment: ResNet-34 trained on CIFAR-100.}}
\end{figure}

\begin{figure}[!h]
    \centering
    {
    \includegraphics[width=0.49\textwidth]{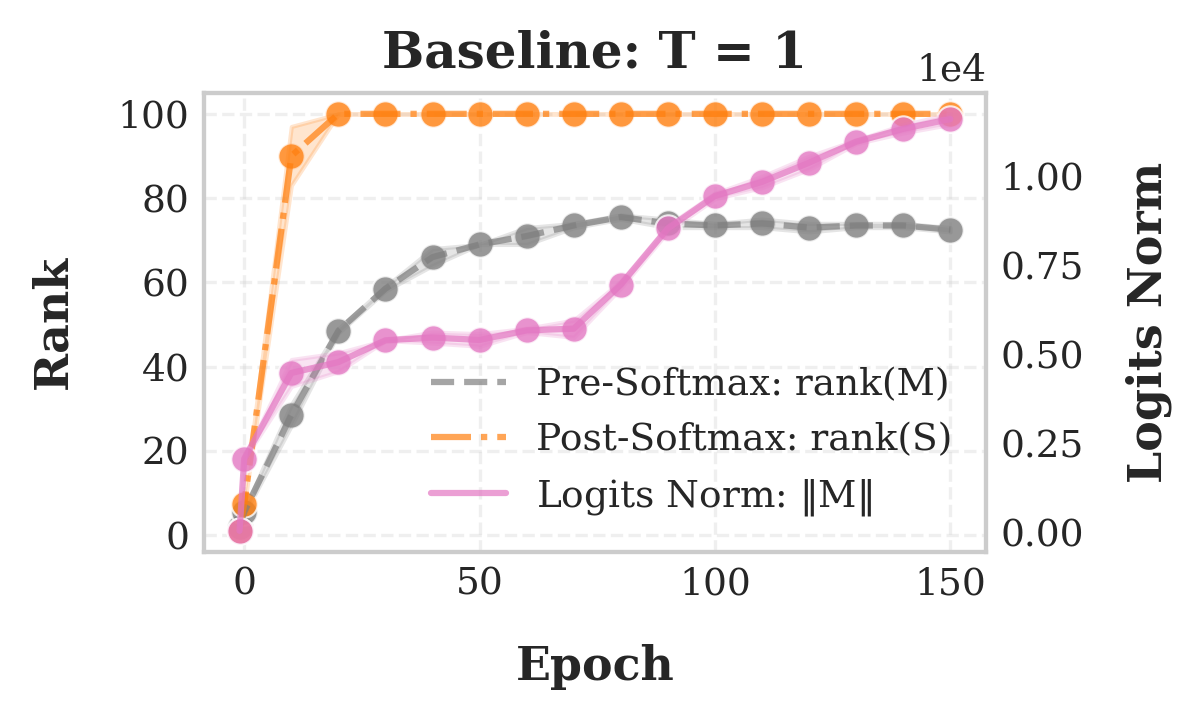}
    }
    {
    \includegraphics[width=0.49\textwidth]{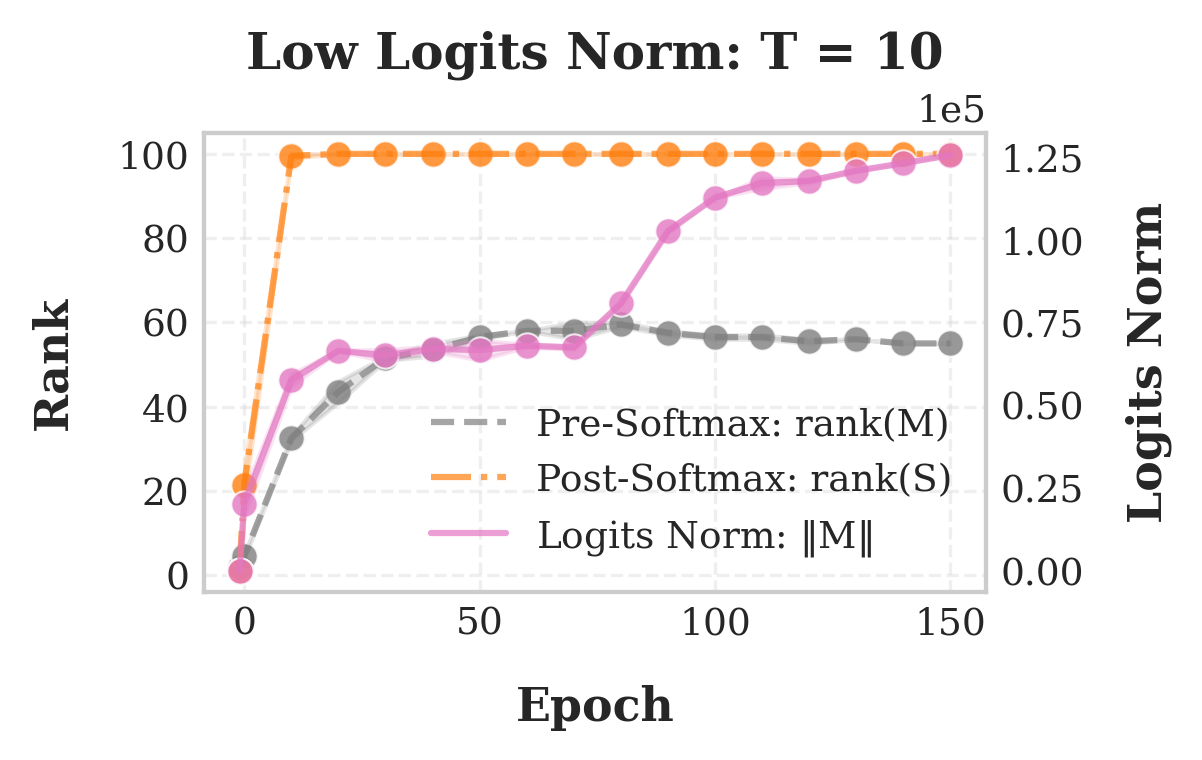}
    }
        {
    \includegraphics[width=0.49\textwidth]{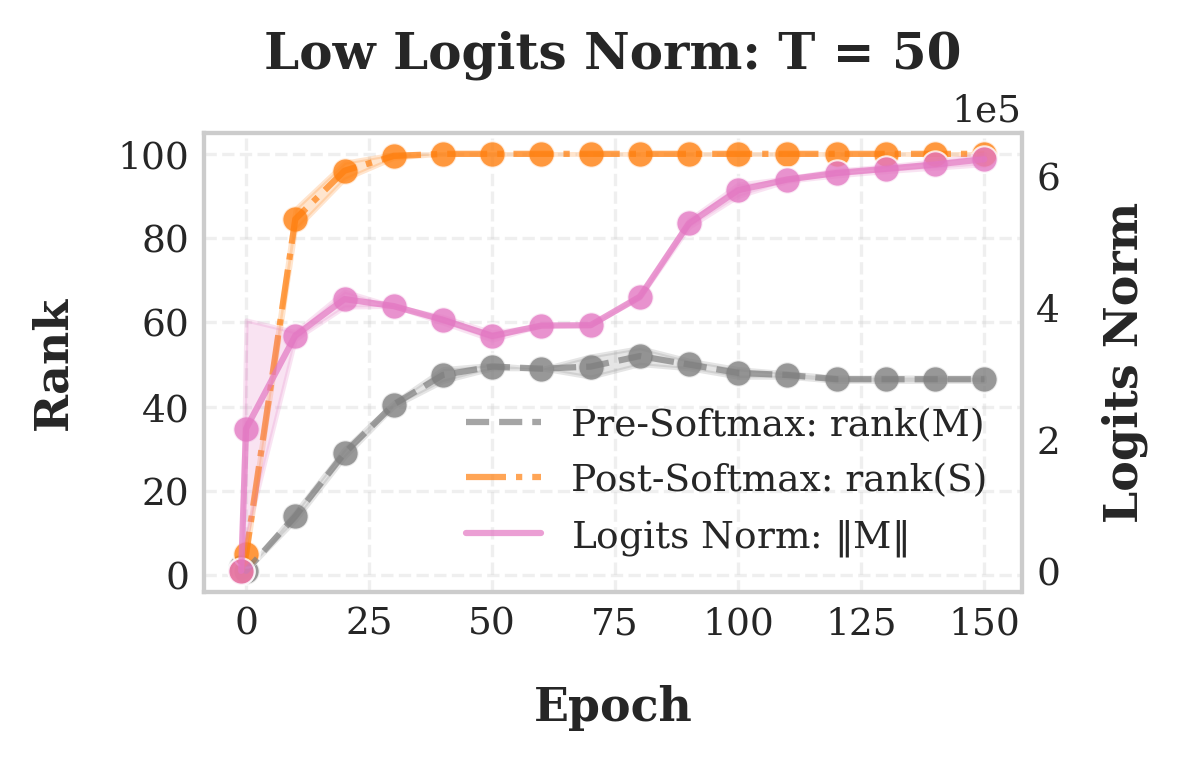}
    }
    {
    \includegraphics[width=0.49\textwidth]{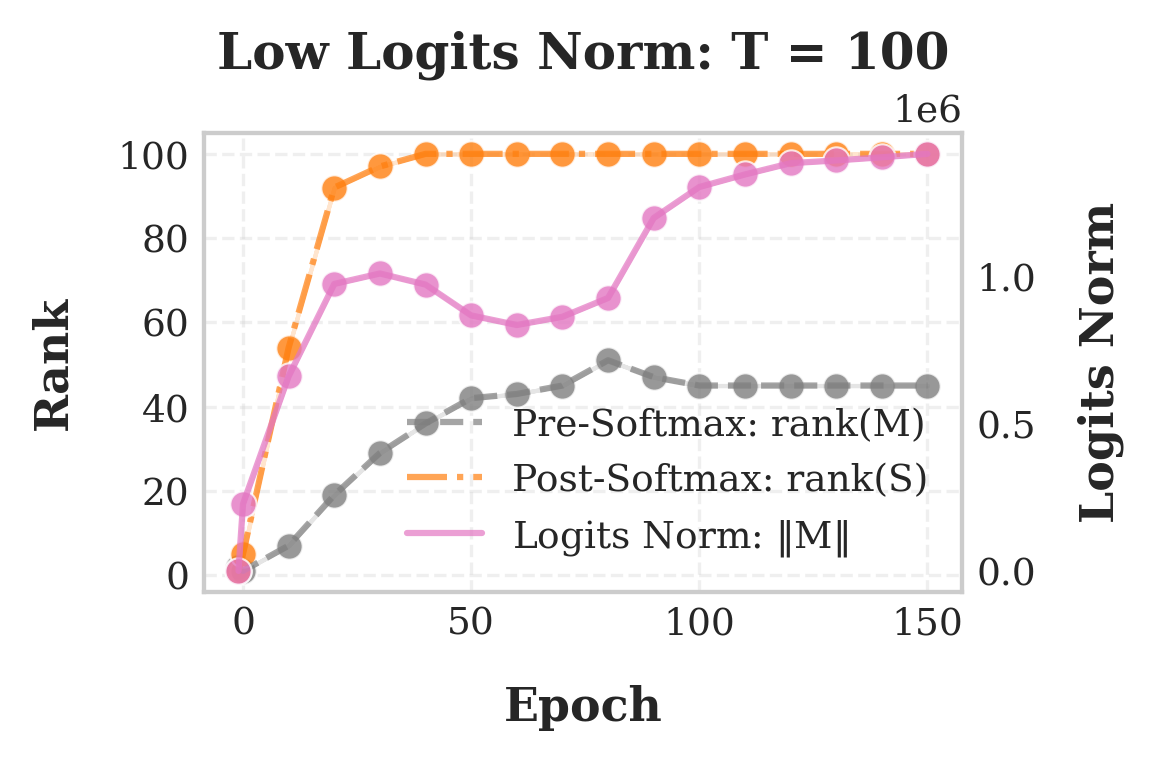}
    }
    \caption{\small{Plot presents the evolution of logits norm and its impact on pre and post \softmax rank leading to \textit{rank-deficit bias}. When training with high temperatures, the \textcolor{orange}{post-softmax rank growth} is triggered by \textcolor{appendix_pink}{the growth of the logits norm} not the \textcolor{gray}{pre-softmax rank}. Experiment: VGG-19 trained on CIFAR-100.}}
\end{figure}

\section{Neural Collapse}\label{app:neural_collapse}

Neural Collapse (NC) is a phenomenon observed in the terminal phase of training (TPT) of deep neural networks, where the learned representations and classifier weights exhibit a highly symmetric and simplified structure \cite{papyan2020neuralcollapse}. This behavior emerges when the model approaches zero training error, leading to the following four key conditions:

\begin{itemize}
    \item \textbf{NC1: Within-Class Feature Collapse}: The activations of samples from the same class converge to their class mean. Formally, for class $c$ with $N_c$ samples, the features $h_i^c$ in the penultimate layer satisfy $\|h_i^c - \mu_c\| \to 0$, where $\mu_c$ is the class mean.
    
    \item \textbf{NC2: Equiangular Class Means}: The class means $\mu_c$ become maximally separated and equiangular, forming a simplex equiangular tight frame (ETF). For $C$ classes, this means $\langle \mu_i, \mu_j \rangle = -\frac{1}{C-1}$ for $i \neq j$.
    
    \item \textbf{NC3: Classifier-Weight Alignment}: The classifier weights $w_c$ align with the class means, satisfying $w_c \propto \mu_c$.
    
    \item \textbf{NC4: Simplified Decision Boundaries}: The resulting decision boundaries become symmetric and equidistant, with the classifier behaving like a nearest class mean (NCM) decision rule.
\end{itemize}

\subsection{Intermediate Neural Collapse}
Recent work has extended the study of neural collapse beyond the final layer. \cite{parker2023neuralcollapseintermediatehidden,rangamani23intermediate} demonstrated that some degree of NC emerges in intermediate hidden layers across various architectures, with the degree of collapse typically increasing with layer depth. Key observations include:

\begin{itemize}
    \item Intra-class variance reduction occurs primarily in shallower layers
    \item Angular separation between class means increases consistently with depth
    \item Simple datasets may only require shallow layers to achieve collapse, while complex ones need the full network
\end{itemize}

However, as noted in~\cite{rangamani23intermediate}, not all architectures exhibit intermediate collapse uniformly. The conditions under which intermediate NC occurs remain an open question, requiring further study into architectural choices, optimization dynamics, and dataset characteristics.

\subsection{Assumptions and Limitations}
The neural collapse phenomenon comes with several important assumptions and limitations:

\paragraph{Training Phase Requirements}
NC typically emerges during the Terminal Phase of Training (TPT), when models achieve 100\% training accuracy. Achieving TPT often requires specific hyperparameter choices (e.g., extended training, particular learning rates) that may not correspond to those maximizing validation performance~\cite{papyan2020neuralcollapse}

\paragraph{Class Balance}
The original NC formulation assumed balanced classes, where each class has equal representation. Recent work~\cite{hong2024neural} has extended this to imbalanced settings, showing that:
\begin{itemize}
    \item Feature collapse still occurs within classes
    \item Class mean angles become dependent on class sizes
    \item Minority collapse (multiple minority classes collapsing to a single point) can occur below a certain sample size threshold
\end{itemize}

\paragraph{Data Augmentation}
Strong data augmentations often prevent models from reaching TPT, as they effectively create a harder optimization problem. This makes NC less likely to emerge in heavily augmented training regimes~\cite{papyan2020neuralcollapse}.

\paragraph{Regularization Effects}
Weight decay appears necessary for NC to emerge clearly. The unconstrained feature model (UFM) with cross-entropy loss and spherical constraints has been shown to provably lead to NC solutions, highlighting the role of implicit regularization \cite{ji2022an}.

\subsection{Neural Collapse finds solution of rank C-1}

\begin{proposition}\label{prop:nc_rank}
    Network $f_{\theta}$ exhibiting Neural Collapse on a dataset with $C$ classes has a solution rank equal to $C-1$. 
\end{proposition}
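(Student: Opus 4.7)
The plan is to exploit the three geometric conditions NC1, NC2, NC3 to factor $\Mm = \Wm^L\Am^L$ explicitly and then read off its rank. First, I would set up compact notation: let $\Um = [\mu_1,\dots,\mu_C]\in\R^{d\times C}$ collect the class means and let $\Pm\in\{0,1\}^{C\times n}$ be the class-assignment matrix with $P_{c,i}=1$ iff sample $i$ belongs to class $c$. Condition NC1 (within-class feature collapse) forces every column of $\Am^L$ to coincide with its class mean, so $\Am^L = \Um\Pm$. Condition NC3 (classifier--mean alignment, with a common scalar $\beta>0$ across classes) gives $\Wm^L = \beta\,\Um^\top$. Substituting yields $\Mm = \beta\,\Um^\top\Um\,\Pm$.

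Next, I would pin down $\rank(\Um^\top\Um)$ using NC2. Writing $\Gm := \Um^\top\Um$, the simplex ETF structure forces $G_{cc}=\|\mu\|^2$ and $G_{cc'} = -\|\mu\|^2/(C-1)$ for $c\neq c'$, so $\Gm = \tfrac{\|\mu\|^2}{C-1}(C\,\Im - \Jm)$, where $\Jm$ is the $C\times C$ all-ones matrix. A one-line spectral calculation for matrices of the form $\alpha \Im + \gamma \Jm$ gives a single zero eigenvalue with eigenvector $\mathbf{1}$ (reflecting $\sum_c \mu_c = 0$, which is built into the ETF definition) and $C-1$ equal positive eigenvalues $C\|\mu\|^2/(C-1)$, hence $\rank(\Gm) = C-1$.

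The final step assembles the pieces. Assuming every class is represented in the training set, $\Pm$ has full row rank $C$ — its columns are the standard basis vectors of $\R^C$, each appearing with multiplicity equal to the class frequency — so the map $\x \mapsto \Pm\x$ is surjective onto $\R^C$. Therefore the column space of $\Gm\Pm$ coincides with that of $\Gm$, and I conclude $\rank(\Mm) = \rank(\beta\,\Gm\,\Pm) = \rank(\Gm) = C-1$, as claimed.

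The main obstacle I anticipate is not the algebra but pinning down the correct form of the NC hypotheses to invoke. Specifically, the argument requires NC1 as an exact (not merely asymptotic) collapse and NC3 with a single common scalar across classes, which matches the standard ETF-classifier / ETF-feature pairing described in Appendix~\ref{app:neural_collapse}. If NC3 is weakened to per-class scalars $\beta_c > 0$, the same conclusion still follows since $\operatorname{diag}(\beta_c)$ is invertible and preserves rank; a bias term in the classifier, if present, would shift $\Mm$ by at most a rank-one matrix and thus only perturb the conclusion to $\rank(\Mm)\in\{C-1,C\}$, which is a genuine caveat to state but does not affect the bias-free setting of the proposition.
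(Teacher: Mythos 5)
Your proof is correct and follows essentially the same route as the paper's: factor $\Mm$ via NC1 and NC3 into a scalar times the ETF Gram matrix of class means times the class-assignment matrix, deduce that the Gram matrix has rank $C-1$ from the simplex ETF structure, and use the full row rank of the assignment matrix to conclude $\rank(\Mm)=C-1$. The only differences are cosmetic — transposed notation, a more explicit spectral computation of the Gram matrix, and your added (valid) robustness remarks about per-class scalars and bias terms, which the paper does not discuss.
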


\begin{proof}
Under Neural Collapse (NC), we analyze the rank through three key properties:

\noindent\textbf{Step 1: Class Means Form ETF (NC1-2).} Let $\Km_C := [\mu_1, ..., \mu_C]^\top \in \mathbb{R}^{C \times d}$ be the class means matrix. By NC2, these means form a simplex equiangular tight frame (ETF) satisfying:
\begin{equation}
    \Km_C\Km_C^\top = \frac{C}{C-1}\mathbf{I}_C - \frac{1}{C-1}\mathbbm{1}\mathbbm{1}^\top
\end{equation}
This Gram matrix has rank $C-1$. Thus, $\text{rank}(\Km_C) = C-1$.

\noindent\textbf{Step 2: Classifier Alignment (NC3).} The classifier weights $\Wm = [w_1, ..., w_C]^\top$ satisfy $w_c \propto \mu_c$ from NC3. Therefore:
\begin{equation}
    \Wm = \alpha\Km_C \quad\text{for some } \alpha > 0
\end{equation}
This proportionality preserves the rank: $\text{rank}(\Wm) = \text{rank}(\Km_C) = C-1$.

\noindent\textbf{Step 3: Activation Matrix Structure.} Let $\Am \in \mathbb{R}^{d \times N}$ contain penultimate layer activations. By NC1, all examples collapse to their class means:
\begin{equation}
    \Am = \Km_C^\top \Sm
\end{equation}
where $\Sm \in \{0,1\}^{C \times N}$ is a binary selection matrix indicating class membership. Since $\Sm$ has full row rank for balanced classes, $\text{rank}(\Am) = \text{rank}(\Km_C) = C-1$.

\noindent\textbf{Step 4: Logit Matrix Decomposition.} The logits matrix $\Mm = \Wm\Am$ becomes:
\begin{align}
    \Mm &= \alpha\Km_C(\Km_C^\top\Sm) \\
        &= \alpha(\Km_C\Km_C^\top)\Sm
\end{align}
Using matrix rank properties:
\begin{equation}
    \text{rank}(\Mm) \leq \min\{\text{rank}(\Wm), \text{rank}(\Am)\} = C-1
\end{equation}
Since $\Km_C\Km_C^\top$ has rank $C-1$ and $\Sm$ has full row rank, the product maintains rank $C-1$. Thus, $\text{rank}(\Mm) = C-1$.
\end{proof}

\section{Theoretical analysis of softmax properties}\label{app:softmax_analysis}

This section aims to understand how applying softmax on a matrix $\Am$ column-wise changes the spectrum of the matrix and its rank. To this end, we apply the following theorems:

\begin{theorem}[Gershgorin Circle Theorem~\cite{gershgorin31}] \label{theorem:gershgorin}
Every eigenvalue of any real, symmetric matrix $\Km$ lies within at least one of the Gershgorin discs $D(k_{ii}, R_i)$, where $R_i = \Sigma_{j \neq i} |k_{ij}|$.
\end{theorem}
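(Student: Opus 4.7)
The approach is to pass to the Gram matrix $\Km := \Sm^\top \Sm$, whose eigenvalues are exactly the squared singular values of $\Sm$, and then use Gershgorin (Theorem~\ref{theorem:gershgorin}) to sandwich $\lambda_{\max}(\Km)$ and $\lambda_{\min}(\Km)$ in terms of $r$. The probability-vector hypothesis is what makes the Gershgorin discs tractable: nonnegativity of the entries of each $\s_j$ turns the row radii into a direct expression in the $\langle\s_i,\s_j\rangle$, and the simplex constraint pins down the diagonal entries of $\Km$ to a narrow interval.

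\textbf{Step 1: two elementary inequalities for probability vectors.} For any $\s\in\R^n_{\geq 0}$ with $\mathbf{1}^\top \s = 1$, H\"older gives $\|\s\|_2^2 \leq \|\s\|_1\|\s\|_\infty \leq 1$, and Cauchy--Schwarz gives $\|\s\|_2^2 \geq (\mathbf{1}^\top \s)^2/n = 1/n$. Hence each diagonal entry $k_{ii} = \|\s_i\|_2^2$ lies in $[1/n,1]$. Moreover, since $\s_i,\s_j\geq 0$, the off-diagonal entries $k_{ij} = \langle\s_i,\s_j\rangle$ are nonnegative, so the Gershgorin radius is simply $R_i = \sum_{j\neq i}\langle\s_i,\s_j\rangle \leq r$ by definition of $r$.

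\textbf{Step 2: apply Gershgorin.} Every eigenvalue of the symmetric PSD matrix $\Km$ lies in some disc $[k_{ii}-R_i,\,k_{ii}+R_i]$, so
\[
\lambda_{\max}(\Km) \;\leq\; \max_i(k_{ii}+R_i) \;\leq\; 1+r, \qquad \lambda_{\min}(\Km) \;\geq\; \min_i(k_{ii}-R_i) \;\geq\; \tfrac{1}{n}-r.
\]
Taking square roots yields $\sigma_1(\Sm) \leq \sqrt{1+r}$ and $\sigma_n(\Sm) \geq \sqrt{\max\{1/n-r,0\}}$, where the $\max\{\cdot,0\}$ handles the regime $r > 1/n$ in which Gershgorin only retrieves the trivial $\sigma_n \geq 0$. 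Subtracting gives the claimed upper bound, and $\sigma_1 \geq \sigma_n$ is just the ordering convention.

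\textbf{Step 3: tightness.} To certify that the inequality cannot be improved, I would exhibit a saturating family. Taking every column equal to the same standard basis vector, i.e.\ $\Sm = e_1 \mathbf{1}^\top$, yields $\Sm^\top\Sm = \mathbf{1}\mathbf{1}^\top$ with eigenvalues $n,0,\dots,0$, while simultaneously $k_{ii}=1$, $R_i=n-1$, and $r=n-1$. Then the bound reads $\sqrt{n}-\sqrt{\max\{1/n-(n-1),0\}}=\sqrt{n}$, which equals $\sigma_1(\Sm)-\sigma_n(\Sm)=\sqrt{n}-0$ exactly. I do not expect any serious obstacle here: the only subtlety is remembering to clip $1/n-r$ at zero, since otherwise the square root is undefined in the low-temperature / high-overlap regime where the columns are close to uniform; this is exactly why the statement writes $\max\{1/n-r,0\}$ rather than $1/n-r$.
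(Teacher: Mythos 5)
You have not proved the stated theorem. The statement you were asked to establish is the Gershgorin Circle Theorem itself: every eigenvalue of a real symmetric matrix $\Km$ lies in some disc $D(k_{ii},R_i)$ with $R_i=\sum_{j\neq i}|k_{ij}|$. Your proposal instead proves Proposition~\ref{proposition:B2_appendix} (the bound on $\sigma_1(\Sm)-\sigma_n(\Sm)$ for matrices with probability-vector columns), and in Step~2 it \emph{invokes} Theorem~\ref{theorem:gershgorin} as a black box to localize the eigenvalues of $\Sm^\top\Sm$. Relative to the assigned statement this is circular: the one claim that required an argument is exactly the one you assumed. What was wanted is the standard extremal-coordinate argument: for an eigenpair $\Km\x=\lambda\x$ with $\x\neq 0$, pick the index $i$ maximizing $|x_j|$ (so $x_i\neq 0$), read off the $i$-th row to get $(\lambda-k_{ii})x_i=\sum_{j\neq i}k_{ij}x_j$, divide by $x_i$, and apply the triangle inequality together with $|x_j|/|x_i|\leq 1$ to conclude $|\lambda-k_{ii}|\leq R_i$. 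None of this appears in your write-up. (The paper itself offers no proof of Theorem~\ref{theorem:gershgorin}; it cites it as a classical result and only uses it downstream.)

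For what it is worth, read as a proof of Proposition~\ref{proposition:B2_appendix} your argument coincides with the paper's almost line for line: the same passage to $\Gm=\Sm^\top\Sm$, the same $[1/n,1]$ sandwich on the diagonal entries, the same identification of the Gershgorin radii with $\sum_{j\neq i}\langle\s_i,\s_j\rangle$ via nonnegativity, the same clipping at zero, and the same tightness witnesses ($\Sm=\Im_n$ for the lower bound and all columns equal to one standard basis vector for the upper bound). So the mathematics you wrote is sound; it is simply an answer to a different question than the one posed.
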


\begin{proposition} \label{proposition:B2_appendix}
For any matrix $\Sm\in\R^{n\times n}$ with each column $\m_j\in\R^n$ as a probability vector. Then the gap between the largest $\sigma_1(\Sm)$ and the smallest singular value $\sigma_n(\Sm)$ is bounded by the following tight inequality:
\[
0 \leq \sigma_1(\Sm) - \sigma_n(\Sm) \leq \sqrt{1+r}-\sqrt{\max\left\{\frac{1}{n}-r,0\right\}}
\]
where $r:= \max_i \sum_{j\neq i}\langle \s_i,\s_j\rangle$.
\end{proposition}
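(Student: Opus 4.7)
The plan is to work with $\Km := \Sm^\top \Sm$ and apply the Gershgorin Circle Theorem (Theorem~\ref{theorem:gershgorin}) to bound its extreme eigenvalues, then take square roots to pass back to the singular values of $\Sm$. The entries of $\Km$ have a very friendly form: $k_{ii} = \|\s_i\|_2^2$ and $k_{ij} = \langle \s_i,\s_j\rangle$ for $i\neq j$, and because each column $\s_j$ is a probability vector, the off-diagonal entries are already nonnegative, so the Gershgorin radii are $R_i = \sum_{j\neq i}\langle \s_i,\s_j\rangle$.

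First, I would establish the pointwise bounds $\tfrac{1}{n} \leq k_{ii} \leq 1$. The upper inequality follows from $s_{ki}^2 \leq s_{ki}$ for $s_{ki}\in[0,1]$, summed over $k$; the lower inequality is the Cauchy--Schwarz inequality $1 = (\sum_k s_{ki})^2 \leq n\sum_k s_{ki}^2$. Combined with $R_i \leq r$ by definition of $r$, Gershgorin gives
\[
\lambda_{\max}(\Km) \;\leq\; \max_i\bigl(k_{ii}+R_i\bigr) \;\leq\; 1+r,
\qquad
\lambda_{\min}(\Km) \;\geq\; \min_i\bigl(k_{ii}-R_i\bigr) \;\geq\; \tfrac{1}{n}-r.
\]
Since $\Km$ is positive semidefinite, $\lambda_{\min}(\Km) \geq 0$ automatically, so we may replace the lower bound by $\max\{\tfrac{1}{n}-r,0\}$.

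Taking square roots and using $\sigma_1(\Sm) = \sqrt{\lambda_{\max}(\Km)}$, $\sigma_n(\Sm) = \sqrt{\lambda_{\min}(\Km)}$ then yields
\[
\sigma_1(\Sm) - \sigma_n(\Sm) \;\leq\; \sqrt{1+r} - \sqrt{\max\{\tfrac{1}{n}-r,0\}},
\]
while $\sigma_1(\Sm) \geq \sigma_n(\Sm)$ is trivial. For tightness, I would exhibit the degenerate case where every column equals the same standard basis vector $\e_1$: then $r = n-1$, $\sigma_1(\Sm) = \sqrt{n}$, $\sigma_n(\Sm) = 0$, the lower term in the bound vanishes, and the upper term equals $\sqrt{n}$, matching the gap.

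The only genuinely delicate step is the lower eigenvalue bound: a naive application of Gershgorin can give a negative estimate, and one has to combine it with the PSD property of $\Sm^\top\Sm$ to truncate at zero. Everything else (the column-stochastic bounds on $k_{ii}$, the structure of the radii, and the tightness example) is routine once $\Km = \Sm^\top\Sm$ is adopted as the object of study.
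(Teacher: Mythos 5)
Your proposal is correct and follows essentially the same route as the paper: form the Gram matrix $\Sm^\top\Sm$, bound its diagonal entries between $\tfrac{1}{n}$ and $1$, apply the Gershgorin Circle Theorem with radii $R_i=\sum_{j\neq i}\langle\s_i,\s_j\rangle\leq r$, truncate the lower eigenvalue bound at zero using positive semidefiniteness, and take square roots; your tightness example (all columns equal to a standard basis vector) coincides with the paper's witness for the upper bound. The only cosmetic difference is that you omit the paper's check that the left-hand inequality $0\leq\sigma_1(\Sm)-\sigma_n(\Sm)$ is attained (by $\Sm=\mathbf{I}_n$), which is immediate.
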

\begin{proof}
    Consider the matrix $\Gm:=\Sm^\top\Sm$. By Jensen's inequality and Cauchy-Schwartz inequality, we have 
    \[
    \frac{1}{n}
    =
    n\left(\frac{\sum_{i=1}^n\Sm_{ij}}{n}\right)^2
    \leq 
    \Gm_{jj} = \sum_{i=1}^n \Sm_{ij}^2 
    \leq 
    \left(\sum_{i=1}^n \Sm_{ij}\right)^2 = 1.
    \]
    By definition, $R_j:= \sum_{k\neq j}|\langle \s_k,\s_j\rangle|=\sum_{k\neq j}\langle \s_k,\s_j\rangle= \sum_{k\neq j}\Gm_{kj}$ is the radius of the $j$-th Gershgorin disc. Hence, Theorem \ref{theorem:gershgorin}, the eigenvalues of $\Gm$ lie on $\cup_j [\Gm_{jj}-R_j,\Gm_{jj}+R_j]\subset \left[\max\left\{\frac{1}{n}-r,0\right\},1+r\right]$, since $\Gm$ is positive semidefinite.
    Note that $\sigma_1(\Sm)=\sqrt{\lambda_{\max}(\Gm)}$ and $\sigma_n(\Sm)=\sqrt{\lambda_{\min}(\Gm)}$, we obtain the bound. The bound is tight by considering $\Sm=\mathbf{I}_n$ for the lower bound and  $\Sm=(\mathbf{1},0,...,0)$ for the upper bound.
\end{proof}

Intuitively, Proposition \ref{proposition:B2_appendix} shows that reducing the inner products between columns of $\Sm = \mathtt{softmax}(\Am)$ decreases the gap between its largest and smallest singular values. In particular, the numerical rank of the post-softmax matrix $\Mm$ can remain high even if the pre-softmax matrix $\Am$ is of low rank. Figure \ref{fig:inner_product_and_rank} illustrates that the smallest spectral gap occurs at an intermediate temperature leading to the highest post-softmax rank.

\begin{figure}[!h]
    \centering
    \includegraphics[width=\linewidth]{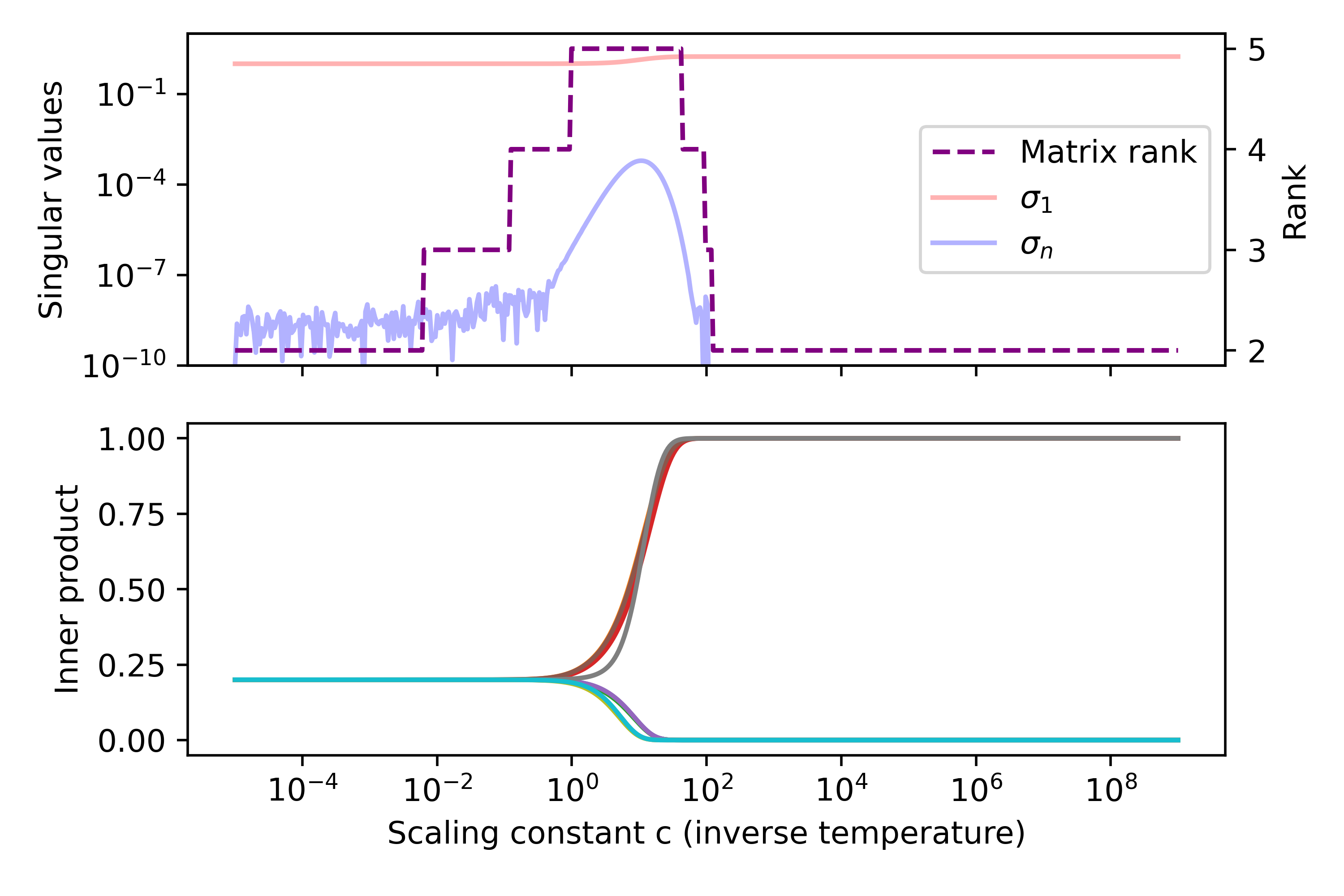}
    \caption{Applying \softmax column-wise on rank-1 matrix $\Am \in \mathbb{R}^{5\times5}$ with various temperatures decreases the inner products between the columns with different indices of the highest element and increases the inner products between the columns with the same indices (bottom). The temperature at which the bifurcation of inner products happens aligns with the temperature that shrinks the gap between the top and bottom singular values of the matrix and increases its rank (top). }
    \label{fig:inner_product_and_rank}
\end{figure}

\section{Theoretical limits of \textit{rank-deficit bias}}\label{app:rank2_to_full_rank}

Given the importance of the \textit{rank-deficit bias} discussed in the main paper and its great variability across different architectures and datasets, we ask ourselves the question: What is the theoretical limit of \textit{rank-deficit bias}? The following proposition shows that in theory, there exists a logits matrix $\Bm \in \mathbb{R}^{n \times n}$ of rank 2 that after applying a \softmax recovers full rank n. In other words, neural networks could successfully solve a classification task with $n$ classes by finding solutions for rank 2. This result shows that the solutions found by training current models with high temperatures are still far from the theoretical limit.

\begin{proposition}\label{prop:full_rank}
Let $n \geq 2$. For almost every random matrix $\Am \in \mathbb{R}^{n \times 2}$ with i.i.d.\ $\mathcal{N}(0, 1)$ entries, there exists a scaling parameter $c > 0$ such that $\Bm \coloneqq \tilde{\Am}\tilde{\Am}^\top$ satisfies:
\[
\rank(\Bm) = 2 \quad \text{and} \quad \rank(\operatorname{softmax}(c\Bm)) = n,
\]

where $\tilde{\Am}$ is the row-normalized version of $\Am$, and $\operatorname{softmax}$ denotes row-wise softmax.
\end{proposition}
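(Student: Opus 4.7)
}

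My plan is to exploit the fact that as the scaling $c$ grows, a row-wise softmax applied to $c\Bm$ becomes arbitrarily close to the identity, provided the diagonal of $\Bm$ strictly dominates each row. I will then conclude by a continuity/determinant argument. Concretely, I would proceed in four steps.

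\textbf{Step 1 (setup and almost-sure genericity).} I would first note that since $\Am \in \R^{n\times 2}$ has i.i.d.\ $\mathcal{N}(0,1)$ entries, almost surely no row of $\Am$ is zero, so the row-normalization $\tilde\Am$ is well defined, and the rows $\tilde\a_i \in \R^2$ are unit vectors. Moreover, since the joint distribution of any two rows is absolutely continuous on $\R^2 \times \R^2$, almost surely no two rows $\tilde\a_i,\tilde\a_j$ ($i\neq j$) coincide. Writing $\tilde\a_i = (\cos\theta_i,\sin\theta_i)$ for some $\theta_i$, this gives $\theta_i \not\equiv \theta_j \pmod{2\pi}$ a.s., and in particular $\rank(\tilde\Am) = 2$, whence $\rank(\Bm) = \rank(\tilde\Am\tilde\Am^\top) = 2$.

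\textbf{Step 2 (diagonal dominance of $\Bm$).} The entries $\Bm_{ij} = \langle \tilde\a_i,\tilde\a_j\rangle = \cos(\theta_i - \theta_j)$ satisfy $\Bm_{ii}=1$ and $\Bm_{ij} < 1$ strictly for $i\neq j$ (by Step 1). Let $\delta := 1 - \max_{i\neq j} \Bm_{ij} > 0$.

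\textbf{Step 3 (softmax converges to the identity).} Applying the usual translation invariance of softmax, I subtract $1$ (the diagonal value) inside each row before exponentiating:
\[
[\operatorname{softmax}(c\Bm)]_{ij} \;=\; \frac{e^{c(\Bm_{ij}-1)}}{\sum_{k}e^{c(\Bm_{ik}-1)}}.
\]
The diagonal summand equals $1$ and every off-diagonal summand is bounded by $e^{-c\delta} \to 0$ as $c\to\infty$. Therefore $[\operatorname{softmax}(c\Bm)]_{ii} \to 1$ and $[\operatorname{softmax}(c\Bm)]_{ij} \to 0$ for $j\neq i$; that is, $\operatorname{softmax}(c\Bm) \to \Im_n$ entrywise (and hence in any matrix norm) as $c\to\infty$.

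\textbf{Step 4 (continuity of the determinant).} Since $\det$ is a continuous (polynomial) function of matrix entries and $\det(\Im_n)=1$, there exists $c_0 > 0$ such that $\det(\operatorname{softmax}(c\Bm)) > 0$ for all $c \geq c_0$. Picking any such $c$ gives $\rank(\operatorname{softmax}(c\Bm)) = n$, which completes the argument.

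\textbf{Main obstacle.} None of the steps requires a delicate estimate; the only place where the proof could fail is Step 2, which needs the rows of $\tilde\Am$ to be pairwise distinct (so that the diagonal strictly dominates). This is the one point where the almost-sure Gaussian assumption is genuinely used. Once strict diagonal dominance is in hand, the exponential amplification inside softmax does the rest, and the result is essentially independent of the specific value of $n$ beyond $n\ge 2$.
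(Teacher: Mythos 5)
Your proposal is correct and follows essentially the same route as the paper's proof: almost-sure strict domination of the diagonal of $\Bm$, convergence of $\operatorname{softmax}(c\Bm)$ to $\Im_n$ as $c\to\infty$, and full rank for large finite $c$. Your Step 4 (explicit continuity-of-determinant argument) is in fact slightly more careful than the paper's, which merely asserts the existence of such a $c_0$.
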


\begin{proof}
\textbf{Step 1: Construction of $\Bm$ with rank 2.}\\
Let $\Am \in \mathbb{R}^{n \times 2}$ have i.i.d. standard normal entries. Define its row-normalized version:
\[
\tilde{\Am}_{i,:} \coloneqq \frac{\Am_{i,:}}{\|\Am_{i,:}\|_2} \quad \text{for } i = 1,\dots,n.
\]
Let $\Bm \coloneqq \tilde{\Am}\tilde{\Am}^\top \in \mathbb{R}^{n \times n}$. Since $\tilde{\Am}$ has rank at most 2, $\rank(\Bm) \leq 2$. Moreover, with probability 1, $\Am$ has full column rank, which implies $\rank(\tilde{\Am}) = 2$ and consequently $\rank(\Bm) = 2$.

\medskip
\textbf{Step 2: Distinctness of off-diagonal entries.}\\
For $i \neq j$, the entries satisfy:
\[
b_{ij} = \langle \tilde{\a}_i, \tilde{\a}_j \rangle = \cos\theta_{ij},
\]
where $\theta_{ij}$ is the angle between $\tilde{\a}_i$ and $\tilde{\a}_j$. Since the rows of $\Am$ are i.i.d. Gaussian vectors, $\tilde{\a}_i$ and $\tilde{\a}_j$ are independent and uniformly distributed on the unit circle in $\mathbb{R}^2$. The probability that $\tilde{\a}_i = \pm\tilde{\a}_j$ is zero, hence $b_{ij} \neq \pm1$ almost surely for all $i \neq j$.

\medskip
\textbf{Step 3: Behavior under softmax as $c \to \infty$.}\\
The row-wise softmax is defined by:
\[
[\operatorname{softmax}(c\Bm)]_{ij} = \frac{e^{c b_{ij}}}{\sum_{k=1}^n e^{c b_{ik}}}.
\]
For each row $i$, as $c \to \infty$:
\begin{itemize}
\item The diagonal term $e^{c b_{ii}} = e^{c}$ dominates (since $b_{ii} = 1$)
\item All off-diagonal terms $e^{c b_{ij}} \to 0$ (since $b_{ij} < 1$)
\end{itemize}
Thus:
\[
\lim_{c \to \infty} \operatorname{softmax}(c\Bm) = \mathbf{I}_n.
\]

\medskip
\textbf{Step 4: Existence of finite $c$ achieving full rank.}\\
Since $\operatorname{softmax}(c\Bm) \to \mathbf{I}_n$ as $c \to \infty$, there exists some finite $c_0 > 0$ such that $\operatorname{softmax}(c\Bm)$ has rank $n$ for all $c \geq c_0$.

\medskip
\textbf{Step 5: Rank of identity matrix.}\\

Since $\lim_{c \to \infty} \operatorname{softmax}(c\Bm) = \mathbf{I}_n,$ for all $c \geq c_0$, we have 

\[
\rank(\operatorname{softmax}(c\Bm)) = \rank(\mathbf{I}_n) = n.
\]

\end{proof}

\section{Data orthogonality experiments}\label{app:NC5}

\begin{figure}[!h]
    \centering
    {
    \includegraphics[width=0.49\textwidth]{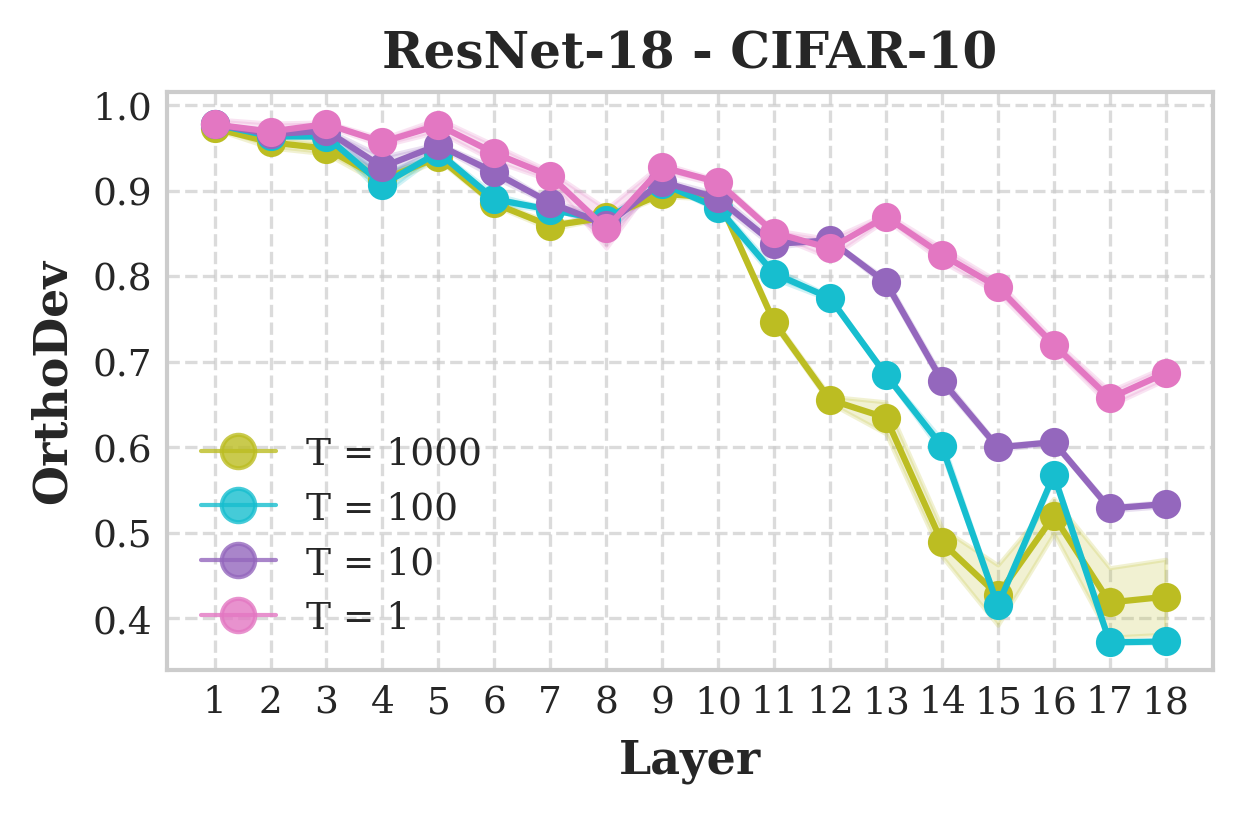}
    }
    {
    \includegraphics[width=0.49\textwidth]{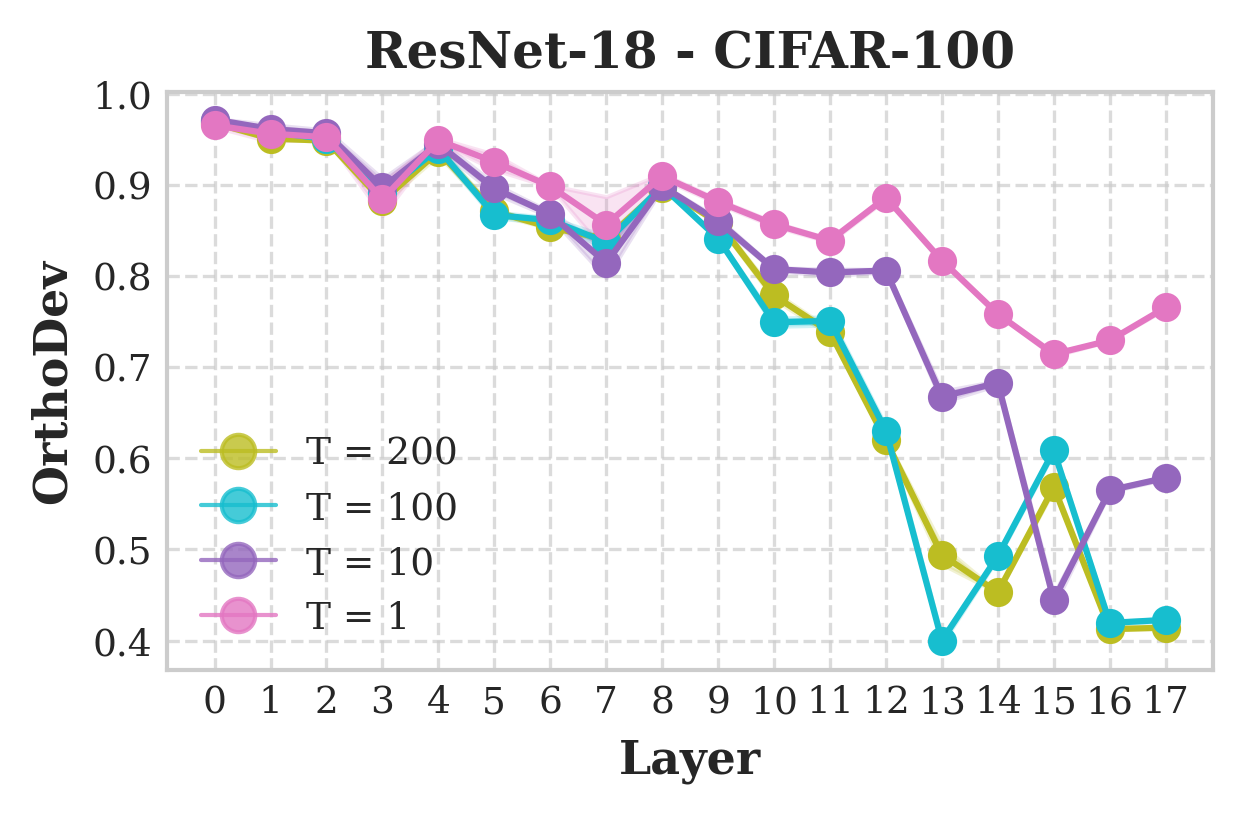}
    }
    {
    \includegraphics[width=0.49\textwidth]
    {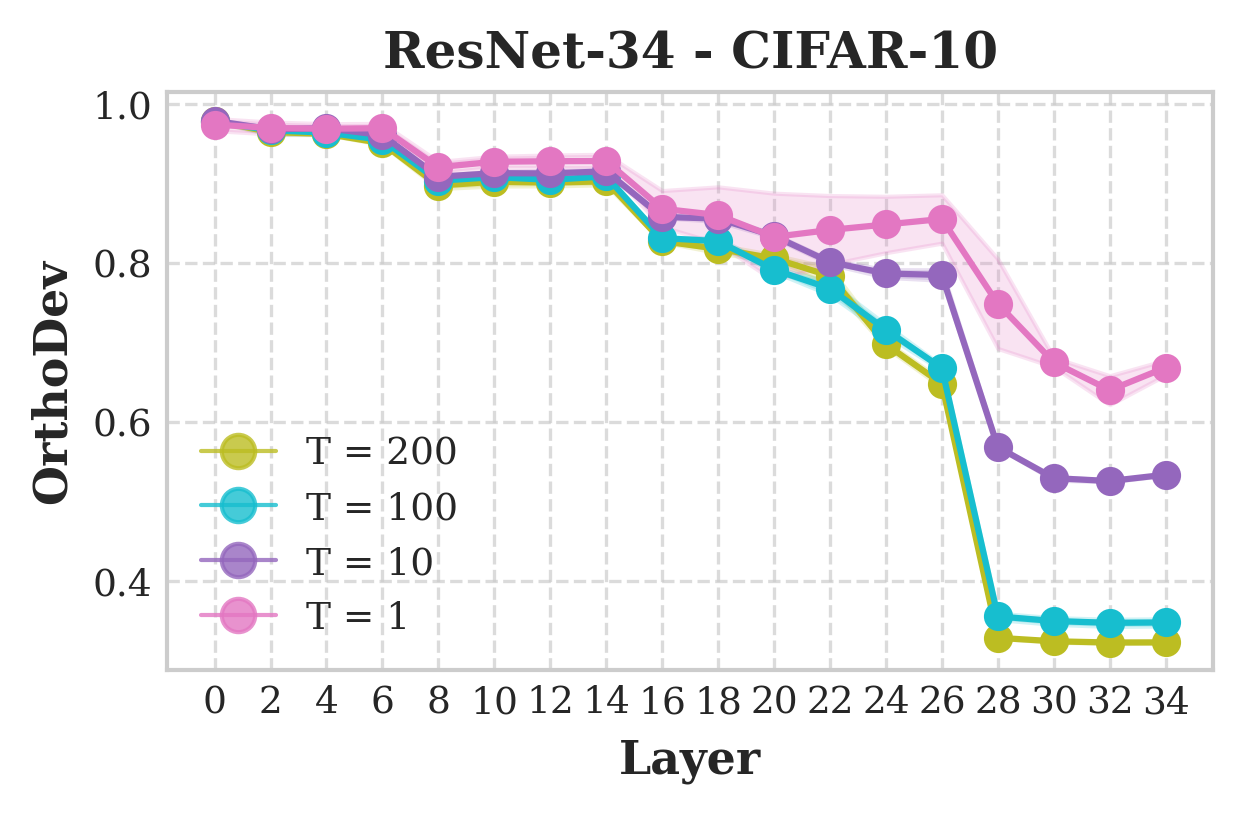}
    }
    {
    \includegraphics[width=0.49\textwidth]
    {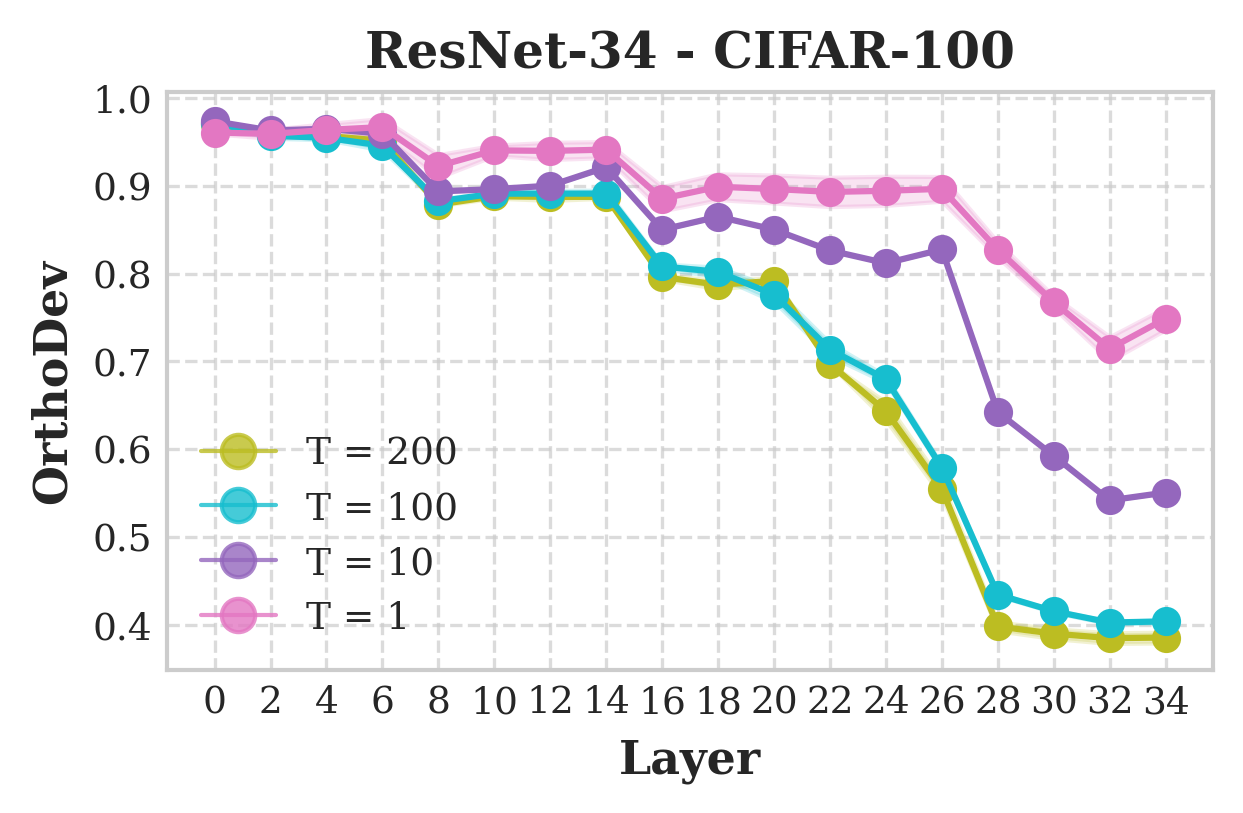}
    }
    \caption{Evolution of \texttt{OrthoDev} across layers for different temperatures. Consistently, networks trained with high temperature achieve low \texttt{OrthoDev} values.}
    \label{fig:orthodev_vs_temp}
\end{figure}

\newpage
\section{Generalization is at odds with detection}\label{app:ood_vs_generalization}

To examine the tension between out-of-distribution (OOD) generalization and OOD detection, we conduct a dedicated experiment in which multiple neural networks are evaluated on both tasks. OOD detection is assessed using the \texttt{NECO} method, while OOD generalization is measured via a linear probe applied to the penultimate layer representations. As shown in Figure~\ref{fig:generalization_vs_detection}, there is a clear linear relationship between OOD accuracy (y-axis) and OOD detection performance (x-axis), where improved generalization correlates with degraded detection. This relationship is governed by the \texttt{OrthoDev} metric, which, as demonstrated in Figure~\ref{fig:orthodev_vs_temp}, can be directly controlled by adjusting the temperature.

\begin{figure}[!h]
    \centering
    {
    \includegraphics[width=\textwidth]{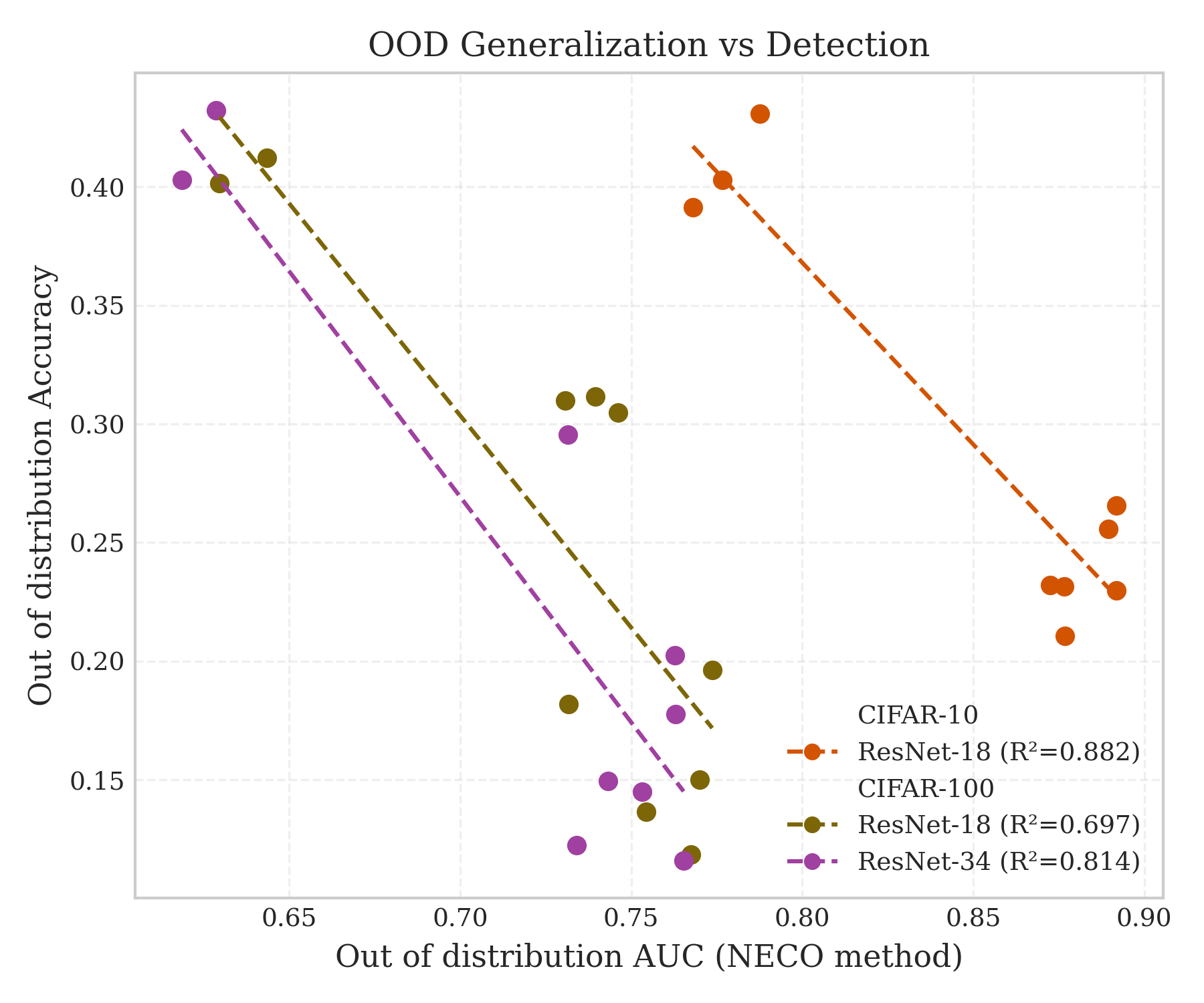}
    }
    \caption{OOD generalization is negatively correlated with OOD detection.}
    \label{fig:generalization_vs_detection}
\end{figure}

\section{Limitations of the work}

While our findings offer novel insights into the role of \softmax in representation learning, this work has several limitations that present valuable opportunities for future research.

\paragraph{Scope of Analysis:} Our study primarily focuses on supervised image classification, leaving open questions about how \emph{rank-deficit bias} manifests in other architectures and learning paradigms. For instance, investigating \softmax in intermediate Transformer layers could yield new insights into training stability and efficiency, particularly in NLP, where these layers share structural similarities with classification tasks but remain understudied in this context~\cite{wu2024linguisticcollapseneuralcollapse}. Similarly, self-supervised learning methods—many of which rely on \softmax-based losses (e.g., contrastive learning)—could benefit from our framework.

\paragraph{Theoretical Understanding:} A deeper theoretical analysis of the dynamics governing inner product evolution during training remains an important next step. While our empirical results highlight key trends, formalizing these observations could strengthen the theoretical foundations of our findings.

\paragraph{Hyperparameter Optimization:} Except for the ImageNet experiments, high-temperature models were trained using the same hyperparameters as baseline networks. While this controlled approach isolates the effect of temperature scaling, we anticipate that further performance gains could be achieved through systematic hyperparameter tuning tailored to high-temperature regimes. Exploring this direction remains an open research question.

\paragraph{Temperature Scheduling:} Our analysis examines fixed-temperature training, but dynamic temperature scheduling—potentially combining the benefits of both standard and high-temperature regimes—warrants further investigation as a means of optimizing model performance.

\section{Supplementary plots}
\begin{figure}[!h]
    \centering
    {
    \includegraphics[width=\textwidth]{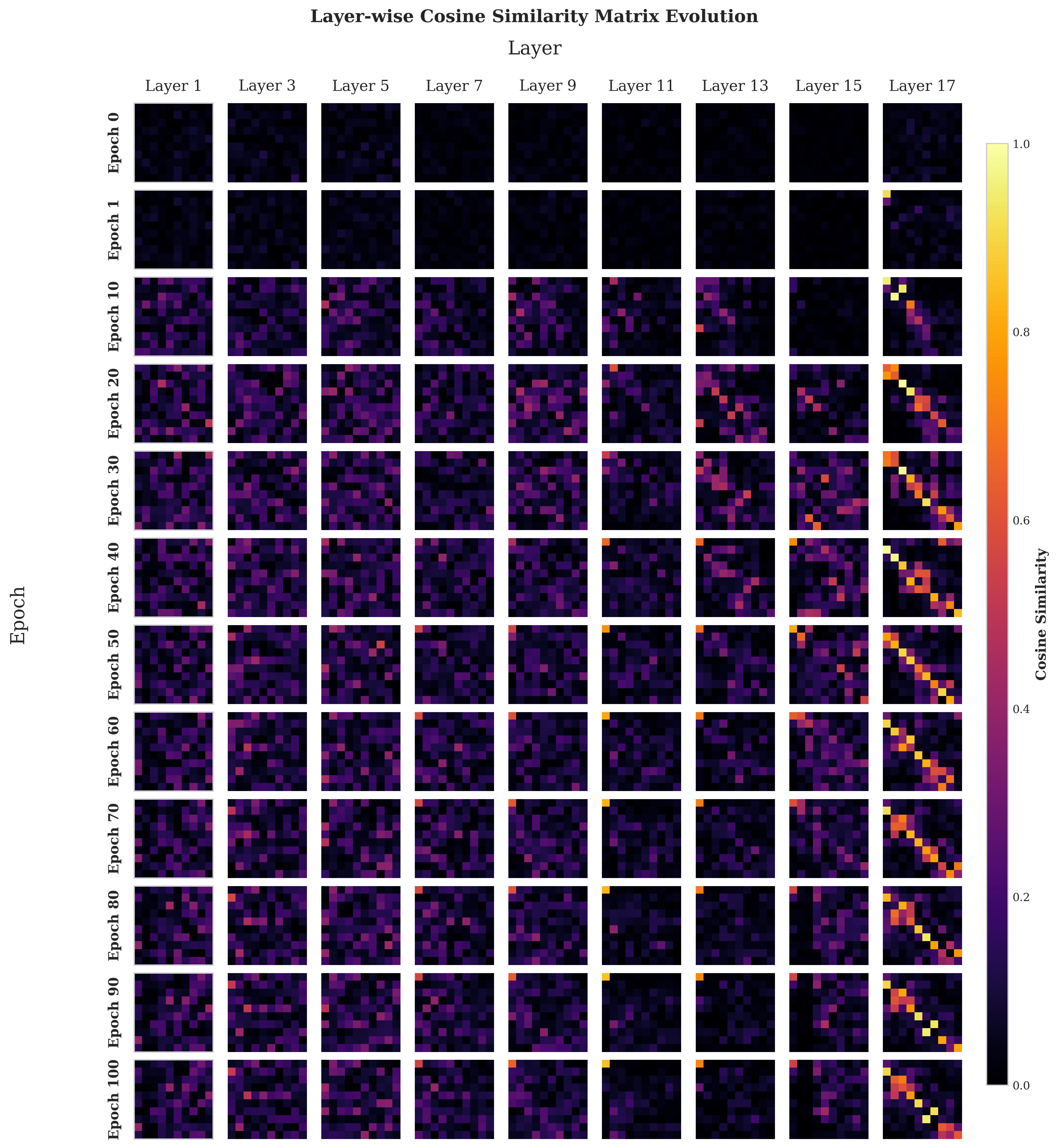}
    }
    \caption{\textbf{The alignment (cosine similarity) between singular vectors of weights and representations} of ResNet-18 trained on CIFAR-100 with high-temperature.} 
    \label{fig:finegrained_alignment_resnet_high}
\end{figure}

\begin{figure}[!h]
    \centering
    {
    \includegraphics[width=\textwidth]{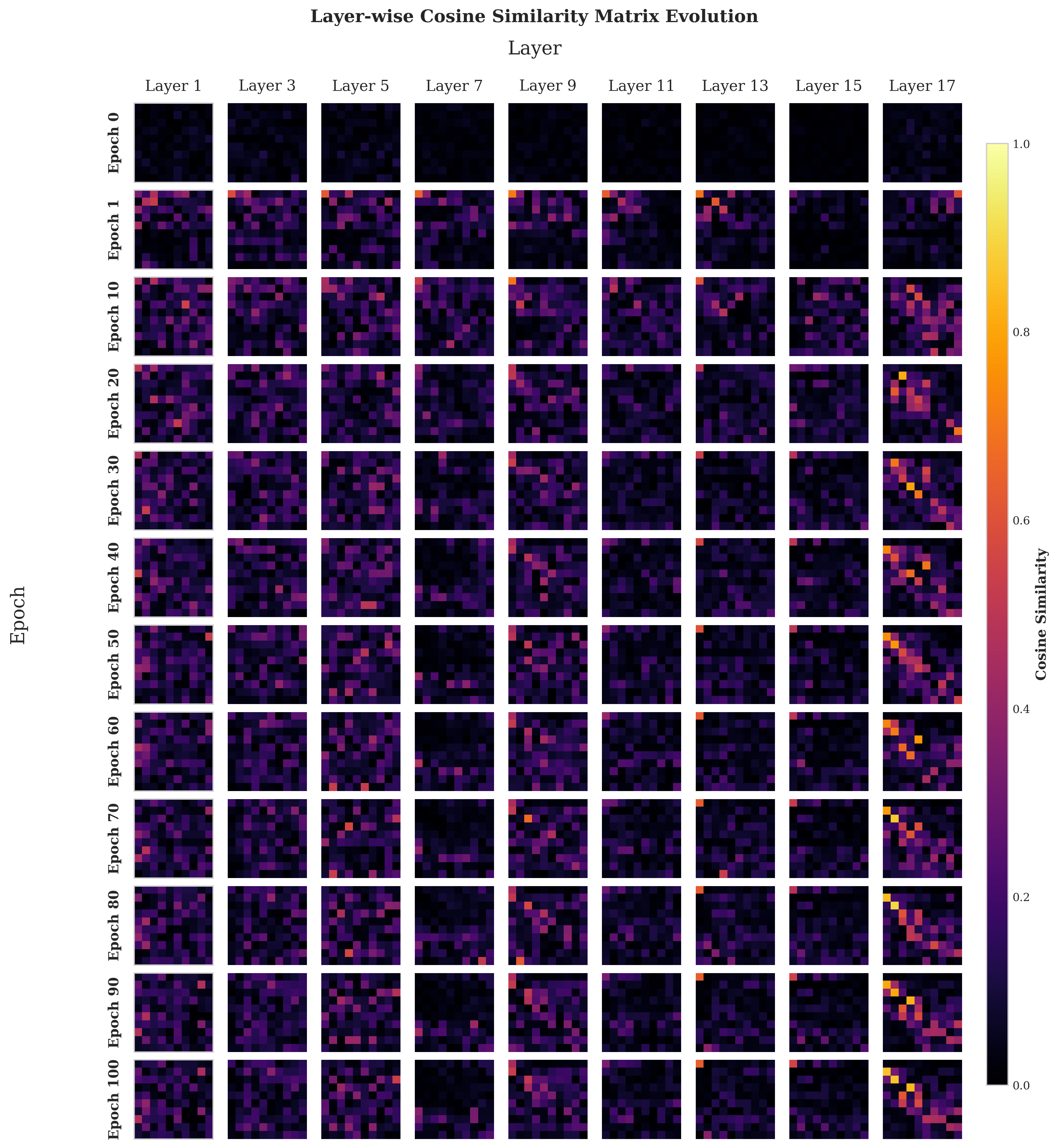}
    }
    \caption{\textbf{The alignment (cosine similarity) between singular vectors of weights and representations} of ResNet-18 trained on CIFAR-100 with temperature 1.} 
    \label{fig:finegrained_alignment_resnet_baseline}
\end{figure}

\end{document}